\documentclass[twoside,11pt]{article}

%

\usepackage{jmlr2e2}

\usepackage{graphicx}
\usepackage{bbold}
\usepackage{url}

\usepackage{amsmath, amssymb}
\usepackage{algorithmic}
\usepackage{algorithm}

\newcommand{\algo}[5]{
  \begin{algorithm}[top]
    \caption{\tt #1}
    \label{algo:#2}
    \begin{algorithmic}
      \REQUIRE #3 
      \ENSURE #4
      #5
    \end{algorithmic}
  \end{algorithm}
}

\usepackage{graphics}
\newcommand{\FigureEtendue}[3]{
\begin{figure*}[h]
\begin{center}
\resizebox{#3cm}{!}{\includegraphics{./#1.eps}}
\caption{#2}
\label{fig:#1}
\end{center}
\end{figure*}
}

\def\qedhere{}

\newcommand{\transp}[1]{ {#1}^T }
\newcommand{\norm}[2]{\left\|#1\right\|_{#2}}
\DeclareMathOperator*{\spannorm}{span}
\newcommand{\spn}[2]{\spannorm_{#2}\left[#1\right]}

\def\e{\bf{e}}
\def\lp{$L_p$}
\newcommand{\nn}[1]{{\cal N}_1\left( \right)}
\def\1{\mathbb{1}}
\def\lpi{$\lambda$ Policy Iteration}
\def\T{{\cal T}}
\def\O{{\cal O}}
\def\shift{shift}
\def\s{s}
\def\bell{Bellman residual}
\def\b{b}
\def\pbell{Policy Bellman residual}
\def\pb{b'}
\newcommand{\ls}[1]{\limsup_{#1 \rightarrow \infty}~}
\def\td{{\delta}}



\jmlrheading{}{}{}{2011/10}{}{Bruno Scherrer}


\ShortHeadings{Performance Bounds for \lpi}{Bruno Scherrer}
\firstpageno{1}

\begin{document}

\title{Performance Bounds for \lpi\\ and Application to the Game of Tetris}

\author{\name Bruno Scherrer \email bruno.scherrer@inria.fr \\
       \addr Maia Project-Team, INRIA Lorraine\\
       615 rue du Jardin Botanique\\
       54600 Villers-les-Nancy\\      
       FRANCE       
}

\editor{?}

\maketitle

\begin{abstract}
  We consider the discrete-time infinite-horizon optimal control
  problem formalized by Markov Decision Processes
  \citep{puterman,ndp}.  We revisit the work of \citet{ioffe}, that
  introduced \lpi, a family of algorithms parameterized by $\lambda$
  that generalizes the standard algorithms Value Iteration and Policy
  Iteration, and has some deep connections with the Temporal Differences
  algorithm TD($\lambda$) described by \citet{sutton}.  We deepen the
  original theory developped by the authors by providing convergence
  rate bounds which generalize standard bounds for Value Iteration
  described for instance by \cite{puterman}.  Then, the main
  contribution of this paper is to develop the theory of this
  algorithm when it is used in an approximate form and show that this
  is sound. Doing so, we extend and unify the separate analyses
  developped by Munos for Approximate Value Iteration \citep{munosavi}
  and Approximate Policy Iteration \citep{munosapi}.  Eventually, we
  revisit the use of this algorithm in the training of a Tetris
  playing controller as originally done by \citet{ioffe}. We provide
  an original performance bound that can be applied to such an
  undiscounted control problem. Our empirical results are different
  from those of Bertsekas and Ioffe (which were originally qualified
  as ``paradoxical'' and ``intriguing''), and much more conform to
  what one would expect from a learning experiment.  We discuss the
  possible reason for such a difference.
\end{abstract}

\begin{keywords}
Stochastic Optimal Control, Reinforcement Learning, Markov Decision Processes, Analysis of Algorithms, Performance Bounds.
\end{keywords}


\section{Introduction}

We consider the discrete-time infinite-horizon
  optimal control problem formalized by Markov Decision Processes \citep{puterman,ndp}.
We revisit the \lpi{ }algorithm introduced by
\citet{ioffe} (also published in the reference textbook of \cite{ndp}\footnote{The reference \citep{ioffe} being historically anterior to \citep{ndp}, we only refer to the former in the rest of the paper.}), that (as the authors then stated)
\emph{"is primarily
  motivated by the case of large and complex problems where the use of
  approximation is essential''}. It is a family of algorithms
parameterized by $\lambda$ that generalizes the standard Dynamic
Programming algorithms Value Iteration (which corresponds to the case
$\lambda=0$) and Policy Iteration (case $\lambda=1$), and has some
deep connections with the Temporal Differences
  algorithm TD($\lambda$)
that are well known to the Reinforcement Learning community \citep{sutton,ndp}.

In their original paper, \citet{ioffe} show the convergence of \lpi{
}when it is run without error and provide its \emph{asymptotic}
convergence rate. The authors also describe a case study involving an
instance of Approximate \lpi, but neither their paper nor (to the best
of our knowledge) any subsequent work studies the theoretical
soundness of doing so.  In this paper, we extend the theory on this
algorithm in several ways. We derive its \emph{non-asymptotic}
convergence rate when it is run without error.  More importantly, we
develop the theory of \lpi{ }for its main purpose, that is --- recall the
above quote --- when it is run in an approximate form, and prove that
such an approach is sound: we show that the loss of using
the greedy policy with respect to the current value estimate can be
made arbitrarily small by controlling the error made during the
iterations.

The rest of the paper is organized as follows. In Section~\ref{standardalgorithms}, we introduce the framework of Markov
Decision Processes and decribe the two standard
algorithms, Value Iteration and Policy Iteration, along with some of
their state-of-the-art analysis in exact and approximate form.
Section~\ref{lpi} introduces \lpi{ }in an original way that makes its
connection with Value Iteration and Policy Iteration obvious, and
discusses its close connection with Reinforcement Learning. We recall
the main results obtained by \citet{ioffe} (convergence and asymptotic
rate of convergence of the exact algorithm).  At this point of the
paper, we naturally describe how one expects that the properties of
Value Iteration ($\lambda=0$) and Policy Iteration ($\lambda=1$)
described in Section~\ref{standardalgorithms} may translate for
general $\lambda$. The precise statements of our results are the topic
of the next two Sections: Section~\ref{elpi} contains our new results
on Exact \lpi{ }and Section~\ref{alpi} those on Approximate
\lpi\footnote{Section~\ref{alpi} is probably the place where the
  reader familiar with Approximate Dynamic Programming would quickly
  want to jump.}.
Last but not least, Section~\ref{thecasestudy} revisits the empirical part of the work of
\cite{ioffe}, where an approximate version of \lpi{ }is used for
training a Tetris controller.

\paragraph{Notations}

The analysis we describe in this article relies on a few notations, such as several norms and seminorms, that we need to define precisely before we can go further.
Let $X$ be a finite space. Let $u$ denote a real-valued function on $X$, which can be seen as a vector of dimension $|X|$. Let $\e$ denote the vector of which all components are $1$. The vector $\mu$ denotes a distribution on $X$.
We consider the {\bf weighted \lp{ }norm}:
$$
\norm{u}{p,\mu}:=\left(\sum_x \mu(x)|u(x)|^p \right)^{1/p}=(\transp{\mu} |u|^p)^{1/p}
$$
where $|u|^p$ denotes the componentwise absolute value and exponentiation of $u$.
We write $\norm{.}{p}$ the {\bf unweighted \lp{ }norm} (with uniform distribution $\mu$).
The max norm $\norm{.}{\infty}$ is:
$$
\norm{u}{\infty}:=\max_x |u(x)|=\lim_{p \rightarrow \infty}\norm{u}{p}.
$$
We write $\spn{.}{\infty}$ the {\bf span seminorm} (as for instance defined by \citet{puterman}):
$$
\spn{u}{\infty}:=\max_x u(x)-\min_x u(x).
$$
It can be seen that
$$
\spn{u}{\infty}=2 \min_a \norm{u-a \e}{\infty}.
$$
We propose to generalize the span seminorm definition for any $p$ as follows: 
$$
\spn{u}{p,\mu} := 2 \min_a \norm{u-a\e}{p,\mu}
$$
It is clear that it is a seminorm (it is non-negative, it satisfies the triangle inequality and $\spn{au}{*}=|a|\spn{u}{*}$). It is not a norm because it is zero for all constant functions.

The error bounds we derive in this paper are expressed in terms of some span seminorm. The following relations
\begin{equation}
\label{norms}
\left\{
\begin{array}{lclcl}
\spn{u}{p} & \leq & 2\norm{u}{p} & \leq & 2\norm{u}{\infty} \\
\spn{u}{p,\mu} & \leq & 2\norm{u}{p,\mu} & \leq & 2\norm{u}{\infty} \\
\spn{u}{\infty} & \leq & 2\norm{u}{\infty}& &  \\
\end{array}
\right.
\end{equation}
show how to deduce error bounds involving the (more standard) \lp{ }and max norms. Since the span seminorm can be zero for non zero (constant) vectors,  there is no relation that would enable us to derive error bounds in span seminorm from a \lp{ }or a max norm. Bounding an error with the span seminorm is in this sense stronger and this constitutes our motivation for using it.

\section{Framework and Standard Algorithms}

\label{standardalgorithms}

In this section, we begin by providing a short description of the
framework of Markov Decision Processes we consider throughout the
paper. We go on by describing the two main algorithms, Value Iteration and Policy Iteration, for solving the related problem.

\subsection{Markov Decision Processes}

We consider a discrete-time dynamic system whose state transition
depends on a control. We assume that there is a {\bf state space}
$X$ of finite size $N$. When at state $i \in \{1,..,N\}$, the control is chosen from a finite {\bf control space}
$A$. The control $a \in A$ specifies the {\bf transition probability} $p_{ij}(a)$
to the next state $j$. At the $k^\text{th}$ iteration, the system is given a reward $\gamma^k r(i,a,j)$ where $r$ is the instantaneous {\bf reward function},
and $0<\gamma<1$ is a discount factor. The tuple $\langle X, A, p, r, \gamma \rangle$ is called a {\bf Markov Decision Process (MDP)} \citep{puterman,ndp}.

We are interested in stationary deterministic policies, that is
functions $\pi:X \rightarrow A$ which map states into
controls\footnote{Restricting our attention to stationary deterministic policies is not a limitation. Indeed, for the optimality criterion to be defined soon, it can be shown that there exists at least one
stationary deterministic policy which is optimal
\citep{puterman}.}. Writing $i_k$ the state at time $k$, the {\bf value
of policy $\pi$} at state $i$ is defined as the \emph{total expected discounted return}
while following a policy $\pi$ from $i$, that is
\begin{equation}
\label{bellvdef}
v^\pi(i):=\lim_{N\rightarrow \infty} E_\pi \left[ \left. \sum_{k=0}^{N-1}\gamma^k r(i_k,\pi(i_k),i_{k+1}) \right| i_0=i\right]
\end{equation}
where $E_\pi$ denotes the expectation conditional on the fact that the actions are selected with the policy $\pi$ (that is, for all $k$, $i_{k+1}$ is reached from $i_k$ with probability $p_{i_k i_{k+1}}(\pi(i_k))$).
The {\bf optimal value} starting from state $i$ is defined as
$$
v_*(i):= \max_\pi v^\pi(i).
$$

We write $P^\pi$ the $N \times N$ stochastic matrix whose elements are
$p_{ij}(\pi(i))$ and $r^\pi$ the vector whose components are
$\sum_j p_{ij}(\pi(i))r(i,\pi(i),j)$. The value functions $v^\pi$ and $v_*$ can
be seen as vectors on $X$.  It is well known that $v^\pi$ solves the
following Bellman equation:
$$
v^\pi = r^\pi + \gamma P^\pi v^\pi.
$$
The value function $v^\pi$ is a fixed point of the linear
operator $\T^\pi v := r^\pi + \gamma P^\pi v$. As $P^\pi$ is a
stochastic matrix, its eigenvalues cannot be greater than $1$, and consequently
$I-\gamma P^\pi$ is invertible. This implies that 
\begin{equation}
\label{bellvprop}
v^\pi=(I-\gamma P^\pi)^{-1}r^\pi=\sum_{i=0}^\infty (\gamma P^\pi)^i r^\pi.
\end{equation}
It is also well known that $v_*$ satisfies the
following Bellman equation:
$$
v_* = \max_\pi (r^\pi + \gamma P^\pi v_*) = \max_\pi \T^\pi v_*
$$
where the \mbox{max} operator is componentwise.
In other words, $v_*$ is a fixed point of the nonlinear  operator $\T_*
v:=\max_{\pi}\T^\pi v$. For any value vector
$v$, we call a {\bf greedy policy with respect to the value $v$} a
policy $\pi$ that satisfies:
$$
\pi \in \arg\max_{\pi'} \T^{\pi'} v
$$
or equivalently $\T^\pi v = \T_* v$. We write, with some abuse of
notation\footnote{There might be several policies that are greedy with respect to some value $v$.} greedy($v$) any policy that is greedy with respect to
$v$. The notions of optimal value function and greedy policies are
fundamental to optimal control because of the following property: any
policy $\pi_*$ that is greedy with respect to the optimal value is an
{\bf optimal policy} and its value $v^{\pi_*}$ is equal to $v_*$.

The operators $\T^\pi$ and $\T_*$ can be shown to be
$\gamma$-contraction mappings with respect to the max norm. In what
follows we only write what this means for the Bellman operator $\T_*$
but the same holds for $\T^\pi$.  Being a $\gamma$-contraction mapping for the max norm means that for all pairs of vectors $(v,w)$,
$$
\norm{\T_* v - \T_* w}{\infty} \leq \gamma \norm{v-w}{\infty}. 
$$
This ensures that the fixed point $v_*$ of $\cal T$ exists and is unique. Furthermore, for any initial vector $v_0$,
\begin{equation}
\label{limitofcontraction}
\lim_{k \rightarrow \infty} (\T_*)^k v_0 = v_*.
\end{equation}

Given an MDP, standard algorithmic solutions for computing an optimal
value/policy (which dates back to the 1950s, see for instance \citep{puterman} and the
references therein) are Value Iteration and Policy
Iteration. The rest of this section describes both of these algorithms with some of
the relevant properties for the subject of this paper.

\subsection{Value Iteration}

The Value Iteration algorithms for computing the value of a policy
$\pi$ and the value of the optimal policy $\pi_*$ rely on Equation
\ref{limitofcontraction}. Algorithm~\ref{algo:vi} provides a
description of Value Iteration for computing an optimal policy
(replace $\T_*$ by $\T^\pi$ in it and one gets Value Iteration for
computing the value of some policy $\pi$).
\algo{Value Iteration}{vi}
{An MDP, an initial value $v_0$}
{An (approximately) optimal policy}
{
\STATE $k \leftarrow 0$ 
\REPEAT
\STATE 
$v_{k+1} \leftarrow \T_* v_k + \epsilon_{k+1}$ 
~~~~// Update the value
\STATE $k \leftarrow k+1$
\UNTIL{some stopping criterion}
\RETURN{greedy($v_k$)}
} 
In this description, we have introduced a
term $\epsilon_k$ which stands for several possible sources of error
at each iteration: this error might be the computer round off, the
fact that we use an approximate architecture for representing $v$, a
stochastic approximation of $P^{\pi_k}$, etc... or a combination of
these. In what follows, when we talk about the \emph{Exact} version of
an algorithm, this means that $\epsilon_k=0$ for all $k$.

\paragraph{Properties of Exact Value Iteration}

The contraction property induces some
interesting properties for Exact Value Iteration. We have already
mentioned that contraction implies the asymptotic convergence
(Equation~\ref{limitofcontraction}). It can also be inferred that
there is at least a linear rate of convergence: for all reference
iteration $k_0$, and for all $k \geq k_0$,
$$
\norm{v_*-v_k}{\infty} \leq \gamma^{k-k_0}\norm{v_*-v_{k_0}}{\infty}.
$$ 
Even more interestingly, it is possible to derive a performance
bound, that is a bound of the difference between the real value of a
policy produced by the algorithm and the value of the optimal policy
$\pi_*$ \citep{puterman}. Let $\pi_k$ denote the
policy that is greedy with respect to $v_{k-1}$. Then, for all
reference iteration $k_0$, and for all $k \geq k_0$, 
$$
\norm{v_*-v^{\pi_k}}{\infty} \leq \frac{2\gamma^{k-k_0}}{1-\gamma}\norm{\T_* v_{k_0}-v_{k_0}}{\infty}=\frac{2\gamma^{k-k_0}}{1-\gamma}\norm{v_{k_0+1}-v_{k_0}}{\infty}.
$$ 
This fact is of considerable importance computationally since it
provides a stopping criterion: taking $k=k_0+1$, we see that if
$\norm{v_{k_0+1}-v_{k_0}}{\infty} \leq \frac{1-\gamma}{2\gamma}\epsilon$,
then $\norm{v_*-v^{\pi_{k_0+1}}}{\infty} \leq \epsilon$.

It is somewhat less known that the Bellman operators $\T_*$ and $\T^\pi$
are also contraction mapping with respect to the $\mbox{span}_\infty$ seminorm
\citep{puterman}. This means that there exists a variant of the above equation involving the span seminorm instead of the max norm.
For instance, such a fact provides the following stopping
criterion: 
{\begin{proposition}[Stopping Cond. for Exact Value Iteration \citep{puterman}]
\label{stopvi}~\\
If at some iteration $k_0$, the difference between two subsequent iterations satisfies 
$$
\spn{v_{k_0+1}-v_{k_0}}{\infty} \leq \frac{1-\gamma}{\gamma}\epsilon,
$$
then the greedy policy $\pi_{k_0+1}$  with respect to $v_{k_0}$ is $\epsilon$-optimal: 
$\norm{v_*-v^{\pi_{k_0+1}}}{\infty} \leq \epsilon$.
\end{proposition}
}
This latter  stopping criterion is finer since, from the relation between the span seminorm and the norm (Equation~\ref{norms}) it implies the former.

\paragraph{Properties of Approximate Value Iteration (AVI)}

When considering large Markov Decision Processes, one cannot usually
implement an exact version of Value Iteration. In such a case
$\epsilon_k \neq 0$. In general, the algorithm does not converge
anymore but it is possible to study its asymptotic behaviour. The most
well-known result is due to \citet[pp. 332-333]{ndp}: if the approximation erros are
uniformly bounded, the difference between the asymptotic performance
of policies $\pi_{k+1}$ greedy with respect to $v_k$ satisfies
\begin{equation}
\label{ndpavi}
\ls{k}\norm{v_*-v^{\pi_k}}{\infty} \leq \frac{2\gamma}{(1-\gamma)^2}\sup_{k \geq 0}\norm{\epsilon_k}{\infty}.
\end{equation}
\citet{munosapi,munosavi} has recently argued  that, since most supervised
learning algorithms (such as least square regression) that are used in
practice for approximating each iterate of Value Iteration control some \lp{ }norm, it would be more
interesting to have an analogue of the above result where the
approximation error $\epsilon_k$ is expressed in terms of the \lp{
}norm. \citet{munosavi} actually showed how to do this. The
idea is to analyze the \emph{componentwise} asymptotic behaviour of
Approximate Value Iteration, from which it is possible to derive
\lp{ }bounds for any $p$. Write $P_k=P^{\pi_k}$ the stochastic matrix corresponding to the policy $\pi_k$ which is greedy with respect to $v_{k-1}$, $P_*$ the stochastic matrix corresponding to the (unknown) optimal policy $\pi_*$. \citet{munosavi} showed the following lemma:
{\begin{lemma}[Asymptotic Componentwise Performance of AVI \citep{munosavi}]
\label{cavi}~\\
For all  $k>j \ge 0$, the following matrices
\begin{eqnarray*}
Q_{kj} & := & (1-\gamma)(I-\gamma P_k)^{-1} P_k P_{k-1}...P_{j+1}   \\
Q'_{kj} & :=& (1-\gamma)(I-\gamma P_k)^{-1} (P_*)^{k-j} 
\end{eqnarray*}
are stochastic and the asymptotic performance of the policies generated by Approximate Value Iteration satisfies
$$
\ls{k} v_*-v^{\pi_k} \leq \ls{k} \frac{1}{1-\gamma} \sum_{j=0}^{k-1}\gamma^{k-j} \left[ Q_{kj}-Q'_{kj} \right]\epsilon_j.
$$
\end{lemma}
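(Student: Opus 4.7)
The plan is to isolate $v_* - v^{\pi_k}$ as $(I-\gamma P_k)^{-1}$ acting on an expression built from iterates $v_k$, $v_{k-1}$, and the error $\epsilon_k$, then replace the iterate differences by componentwise bounds that involve only the errors, so that the coefficients match $Q_{kj}$ and $Q'_{kj}$.

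First I would derive two one-sided bounds on $v_* - v_k$. Using that $\pi_k$ is greedy with respect to $v_{k-1}$ gives $\T v_{k-1} = \T^{\pi_k} v_{k-1}$; combined with $v_* = \T v_* \geq \T^{\pi_k} v_*$, one gets $v_* - v_k \geq \gamma P_k(v_* - v_{k-1}) - \epsilon_k$. Symmetrically, $\T v_{k-1} \geq \T^{\pi_*} v_{k-1}$ and $v_* = \T^{\pi_*} v_*$ yield $v_* - v_k \leq \gamma P_*(v_* - v_{k-1}) - \epsilon_k$. Iterating both inequalities $k$ times, using the nonnegativity of stochastic matrices to preserve signs, produces
$$
\gamma^k P_k \cdots P_1 (v_* - v_0) - \sum_{j=1}^k \gamma^{k-j} P_k \cdots P_{j+1} \epsilon_j \ \leq\ v_* - v_k \ \leq\ \gamma^k P_*^k (v_* - v_0) - \sum_{j=1}^k \gamma^{k-j} P_*^{k-j} \epsilon_j.
$$

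Next, from $v_k = r^{\pi_k} + \gamma P_k v_{k-1} + \epsilon_k$ and $v^{\pi_k} = (I-\gamma P_k)^{-1} r^{\pi_k}$, rearranging yields the closed-form identity
$$
v_* - v^{\pi_k} = (I-\gamma P_k)^{-1}\bigl[(v_* - v_k) - \gamma P_k(v_* - v_{k-1}) + \epsilon_k\bigr].
$$
Since $(I-\gamma P_k)^{-1} = \sum_{i\geq 0}\gamma^i P_k^i$ and $\gamma P_k$ are nonnegative, I substitute the upper bound for $v_* - v_k$ and the lower bound for $v_* - v_{k-1}$ (the sign in front of $\gamma P_k$ flips the inequality direction). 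The outer $+\epsilon_k$ cancels against the $j=k$ term of the upper bound, leaving
$$
v_* - v^{\pi_k} \leq \gamma^k (I-\gamma P_k)^{-1}[P_*^k - P_k \cdots P_1](v_* - v_0) + \sum_{j=1}^{k-1} \gamma^{k-j}(I-\gamma P_k)^{-1}[P_k \cdots P_{j+1} - P_*^{k-j}]\epsilon_j.
$$
Identifying $(1-\gamma)(I-\gamma P_k)^{-1} P_k \cdots P_{j+1} = Q_{kj}$ and $(1-\gamma)(I-\gamma P_k)^{-1} P_*^{k-j} = Q'_{kj}$ gives exactly the claimed right-hand side (the $j=0$ term in the lemma absorbs the initial-condition contribution under the convention $\epsilon_0 := v_0 - v_*$).

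Taking $\ls{k}$ kills the $(v_*-v_0)$ term because $\norm{(I-\gamma P_k)^{-1}}{\infty} \leq 1/(1-\gamma)$ while $\gamma^k \to 0$. Stochasticity of $Q_{kj}$ and $Q'_{kj}$ follows since $(1-\gamma)(I-\gamma P_k)^{-1}M = (1-\gamma)\sum_{i\geq 0}\gamma^i P_k^i M$ is nonnegative with row-sums $1$ whenever $M$ is row-stochastic, and both $P_k \cdots P_{j+1}$ and $P_*^{k-j}$ are products of stochastic matrices. The main obstacle is the bookkeeping in the substitution step: one must use the correct direction of inequality on each occurrence of $v_* - v_{\cdot}$ (upper on $v_*-v_k$, lower on $v_*-v_{k-1}$, because $-\gamma P_k$ flips signs) and verify the precise cancellation of the $\epsilon_k$ terms, so that the residual depends only on $\epsilon_j$ with $j<k$ and the coefficients align exactly with $Q_{kj} - Q'_{kj}$.
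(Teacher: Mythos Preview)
Your argument is correct. The two one–sided recursions on $v_*-v_k$, the exact identity
\[
v_*-v^{\pi_k}=(I-\gamma P_k)^{-1}\bigl[(v_*-v_k)-\gamma P_k(v_*-v_{k-1})+\epsilon_k\bigr],
\]
the substitution with the correct inequality directions, and the cancellation of the $j=k$ term with the standalone $+\epsilon_k$ all go through as you describe. Handling the initial condition either via the convention $\epsilon_0:=v_0-v_*$ or by letting $\gamma^k\to 0$ is fine (both give the same $\limsup$), and stochasticity of $Q_{kj},Q'_{kj}$ is exactly the geometric-average argument you state.

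Regarding comparison with the paper: this lemma is quoted from Munos and is not re-proved here; your proof is essentially Munos's original AVI derivation. The paper's own machinery (Appendix~B) takes a different route, generalizing the \emph{Policy Iteration} style of analysis: it decomposes the loss as $l_k=d_k+s_k$ (distance plus shift), derives a recursive lower bound on the Bellman residual $b_k$ (Lemma~\ref{lrecg}), converts it into an upper bound on the shift (Lemma~\ref{lbg}), and separately iterates an inequality on $d_k$ (Lemma~\ref{lrecd}). Specializing that argument to $\lambda=0$ yields a bound of the same shape as the lemma but with the ``negative'' stochastic matrix $\gamma(I-\gamma P_k)^{-1}P_kP_{k-1}\cdots P_j+(P_*)^{k-j}$ in place of $(I-\gamma P_k)^{-1}(P_*)^{k-j}$; the paper explicitly notes this discrepancy and attributes it to using an API-type proof rather than Munos's AVI-type proof. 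What your direct approach buys is exactly Munos's matrices $Q_{kj},Q'_{kj}$; what the paper's approach buys is a proof that works uniformly in $\lambda\in[0,1]$.
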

}
From the above componentwise bound, it is possible\footnote{\label{footnote}This result is not explicitely stated by \citet{munosavi}, but using a technique of another of his articles \citep{munosapi}, it can be derived from Lemma~\ref{cavi}. The current paper generalizes this result (in Proposition~\ref{spalpi} page \pageref{spalpi}).} to derive the following \lp{ }bounds.
\begin{proposition}[Asymptotic Performance of AVI (1/2)]
\label{lpavi}~\\
Choose any $p$ and any distribution $\mu$. Consider  the notations of Lemma~\ref{cavi}. Then  
$$
\mu_{kj}:=\frac{1}{2}\transp{\left({Q_{jk}+Q'_{jk}}\right)}\mu
$$
are distributions and 
the asymptotic performance of policies generated by Value Iteration satisfies
$$
\ls{k} \norm{v_*-v^{\pi_k}}{p,\mu}  \leq  \frac{2\gamma}{(1-\gamma)^2} \sup_{k \geq j \geq 0}\norm{\epsilon_k}{p,\mu_{kj}}
$$
\end{proposition}
As the above bounds rely on the partially unknown matrices $Q_{kj}$ and $Q'_{kj}$,
\cite{munosapi,munosavi} introduced some assumption in terms of {\bf concentration coefficient} to remove this dependency. Assume there exists a distribution $\nu$
and a real number $C(\nu)$ such that
\begin{equation}
\label{conccoef}
C(\nu):=\max_{i,j,a} \frac{p_{ij}(a)}{\nu(j)}.
\end{equation}
For instance, if one chooses the uniform law $\nu$, then there always exists such a $C(\nu) \in (1,N)$ where $N$ is the size of the state space (see \citep{munosapi,munosavi} for more discussion on this coefficient). This allows to derive the following performance bounds on the max norm of the loss.
{\begin{proposition}[Asymptotic Performance of AVI (2/2) \citep{munosavi}]
\label{concavi}~\\
Let $C(\nu)$ be the concentration coefficient defined in Equation~\ref{conccoef}. The asymptotic performance of the policies generated by Approximate Value Iteration satisfies
$$
\ls{k}\norm{v_*-v^{\pi_k}}{\infty} \leq \frac{2\gamma \left(C(\nu) \right)^{1/p}}{(1-\gamma)^2}\sup_{k \geq 0}\norm{\epsilon_k}{p,\nu}.
$$
\end{proposition}
}
The main difference between the bounds of Propositions~\ref{lpavi} and~\ref{concavi} and that of Bertsekas and Tsitsiklis (Equation~\ref{ndpavi}) is that the approximation error $\epsilon_k$ is measured by the weighted \lp{ }norm. As $\lim_{p \rightarrow \infty}\norm{.}{p,\mu} \leq \norm{.}{\infty}$ and $\left(C(\nu) \right)^{1/p} \stackrel{p \rightarrow \infty}{\longrightarrow} 1$, Munos's results are strictly finer.

There is generally no guarantee that AVI converges. AVI
converges for specific approximation architectures called \emph{averagers} \citep{gordon} which include state aggregation (see \citep{vanroy} for a very fine approximation bound in this specific case). Also, convergence may
just occur experimentally. Assume that the sequence $(v_k)_{k \ge 0}$ tends to some value $v$. Write
$\pi$ the corresponding greedy policy. Notice this implies that $(\epsilon_k)_{k \ge 0}$
tends to $v-\T_* v$, that is called the {\bf \bell}. The
above bounds can be improved by
a factor $\frac{1}{1-\gamma}$. We know from \citet{baird}
that
$$
\norm{v_*-v^{\pi}}{\infty} \leq \frac{2\gamma}{1-\gamma}\norm{v-\T_* v}{\infty}
$$
and, with the same notations as above,  \cite{munosavi} derived the analogous finer \lp{ }bound:
{\begin{corollary}[Performance of AVI in case of convergence \citep{munosavi}]
\label{avibell}~\\
Let $C(\nu)$ be the concentration coefficient defined in Equation~\ref{conccoef}. Assume that $(v_k)_{k \ge 0}$ tends to some value $v$. Write $\pi$ the corresponding greedy policy. Then
$$
\norm{v_*-v^{\pi}}{\infty} \leq \frac{2\gamma\left(C(\nu) \right)^{1/p} }{1-\gamma}\norm{v-\T_* v}{p,\nu}.
$$
\end{corollary}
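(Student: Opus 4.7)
The plan is to derive the bound directly rather than by taking a limit in Lemma~\ref{cavi}, imitating the classical max-norm argument $\norm{v_*-v^\pi}{\infty}\leq \frac{2\gamma}{1-\gamma}\norm{v-\T v}{\infty}$ recalled just above the statement, but keeping every intermediate inequality componentwise so that Jensen's inequality and the concentration coefficient can be used at the end.

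Set $e:=v-\T v$. Since $\pi$ is greedy with respect to $v$, $\T v=\T^\pi v$, hence $v=r^\pi+\gamma P^\pi v+e$; combined with the Bellman equation $v^\pi=r^\pi+\gamma P^\pi v^\pi$ this gives $v-v^\pi=(I-\gamma P^\pi)^{-1}e$. Likewise $\T^{\pi_*}v\leq \T v=v-e$ together with $v_*=r^{\pi_*}+\gamma P^{\pi_*}v_*$ gives $v_*-v\leq -(I-\gamma P^{\pi_*})^{-1}e$, the inequality being preserved by $(I-\gamma P^{\pi_*})^{-1}$ since this matrix has nonnegative entries. Adding the two bounds,
\begin{equation*}
0\leq v_*-v^\pi\leq \bigl[(I-\gamma P^\pi)^{-1}-(I-\gamma P^{\pi_*})^{-1}\bigr]e.
\end{equation*}
Expanding both resolvents as geometric series, the $t=0$ identity terms cancel, so the right-hand side equals $\gamma\sum_{t\geq 1}\gamma^{t-1}\bigl[(P^\pi)^t-(P^{\pi_*})^t\bigr]e$. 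Bounding componentwise by absolute values and introducing
$M:=\tfrac{1-\gamma}{2}\sum_{t\geq 1}\gamma^{t-1}\bigl[(P^\pi)^t+(P^{\pi_*})^t\bigr]$,
which is a convex combination of stochastic matrices and hence itself stochastic, one obtains $v_*-v^\pi\leq \frac{2\gamma}{1-\gamma}\,M|e|$.

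It remains to turn this componentwise bound into an $L_p$ estimate. Apply Jensen's inequality along each row $i$ of $M$ (each row is a probability distribution and $x\mapsto x^p$ is convex): $\bigl([M|e|]_i\bigr)^p\leq [M|e|^p]_i=\sum_j M_{ij}|e_j|^p$. A one-line induction on $t\geq 1$ shows $[(P^\pi)^t]_{ij}\leq C(\nu)\nu(j)$, and the same holds for $P^{\pi_*}$, so $M_{ij}\leq C(\nu)\nu(j)$, giving $\sum_j M_{ij}|e_j|^p\leq C(\nu)\norm{e}{p,\nu}^p$. Taking the maximum over $i$ delivers the claimed bound. The single subtle point is the cancellation of the $t=0$ terms: it is what simultaneously produces the leading factor $\gamma$ (rather than $1$) and allows the induction bounding $M$ to start at $t=1$, since the identity matrix on its own would not satisfy $\delta_{ij}\leq C(\nu)\nu(j)$ in general.
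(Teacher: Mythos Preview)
Your proof is correct and follows essentially the same route as the paper. The componentwise inequality
\[
v_*-v^\pi \leq \bigl[(I-\gamma P^\pi)^{-1}-(I-\gamma P^{\pi_*})^{-1}\bigr](v-\T v)
\]
is exactly the lemma proved in Appendix~\ref{appbell} (written there with the opposite sign convention as $\gamma[P_*(I-\gamma P_*)^{-1}-P^\pi(I-\gamma P^\pi)^{-1}](\T^\pi v-v)$, i.e.\ the bound $l_k\leq \frac{\gamma}{1-\gamma}[D-D'_k]\b_{k-1}$), and your passage from the componentwise bound to the $L_p$ estimate via Jensen and the concentration bound $M_{ij}\leq C(\nu)\nu(j)$ is the content of Lemma~\ref{compnorm3}. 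The only cosmetic difference is that the paper first routes through the span seminorm (obtaining $\frac{\gamma}{1-\gamma}[C(\nu)]^{1/p}\spn{\T v-v}{p,\nu}$ in Proposition~\ref{calpi}) and then recovers the stated $2\norm{\cdot}{p,\nu}$ bound via~\eqref{norms}, whereas you go straight to the $L_p$ norm; both arguments rest on the same cancellation of the $t=0$ identity terms that you correctly highlight.
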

}
Eventually, let us mention that \citet{munosavi} and \citet{Farahmand:2010} consider
some \emph{finer} performance bounds (in weighted \lp{ }norm) using
some \emph{finer} concentration coefficients. We won't discuss them
in this paper and we recommend the interested reader to go through
these references for further details.

\subsection{Policy Iteration}

Policy Iteration is an alternative method for computing an optimal policy for an infinite-horizon discounted Markov Decision Process. 
\algo{Policy Iteration}{pi}
{An MDP, an initial policy $\pi_0$}
{An (approximately) optimal policy}
{
\STATE $k \leftarrow 0$
\REPEAT
\STATE 
$v_k \leftarrow 
(I-\gamma P^{\pi_k})^{-1}r^{\pi_k} + \epsilon_k$ ~~~~ // Estimate the value of $\pi_k$
\STATE $\pi_{k+1} \leftarrow \mbox{greedy}(v_k)$ ~~~~~~~~~~~~~~~ // Update the policy
\STATE $k \leftarrow k+1$
\UNTIL{some stopping criterion}
\RETURN{$\pi_k$}
} 
This algorithm is based on the following property: if $\pi$ is some
policy, then any policy $\pi'$ that is greedy with respect to the
value of $\pi$, that is any $\pi'$ satisfying
$\pi'=\mbox{greedy}(v^\pi)$, is better than $\pi$ in the sense that
$v^{\pi'} \geq v^\pi$. Policy Iteration exploits this property in
order to generate a sequence of policies with increasing values. It is
described in Algorithm~\ref{algo:pi}. Note that we use the analytical
form of the value of a policy given by Equation~\ref{bellvprop}. Also,
as for Value Iteration, our description includes a potential error
$\epsilon_k$ term each time the value of a policy is estimated.

\paragraph{Properties of Exact Policy Iteration}

When the state space and the control spaces are finite, 
Exact Policy Iteration converges to an optimal policy $\pi_*$ in
a finite number of iterations \citep{puterman,ndp}. In infinite state spaces, if the function $v \mapsto
P^{\mbox{greedy}(v)}$ is Lipschitz, then it can be shown that
Policy Iteration has a quadratic convergence rate \citep{puterman}.
However, to our knowledge, and contrary to Value Iteration,
finite-time stopping conditions such as that of Proposition
\ref{stopvi} are not widely known for Policy Iteration, though they
appear implicitely in some recent works on Approximate Policy Iteration
\citep{Antos:2007a,Antos:2008,Farahmand:2010,Lazaric:2010}.

\paragraph{Properties of Approximate Policy Iteration (API)}

For problems of interest, one usually uses Policy Iteration in an
approximate form, that is with $\epsilon_k \neq 0$. Results similar to
those presented for Approximate Value Iteration exist for
Approximate Policy Iteration. As soon as there is some error
$\epsilon_k \neq 0$, the algorithm does not necessarily converge
anymore but there is an analogue of Equation~\ref{ndpavi} which is
also due to \citet[Prop 6.2 p. 276]{ndp}: if the approximation errors are uniformly
bounded, then
the difference between the asymptotic performance of policies
$\pi_{k+1}$ greedy with respect to $v_k$ and the optimal policy is
\begin{equation}
\label{ndpapi}
\ls{k}\norm{v_*-v^{\pi_k}}{\infty} \leq \frac{2\gamma}{(1-\gamma)^2}\sup_{k \ge 0}\norm{\epsilon_k}{\infty}.
\end{equation}
As for Value Iteration, Munos has extended this result so that one can
get bounds involving the \lp{ }norm. He also showed how to relate the
performance analysis to the Bellman residual $v_k-\T^{\pi_k}v_k$ that
says how much $v_k$ approximates the real value of the policy $\pi_k$;
this is for instance interesting when the evaluation step of Approximate Policy
Iteration involves the minimization of the norm of this Bellman residual (see \citep{munosapi}). It is
important to note that this Bellman residual is different from the one
we introduced in the previous section (we then considered $v_k-\T_* v_k=
v_k-\T^{\pi_{k+1}} v_k$ where $\pi_{k+1}$ is greedy with respect to
$v_k$). To avoid confusion, and because it is related to some
specific policy, we call $v_k-\T^{\pi_k}v_k$ the {\bf \pbell}.
Munos started by deriving a componentwise analysis. Write $P_k=P^{\pi_k}$ the stochastic matrix corresponding to the policy $\pi_k$ which is greedy with respect to $v_{k-1}$, $P_*$ the stochastic matrix corresponding to the (unknown) optimal policy $\pi_*$.
{\begin{lemma}[Asymptotic Componentwise Performance of API \citep{munosapi}]
\label{capi}~\\
The following matrices
\begin{eqnarray*}
R_k & := & (1-\gamma)^2(I-\gamma P_*)^{-1} P_{k+1}(I-\gamma P_{k+1})^{-1} \\
R'_k & := & (1-\gamma)^2(I-\gamma P_*)^{-1} \left[P_*+\gamma P_{k+1}(I-\gamma P_{k+1})^{-1}P_k \right]\\
R''_k & := & (1-\gamma)^2(I-\gamma P_*)^{-1} P_*(I-\gamma P_k)
\end{eqnarray*}
are stochastic and the asymptotic performance of the policies generated by Approximate Policy Iteration satisfies
\begin{eqnarray*}
\ls{k} v_*-v^{\pi_k} & \leq & \frac{2\gamma}{(1-\gamma)^2} \ls{k} \left[R_k-R'_k \right]\epsilon_k \\
\ls{k} v_*-v^{\pi_k} & \leq & \frac{2\gamma}{(1-\gamma)^2} \ls{k} \left[R_k-R''_k \right](v_k-\T^{\pi_k} v_k).
\end{eqnarray*}
\end{lemma}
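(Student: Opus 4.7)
My plan is to first verify the stochasticity of $R_k$, $R'_k$, $R''_k$ by a direct row-sum computation, using that $(1-\gamma)(I-\gamma P)^{-1}$ is a stochastic matrix for any stochastic $P$ (since $(I-\gamma P)^{-1}e=(1-\gamma)^{-1}e$). Each of the three matrices is a product or sum of such stochastic factors; the prefactor $(1-\gamma)^2$ is exactly calibrated to make the row sums equal $1$. (Note: this assumes the intended reading of $R''_k$ uses $(I-\gamma P_k)^{-1}$, since otherwise the matrix is not stochastic.)

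For the componentwise loss bound, I would work with $c_k := v_*-v^{\pi_k}$ and begin from $c_{k+1} = \T^{\pi_*} v_* - \T^{\pi_{k+1}} v^{\pi_{k+1}}$, inserting $\pm\T^{\pi_*} v^{\pi_k}$ and $\pm\T^{\pi_{k+1}} v^{\pi_k}$ to obtain the telescoping identity
\[
c_{k+1} = \gamma P_* c_k + \bigl[\T^{\pi_*} v^{\pi_k} - \T^{\pi_{k+1}} v^{\pi_k}\bigr] + \gamma P_{k+1}(v^{\pi_k} - v^{\pi_{k+1}}).
\]
The middle bracket is $\leq \gamma(P_{k+1}-P_*)\epsilon_k$, obtained by applying the affine Bellman identity at $v_k$ together with the greediness $\T^{\pi_{k+1}} v_k = \T v_k \geq \T^{\pi_*} v_k$ and the definition $v_k = v^{\pi_k}+\epsilon_k$. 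For the third term, I would use the resolvent identity $v^{\pi_{k+1}}-v^{\pi_k} = (I-\gamma P_{k+1})^{-1}(\T^{\pi_{k+1}} v^{\pi_k}-v^{\pi_k})$, rewrite $\T^{\pi_{k+1}} v^{\pi_k}-v^{\pi_k} = \Delta_k + \gamma(P_k-P_{k+1})\epsilon_k$ with $\Delta_k := \T v_k - \T^{\pi_k} v_k \geq 0$, and drop the non-positive contribution $-\gamma P_{k+1}(I-\gamma P_{k+1})^{-1}\Delta_k$. This delivers a clean recursion $c_{k+1} \leq \gamma P_* c_k + E_k$ in which $E_k$ depends only on $\epsilon_k$, $P_*$, $P_k$, $P_{k+1}$.

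Iterating gives $c_{k+1} \leq (\gamma P_*)^{k+1} c_0 + \sum_{j=0}^k (\gamma P_*)^{k-j} E_j$. The initial-condition term vanishes by contraction of $\gamma P_*$, and a tail-truncation argument (old terms carry geometrically small weight $O(\gamma^N)$; recent weights accumulate to $(I-\gamma P_*)^{-1}$) yields $\ls{k} c_{k+1} \leq (I-\gamma P_*)^{-1}\,\ls{k} E_k$ componentwise. Expanding $E_k$ and regrouping via the algebraic identity $P_{k+1}(I-\gamma P_{k+1})^{-1}(I-\gamma P_k) = P_{k+1}(I-\gamma P_{k+1})^{-1} - \gamma P_{k+1}(I-\gamma P_{k+1})^{-1}P_k$ lets one recognize the combination $R_k - R'_k$, producing the first asymptotic bound up to the stated constant. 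The second bound then follows by substituting $\epsilon_k = (I-\gamma P_k)^{-1}(v_k - \T^{\pi_k} v_k)$, which rearranges $v_k = v^{\pi_k} + \epsilon_k$ together with $\T^{\pi_k} v^{\pi_k} = v^{\pi_k}$; this substitution converts $\epsilon_k$ into the \pbell{ }and $R'_k$ into $R''_k$.

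The main obstacle is sign management in the iteration. The most natural single-step bound, $c_{k+1} \leq \gamma(I-\gamma P_{k+1})^{-1}(P_*-P_{k+1})(c_k-\epsilon_k)$, cannot be iterated componentwise because its coefficient matrix has zero row sums with mixed signs, so repeated substitution of $c_k$ into the right-hand side is not monotone. The reorganization into $c_{k+1}\leq\gamma P_* c_k + E_k$, with a non-negative coefficient of spectral radius $\gamma$, is what makes the iteration, the geometric decay of the initial condition, and the clean passage from a series in $(\gamma P_*)^i E_j$ to $(I-\gamma P_*)^{-1}\,\ls{k} E_k$ all go through.
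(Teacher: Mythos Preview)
Your proposal is correct and follows essentially the same route as the paper (whose Appendix generalizes Munos's argument and collapses back to it at $\lambda=1$): establish the one-step recursion $c_{k+1}\leq\gamma P_* c_k + E_k$ via Bellman telescoping and greediness, iterate, and pass to the $\limsup$ to pick up the $(I-\gamma P_*)^{-1}$ factor; your observation that $R''_k$ needs the inverse $(I-\gamma P_k)^{-1}$ to be stochastic is right and matches the paper's $C_k$ in Lemma~\ref{th}.

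The one noteworthy difference is how you obtain the \pbell{ }bound. You derive it from the $\epsilon_k$ bound by the substitution $\epsilon_k=(I-\gamma P_k)^{-1}(v_k-\T^{\pi_k}v_k)$, using the algebraic identity $R_k-R'_k=(1-\gamma)^2(I-\gamma P_*)^{-1}\bigl[P_{k+1}(I-\gamma P_{k+1})^{-1}(I-\gamma P_k)-P_*\bigr]$ so that $(R_k-R'_k)(I-\gamma P_k)^{-1}=R_k-R''_k$. The paper instead proves the two bounds independently: the residual bound via a separate telescoping through $v_k$ (Appendix~\ref{appbell}), and the $\epsilon_k$ bound via the distance/shift decomposition of Appendix~\ref{appcore}, which at $\lambda=1$ reduces to your recursion because the shift $s_k=w_k-v^{\pi_k}$ vanishes. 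Your substitution shortcut is cleaner for the Policy Iteration case; the paper's separation of the two arguments is what allows the extension to general $\lambda$, where the shift is nonzero and the substitution no longer factors cleanly.
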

}
As for Value Iteration, the above componentwise bound leads to the following \lp{ }bounds.
\begin{proposition}[Asymptotic Performance of API (1/2) \citep{munosapi}]
\label{lpapi}~\\
Choose any $p$ and any distribution $\mu$. Consider the notations of Lemma~\ref{capi}. For all $k \ge 0$, 
\begin{align}
\mu_k := \frac{1}{2}\transp{\left({R_{k}+R'_{k}}\right)}\mu
\mbox{~~and~~} \mu'_k:= \frac{1}{2}\transp{\left({R_{k}+R''_{k}}\right)}\mu \nonumber
\end{align}
are distributions and the asymptotic performance of the policies generated by Approximate Policy Iteration satisfies
\begin{eqnarray*}
\ls{k} \norm{v_*-v^{\pi_k}}{p,\mu} &  \leq & \frac{2\gamma}{(1-\gamma)^2} \ls{k} \norm{\epsilon_k}{p,\mu_k} \\
\mbox{and~~}\ls{k} \norm{v_*-v^{\pi_k}}{p,\mu} &  \leq & \frac{2\gamma}{(1-\gamma)^2} \ls{k} \norm{v_k-\T^{\pi_k} v_k}{p,\mu'_k}.
\end{eqnarray*}
\end{proposition}
Using the concentration coefficient $C(\nu)$ introduced in the
previous section (Equation~\ref{conccoef}), it is also possible to
show\footnote{Similarly to footnote~\ref{footnote}, this result is not explicitely stated by \cite{munosapi}
but using techniques of another of his articles \citep{munosavi}, it can be derived from Lemma~\ref{capi}. The current paper
anyway generalizes this result (in Proposition~\ref{calpi} page
\pageref{calpi}).} the following $L_\infty$/\lp{ }bounds:
{\begin{proposition}[Asymptotic Performance of API (2/2)]
\label{concapi}~\\
Let $C(\nu)$ be the concentration coefficient defined in Equation~\ref{conccoef}. The asymptotic performance of the policies generated by Approximate Policy Iteration  satisfies
\begin{eqnarray*}
\ls{k}\norm{v_*-v^{\pi_k}}{\infty} & \leq & \frac{2\gamma \left(C(\nu) \right)^{1/p}}{(1-\gamma)^2}\ls{k}\norm{\epsilon_k}{p,\nu} \\
\ls{k}\norm{v_*-v^{\pi_k}}{\infty} & \leq & \frac{2\gamma \left(C(\nu) \right)^{1/p}}{(1-\gamma)^2}\ls{k}\norm{v_k-\T^{\pi_k} v_k}{p,\nu}.
\end{eqnarray*}
\end{proposition}
} 
Again, the bounds of Propositions~\ref{lpapi} and~\ref{concapi} with respect to the approximation error $\epsilon_k$
are finer than that of Bertsekas and Tsitsiklis (Equation~\ref{ndpapi}).
Compared to the similar result for Approximate Value Iteration (Propositions
\ref{lpavi} and~\ref{concavi}) where the bound depends on a \emph{uniform} error bound ($\forall k, \norm{\epsilon_k}{p,\nu} \leq \epsilon$), the above
bounds have the nice property that they only depend on
\emph{asymptotic} errors/residuals.

Finally, as for Approximate Value Iteration, a better bound (by a factor $\frac{1}{1-\gamma}$) might be obtained
if the sequence of policies happens to converge. It can be shown \citep[Remark 4 page 7]{munosapi} that:
{\begin{corollary}[Performance of API in case of convergence]
\label{apibell}~\\
Let $C(\nu)$ be the concentration coefficient defined in Equation~\ref{conccoef}. If the sequence of policies $(\pi_k)$ converges to some $\pi$, then
\begin{eqnarray*}
v_*-v^{\pi} & \leq & \frac{2\gamma \left(C(\nu) \right)^{1/p}}{1-\gamma}\ls{k}\norm{\epsilon_k}{p,\nu} \\
v_*-v^{\pi} & \leq & \frac{2\gamma \left(C(\nu) \right)^{1/p}}{1-\gamma}\ls{k}\norm{v_k-\T^{\pi_k} v_k}{p,\nu}.
\end{eqnarray*}
\end{corollary}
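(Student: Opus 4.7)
The plan is to leverage the convergence hypothesis to sharpen the generic componentwise bound of Lemma \ref{capi} by a factor of $1-\gamma$. I begin by observing that, since the set of policies is finite, $\pi_k=\pi$ for all sufficiently large $k$. Passing the greedy condition $\T^{\pi_{k+1}}v_k\geq \T^{\pi'}v_k$ to the limit $v=\lim_k v_k$ shows that $\pi$ is greedy with respect to $v$, i.e.\ $\T v=\T^\pi v$. Writing $b:=v-\T^\pi v=\lim_k(v_k-\T^{\pi_k}v_k)$ and $\epsilon:=\lim_k \epsilon_k$, the algorithmic identity $v_k=v^{\pi_k}+\epsilon_k$ yields $\epsilon=v-v^\pi$ in the limit, and in particular $b=(I-\gamma P^\pi)\epsilon$.

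Next I derive two elementary componentwise relations. Subtracting $v^\pi=\T^\pi v^\pi$ from $v=\T^\pi v+b$ gives $(I-\gamma P^\pi)(v-v^\pi)=b$, so $v-v^\pi=(I-\gamma P^\pi)^{-1}b$. To bound $v_*-v$, I use $v_*=\T^{\pi_*}v_*$ together with the greediness of $\pi$ at $v$ to write $\T^{\pi_*}v\leq \T v=v-b$; this yields $(I-\gamma P^{\pi_*})(v_*-v)\leq-b$, hence $v_*-v\leq -(I-\gamma P^{\pi_*})^{-1}b$. Summing the two produces the central inequality
\[
v_*-v^\pi\leq (I-\gamma P^\pi)^{-1}b-(I-\gamma P^{\pi_*})^{-1}b.
\]

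The key algebraic step, and what saves the factor of $1-\gamma$, is the identity $(I-\gamma P)^{-1}=I+\gamma P(I-\gamma P)^{-1}$, which cancels the leading $b$-contribution on both sides and gives
\[
v_*-v^\pi\leq \gamma P^\pi(I-\gamma P^\pi)^{-1}b-\gamma P^{\pi_*}(I-\gamma P^{\pi_*})^{-1}b.
\]
Taking entrywise absolute values and using the triangle inequality, the right-hand side is bounded by $\frac{2\gamma}{1-\gamma}M|b|$ with $M:=\frac{1-\gamma}{2}\bigl[P^\pi(I-\gamma P^\pi)^{-1}+P^{\pi_*}(I-\gamma P^{\pi_*})^{-1}\bigr]$ stochastic. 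Because every term in the Neumann expansion of $M$ ends in a single transition under either $P^\pi$ or $P^{\pi_*}$, the concentration hypothesis $p_{ij}(a)\leq C(\nu)\nu(j)$ propagates through stochastic iterates to yield $M_{ij}\leq C(\nu)\nu(j)$. Combining Jensen's inequality $(M|b|)_i^p\leq(M|b|^p)_i$ with this entrywise bound gives $\|v_*-v^\pi\|_\infty\leq \frac{2\gamma (C(\nu))^{1/p}}{1-\gamma}\|b\|_{p,\nu}$; since $\|b\|_{p,\nu}=\ls{k}\|v_k-\T^{\pi_k}v_k\|_{p,\nu}$, this is the Bellman-residual statement.

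For the $\epsilon$ version, I substitute $b=(I-\gamma P^\pi)\epsilon$ into the summed inequality. The term $(I-\gamma P^\pi)^{-1}b$ collapses to $\epsilon$, and straightforward rearrangement yields
\[
v_*-v^\pi\leq \gamma(I-\gamma P^{\pi_*})^{-1}(P^\pi-P^{\pi_*})\epsilon,
\]
after which the same concentration-plus-Jensen argument applied to $\gamma(I-\gamma P^{\pi_*})^{-1}(P^\pi+P^{\pi_*})$ (again every expansion term ends in a $P^\pi$ or $P^{\pi_*}$ transition, so concentration applies) produces the desired $\frac{2\gamma(C(\nu))^{1/p}}{1-\gamma}\|\epsilon\|_{p,\nu}$ bound. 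The main obstacle throughout is the $\gamma$-saving cancellation: it depends essentially on $\pi$ being greedy for its own limit value $v$, which is where the convergence hypothesis does real work --- without it, one is stuck at the $\frac{1}{(1-\gamma)^2}$ rate of Lemma \ref{capi}.
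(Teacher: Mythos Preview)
Your algebraic core is correct, but there is a genuine gap at the outset: you assume that $v_k$, $\epsilon_k$, and hence $b_k$ converge. The hypothesis only gives convergence of the \emph{policies}; since $v_k=v^{\pi_k}+\epsilon_k$ with $\pi_k$ eventually equal to $\pi$, the sequence $v_k$ converges if and only if the error sequence $\epsilon_k$ does, and nothing in the statement guarantees that --- which is precisely why the conclusion is written with a $\limsup$ rather than a limit.

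The repair is immediate and requires no new idea: do not pass to a limit value. For every $k$ large enough that $\pi_k=\pi_{k+1}=\pi$, the greedy step already says $\T^\pi v_k\geq \T^{\pi'}v_k$ for every $\pi'$, so $\pi$ is greedy with respect to $v_k$ itself. Your entire chain of inequalities then holds verbatim with $v_k$, $b_k:=v_k-\T^\pi v_k$, and $\epsilon_k$ in place of $v$, $b$, $\epsilon$, yielding for each such $k$
\[
\|v_*-v^\pi\|_\infty \;\leq\; \frac{2\gamma\,(C(\nu))^{1/p}}{1-\gamma}\,\|b_k\|_{p,\nu}
\qquad\text{and}\qquad
\|v_*-v^\pi\|_\infty \;\leq\; \frac{2\gamma\,(C(\nu))^{1/p}}{1-\gamma}\,\|\epsilon_k\|_{p,\nu}.
\]
The left-hand side does not depend on $k$, so taking $\limsup_{k}$ on the right finishes the proof. (Your closing remark that the cancellation ``depends essentially on $\pi$ being greedy for its own limit value $v$'' should accordingly read: it depends on $\pi$ being greedy for each $v_k$, which is exactly what $\pi_{k+1}=\pi$ provides.)

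With that fix, your route is the direct componentwise argument the paper attributes to Munos~\cite{munosapi}; the paper does not spell out a proof of this particular corollary. The paper's own machinery (the recursive analysis of Appendix~\ref{appcore} and the policy-convergence specialization in Appendix~\ref{appalpi}) is designed for general~$\lambda$ and is considerably heavier; when $\lambda=1$ it collapses to essentially the same inequality you derive.
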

} 

After this tour of results for Value and Policy Iteration, we now
introduce the algorithm studied in this paper.

\section{\lpi}

\label{lpi}

Though all the results we have emphasized so far are
strongly related (and even sometimes identical, compare Equations
\ref{ndpavi} and~\ref{ndpapi}), they were surprisingly proved
independently. In this section, we describe the family of algorithms
``\lpi''\footnote{\label{tdbpi}It was also called ``Temporal Difference-Based Policy Iteration'' in the original paper, but we take the name \lpi, as it was the name picked by most subsequent works.} 
introduced by \cite{ioffe} parameterized by a coefficient $\lambda \in
(0,1)$, that generalizes them both. When $\lambda=0$, \lpi{ }
reduces to Value Iteration while it reduces to Policy Iteration
when $\lambda=1$. We also recall the fact discussed by
\citet{ioffe} that \lpi{ }draws some connections with Temporal
Difference algorithms  \citep{sutton}.  

\subsection{The Algorithm}

We begin by giving some intuition about how one can make a connection
between Value Iteration and Policy Iteration. For simplicity, let us temporarily
forget about the error term~$\epsilon_k$. At first sight, Value Iteration builds a sequence of value functions and Policy Iteration a
sequence of policies. In fact, both algorithms can  be seen as updating
a sequence of value-policy pairs.  With some little rewriting --- by
decomposing the (nonlinear) Bellman operator $\T_*$ into \emph{(i)} the
maximization step and \emph{(ii)} the application of the (linear) Bellman
operator --- it can be seen that each iterate of Value Iteration is equivalent to
the two following updates: $$
\left\{
\begin{array}{rcl}
\pi_{k+1} & \leftarrow & \mbox{greedy}(v_{k})\\
v_{k+1} &\leftarrow& \T^{\pi_{k+1}} v_{k}
\end{array}
\right.
\Leftrightarrow
\left\{
\begin{array}{rcl}
\pi_{k+1} & \leftarrow & \mbox{greedy}(v_{k})\\
v_{k+1} &\leftarrow&  r^{\pi_{k+1}}+\gamma P^{\pi_{k+1}} v_{k}.
\end{array}
\right.
$$
The left hande side of the above equation uses the operator $\T^{\pi_{k+1}}$ while the right hande side uses its definition.
Similarly --- by inverting in Algorithm~\ref{algo:pi} the order of \emph{(i)} the estimation of the value of the current policy and \emph{(ii)} the update of the policy, and by using the fact that the value of the policy $\pi_{k+1}$ is the fixed point of $\T^{\pi_{k+1}}$ (Equation~\ref{limitofcontraction}) --- it can be argued that every iteration of Policy Iteration does the following: 
$$
\left\{
\begin{array}{rcl}
\pi_{k+1} & \leftarrow & \mbox{greedy}(v_{k})\\
v_{k+1} & \leftarrow & (\T^{\pi_{k+1}})^\infty v_{k}
\end{array}
\right.
\Leftrightarrow
\left\{
\begin{array}{rcl}
\pi_{k+1} & \leftarrow & \mbox{greedy}(v_{k})\\
v_{k+1} & \leftarrow & (I-\gamma P^{\pi_{k+1}})^{-1}r^{\pi_{k+1}}.
\end{array}
\right.
$$
This rewriting makes both algorithms look close to
each other. Both can be seen as having an estimate $v_k$ of the value
of policy $\pi_k$, from which they deduce a potentially better policy
$\pi_{k+1}$. The corresponding value $v^{\pi_{k+1}}$ of this better
policy may be regarded as a target which is  tracked by the
next estimate $v_{k+1}$. The difference is in the update that enables
to go from $v_k$ to $v_{k+1}$: while Policy Iteration directly
\emph{jumps to} the value of $\pi_{k+1}$ (by applying the Bellman
operator $\T^{\pi_{k+1}}$ an infinite number of times), Value
Iteration only \emph{makes one step} towards it (by applying
$\T^{\pi_{k+1}}$ only once). From this common view of Value Iteration,
it is natural to introduce the well-known Modified Policy Iteration algorithm \citep{Puterman:1978} which \emph{makes $n$ steps} at each update:
$$
\left\{
\begin{array}{rcl}
\pi_{k+1} & \leftarrow & \mbox{greedy}(v_{k})\\
v_{k+1} & \leftarrow & (\T^{\pi_{k+1}})^n v_{k}
\end{array}
\right.
\Leftrightarrow
\left\{
\begin{array}{rcl}
\pi_{k+1} & \leftarrow & \mbox{greedy}(v_{k})\\
v_{k+1} & \leftarrow & \left[I+...+(\gamma P^{\pi_{k+1}})^{n-1}\right] r^{\pi_{k+1}} + (\gamma P^{\pi_{k+1}})^n v_k.
\end{array}
\right.
$$
\FigureEtendue{greedyexact}{{\bf Visualizing \lpi{ }on the greedy
    partition sketch:} Following \citep[page 226]{ndp}, one can
  decompose the value space as a collection of polyhedra, such that
  each polyhedron corresponds to a region where one policy is greedy.
  This is called the \emph{greedy partition}. In the above example, there are only 3
  policies, $\pi_1$, $\pi_2$ and $\pi_*$. $v_k$ is the initial value.
  greedy$(v_k)=\pi_2$, greedy$(v^{\pi_2})=\pi_1$, and
  greedy$(v^{\pi_1})=\pi_*$. Therefore (1-)Policy Iteration generates
  the sequence $((\pi_2,v^{\pi_2}), (\pi_1,v^{\pi_1}),
  (\pi_*,v^{\pi_*}))$. Value teration (or $0$ Policy Iteration) starts
  by slowly updating $v_k$ towards $v^{\pi_2}$ until it crosses the
  boundary $\pi_1/\pi_2$, after which it tracks alternatively
  $v^{\pi_1}$ and $v^{\pi_2}$, until it reaches the $\pi_*$ part. In
  other words, Value Iteration makes small steps. \lpi{ }is
  intermediate between the two: it makes steps of which the steps length is
  related to $\lambda$. On the above sketch, \lpi{ }gets to the greedy partition of the optimal policy $\pi_*$ in 3 steps, as did Policy Iteration.} {10}
The above common view is actually here interesting because
it also leads to a natural introduction of \lpi. \lpi{ }is
doing a \emph{$\lambda$-adjustable step} towards the value of
$\pi_{k+1}$:
$$
\hspace{-7.5cm}\left\{
\begin{array}{rcl}
\pi_{k+1} & \leftarrow & \mbox{greedy}(v_{k})\\
v_{k+1} & \leftarrow & (1-{\lambda})\sum_{j=0}^\infty {\lambda}^j (\T^{\pi_{k+1}})^{j+1} v_{k} 
\end{array}
\right. 
$$
$$
\hspace{5cm}
\Leftrightarrow  \left\{
\begin{array}{rcl}
\pi_{k+1} & \leftarrow & \mbox{greedy}(v_{k})\\
v_{k+1} & \leftarrow & (I-{\lambda}\gamma P^{\pi_{k+1}})^{-1}(r^{\pi_{k+1}}+(1-{\lambda})\gamma P^{\pi_{k+1}}v_{k}) 
\end{array}
\right.
$$
\begin{remark}
The equivalence between the left and the right representation of \lpi{ }needs here to be proved. For all $k \geq 0$ and all function $v$, \citet{ioffe} introduce the following operator\footnote{The equivalence between Equations~\ref{mdef1} and~\ref{mdef2} follows trivially from the definition of $\T^{\pi_{k+1}}$.}
\begin{eqnarray}
M_k {\bf v} & := & (1-\lambda)\T^{\pi_{k+1}}v_k + \lambda \T^{\pi_{k+1}}{\bf v}\label{mdef1} \\
& = &r^{\pi_{k+1}}+(1-\lambda)\gamma P^{\pi_{k+1}}v_k + \lambda \gamma P^{\pi_{k+1}}{\bf v} \label{mdef2}
\end{eqnarray}
and prove that
\begin{itemize}
\item $M_k$ is a contraction mapping of modulus $\lambda\gamma$ for the max norm ;
\item The next iterate $v_{k+1}$ of \lpi{ }is the (unique) fixed point of $M_k$.
\end{itemize}
The left representation of \lpi{ }is obtained by ``unrolling'' Equation~\ref{mdef1} an infinite number of times, while the right one is obtained by using Equation~\ref{mdef2} and solving the linear system $v_{k+1}=M_k v_{k+1}$.
\end{remark}

Informally, the parameter $\lambda$ (or $n$ in the case of Modified Policy Iteration) can be seen as adjusting the size of the step for tracking the target $v^{\pi_{k+1}}$ (see Figure~\ref{fig:greedyexact}): the bigger the value, the longer the step.  
Formally, \lpi{ }(consider the above left hand side) consists in doing a geometric average of parameter $\lambda$ of the different numbers of applications of the Bellman operator $(\T^{\pi_{k+1}})^{j}$ to $v_{k}$. The right hand side is here interesting because it clearly shows that \lpi{ }generalizes Value Iteration (when $\lambda=0$) and Policy Iteration (when $\lambda=1$).
The operator $M_k$ gives some insight on how one may concretely implement
one iteration of \lpi: it can for instance be done through a Value
Iteration-like algorithm which applies $M_k$ iteratively. Also, the
fact that its contraction factor $\lambda \gamma$ can be tuned is of particular
importance because finding the corresponding fixed point can be much
easier than that of $\T^{\pi_{k+1}}$, which is only $\gamma$-contracting.

In order to describe the \lpi{ }algorithm, it is useful to introduce an operator that corresponds to computing the fixed point of $M_k$.
For any value $v$ and any policy $\pi$, define:
\begin{eqnarray}
\T^{\pi}_\lambda v & := & v + (I-\lambda\gamma P^{\pi})^{-1}(\T^{\pi}v - v) \label{form1} \\
& = & (I-\lambda\gamma P^{\pi})^{-1}(r^{\pi}+ (1-\lambda)\gamma P^{\pi} v ) \label{form3}  \\
& = & (I-\lambda\gamma P^{\pi})^{-1}(\lambda r^{\pi}+ (1-\lambda)\T^{\pi} v ). \label{form4}
\end{eqnarray}
Equation~\ref{form3} indeed amounts to solve Equation~\ref{mdef2} defining $M_k$. The other two formulations are equivalent up to some little linear algebra manipulations. 

\lpi{ }is formally described in Algorithm~\ref{algo:lpi}.
\algo{\lpi}{lpi}
{An MDP, $\lambda \in (0,1)$, an initial value $v_0$}
{An (approximately) optimal policy}
{
\STATE $k \leftarrow 0$
\REPEAT
\STATE $\pi_{k+1} \leftarrow \mbox{greedy}(v_k)$ ~~~~~~~~ // Update the policy
\STATE $v_{k+1} \leftarrow \T^{\pi_{k+1}}_\lambda v_k + \epsilon_{k+1}$ ~~ // Update the estimate of the value of policy $\pi_{k+1}$
\STATE $k \leftarrow k+1$
\UNTIL{some convergence criterion}
\RETURN{greedy($v_k$)}
} 
Once again, our description includes a potential error term each
time the value is updated.  Even with this error term, it is
straightforward to see that the algorithm reduces to Value Iteration
(Algorithm~\ref{algo:vi}) when $\lambda=0$ and to Policy Iteration\footnote{Policy Iteration
starts with an initial policy while \lpi{ }starts with some initial
value. To be precise, $1$ Policy Iteration starting with $v_0$ is
equivalent to Policy Iteration starting with the greedy policy with
respect to $v_0$.}
(Algorithm~\ref{algo:pi}) when $\lambda=1$.

\paragraph{Relation with Reinforcement Learning}

\begin{figure}[t]
\begin{minipage}[l]{.49\linewidth}
\begin{center}
  \includegraphics[width=7cm]{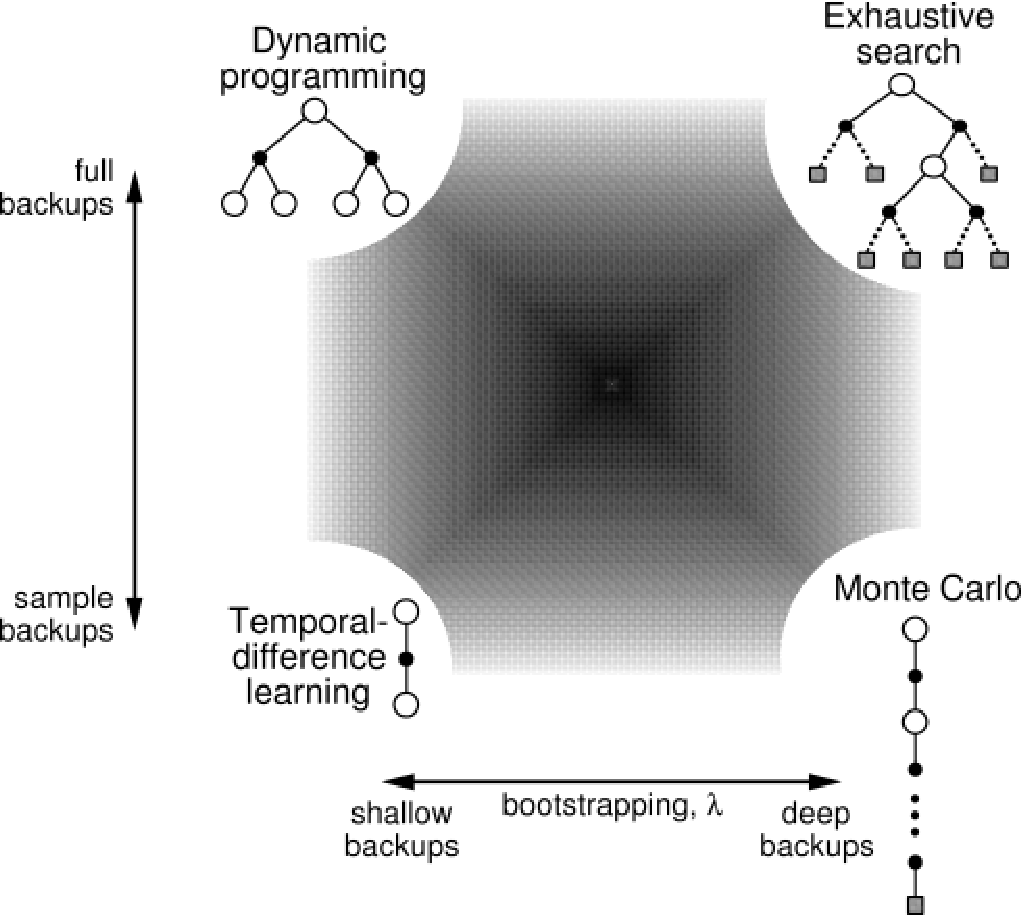}
\end{center}
\end{minipage}	
\begin{minipage}[r]{.49\linewidth}
\begin{center}
  \includegraphics[width=6cm]{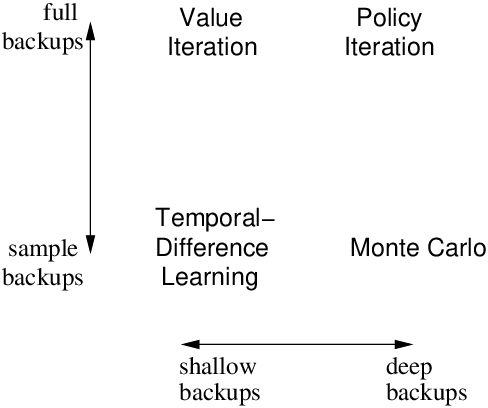}
\end{center}
\end{minipage}
\caption{\label{fig:lpi}{\bf \lpi, a fundamental algorithm for
    Reinforcement Learning:} The left drawing, taken from chapter 10.1
  the book of \citet{sutton}, represents \emph{``two of the most
    important dimensions''} of Reinforcement Learning methods. The
  vertical axis corresponds to whether one does full backup (exact
  computation of the expectations) or stochastic approximation
  (estimation through samples). The horizontal axis corresponds to the
  depth of the backups, and is (among other things) controlled by the
  parameter $\lambda$. On the right, is an original picture of \lpi,
  along the same dimensions. It is interesting to notice that
  \citet{sutton} comment their drawing as follows: \emph{``At three of
    the four corners of the space are the three primary methods for
    estimating values: DP, TD, and Monte Carlo''}. They do not
  recognize the fourth corner as one of the Reinforcement Learning
  \emph{primary methods}. The natural representation of \lpi{
  }actually suggests a modification of the sketch which is
  particularly meaningful: Policy Iteration, which consists in
  computing the value of the current policy, is the \emph{deepest
    backup method}, and can be considered as the batch version of
  Monte Carlo.}
\end{figure}
The definition of the operator $\T_\lambda^\pi$ given by Equation
\ref{form3} is the form we have used for the introduction of \lpi{ }as
an intermediate algorithm between Value Iteration and Policy Iteration. The
equivalent form given by Equation~\ref{form1} can be used to make a
connection with the TD($\lambda$) algorithms\footnote{TD stands for
Temporal Difference. As we have mentionned in Footnote~\ref{tdbpi}, \lpi{ }was originally also called ``Temporal Difference Based
Policy Iteration'' and the presentation of \citet{ioffe} starts from the
formulation of Equation~\ref{form1} (which is close to TD($\lambda$)), and afterwards makes the connection with
Value Iteration and Policy Iteration.} \citep{sutton}. Indeed, through Equation~\ref{form1}, the evaluation phase of \lpi{ }can be seen as an incremental additive procedure:
$$
v_{k+1} \leftarrow v_k + \Delta_k
$$
where 
$$
\Delta_k:=(I-\lambda\gamma P^{\pi_{k+1}})^{-1}(\T^{\pi_{k+1}} v_k-v_k )$$
is zero if and only if the value $v_k$ is equal to the optimal value $v_*$. 
It can be shown (see
\cite{ioffe} for a proof or simply look at the equivalence between
Equations~\ref{bellvdef} and~\ref{bellvprop} for an intuition) that the vector $\Delta_k$
has components given by:
\begin{equation}
\label{tempdif}
\Delta_k(i)=\lim_{N \rightarrow \infty}E_{\pi_{k+1}} \left[ \left. \sum_{j=0}^{N-1}(\lambda\gamma)^j \td_k(i_j,i_{j+1}) \right| i_0=i\right] 
\end{equation}
with
$$
\td_k(i,j)  :=  r(i,\pi_{k+1}(i),j)+\gamma v(j)-v(i) 
$$
being the temporal difference associated to transition $i \rightarrow
j$, as defined by \citet{sutton}.  When one uses a stochastic
approximation of \lpi, that is when the expectation $E_{\pi_{t+1}}$ is
approximated by sampling, \lpi{ }reduces to the algorithm
TD($\lambda$) which is described in chapter 7 of \cite{sutton}.
In particular, when $\lambda=1$, the terms in the above sum collapse and become the exact discounted return:
\begin{align}
 \sum_{j=0}^{N-1}\gamma^j \td_k(i_j,i_{j+1}) &=  \sum_{j=0}^{N-1}\gamma^j \left[ r(i_j,\pi_{k+1}(i_j),i_{j+1})+\gamma v(i_{j+1})-v(i_j) \right] \nonumber \\
&= \sum_{j=0}^{N-1}\gamma^j  r(i_j,\pi_{k+1}(i_j),i_{j+1}) + \gamma^N v(i_N) \nonumber
\end{align}
and the stochastic approximation matches the Monte-Carlo method.
Also, \citet{ioffe} show that Approximate TD($\lambda$) with a linear
feature architecture, as described in chapter 8.2 of \cite{sutton},
corresponds to a natural Approximate version of \lpi{ }where the value
is updated by least square fitting using a gradient-type iteration
after each sample. Last but not least, the reader might notice that
the ``unified view'' of Reinforcement Learning algorithms which is
depicted in chapter 10.1 of \cite{sutton}, and which is reproduced in
Figure~\ref{fig:lpi}, is in fact a picture of \lpi.

\subsection{Analysis of Exact \lpi}

To our knowledge, little has been done concerning the analysis of \lpi: the only results available concern the Exact case (when
$\epsilon_k=0$).  Define the following factor
\begin{equation}
\label{betadef}
\beta=\frac{(1-\lambda)\gamma}{1-\lambda\gamma}.
\end{equation}
We have $0 \leq \beta \leq \gamma < 1$. If $\lambda=0$ (Value
Iteration) then $\beta=\gamma$, and if $\lambda=1$ (Policy Iteration)
then $\beta=0$. 
In the original article introducing \lpi, \citet{ioffe} show the convergence and provide an asymptotic rate of convergence:
{\begin{proposition}[Convergence of Exact $\lambda$PI \citep{ioffe}]
\label{lpiorig}~\\
If the discount factor $\gamma<1$, then $v_k$ converges to $v_*$. Furthermore, after some index $k_*$, the rate of convergence is linear in $\beta$ as defined in Equation~\ref{betadef}, that is
$$
\forall k \geq k_*,~~\|v_{k+1}-v_*\| \leq \beta \|v_k-v_*\|.
$$
\end{proposition}
}
By making $\lambda$ close to 1, $\beta$ can be arbitrarily close to
$0$ so the above rate of convergence might look overly impressive. This needs
to be put into perspective: the index $k_*$ is the index after which
the policy $\pi_k$ does not change anymore (and is equal to the
optimal policy $\pi_*$). As we said when we introduced the
algorithm, $\lambda$ controls the speed at which one wants $v_k$ to ``track the target'' $v^{\pi_{k+1}}$; when $\lambda=1$, this is done in one step (and if $\pi_{k+1}=\pi_*$ then $v_{k+1}=v_*$). 

\section{Overview of our Results and Main Proof Ideas}

\label{resultsandproofidea}

Now that we have described the algorithms and some of their known
properties, motivating the remaining of this paper is
straightforward. \lpi{ }is conceptually nice  since it
generalizes the two most well-known algorithms for solving discounted
infinite-horizon Markov Decision Processes. The natural question that
arises is whether one can generalizes the results we have described
so far to \lpi{ }(uniformly for all $\lambda$). The answer is yes:
\begin{itemize}
\item we shall derive a componentwise analysis of Exact and Approximate \lpi;
\item we shall characterize the \emph{non-asymptotic} convergence rate of Exact \lpi{ }  (Proposition~\ref{lpiorig} only showed the \emph{asymptotic} linear convergence) and shall generalize the stopping criterion described for Value Iteration (Proposition~\ref{stopvi});
\item we shall give bounds of the asymptotic error of Approximate \lpi{ }with respect to the \emph{asymptotic} approximation error, \bell, and \pbell, that generalize Lemmas~\ref{cavi} and~\ref{capi}, and Propositions~\ref{lpavi},~\ref{concavi},~\ref{lpapi} and~\ref{concapi}; our analysis actually implies that doing Approximate \lpi{ }is sound (when the approximation error tends to 0, the algorithm recovers the optimal solution)
\item we shall provide specific (better) bounds for the case when the value or the policy converges, which generalizes Corollaries~\ref{avibell} and~\ref{apibell};
\item interestingly, we shall provide all our results using the span seminorms we have introduced at the beginning of the paper, and using the relations between this span semi-norms and the standard \lp{ }norms (Equation~\ref{norms}), it can be seen that our results are in this respect slightly stronger than all the previously described results.
\end{itemize}
Conceptually, we provide a unified vision (unified proofs, unified
results) for all the mentionned algorithms.

\subsection{On the Need for a New Proof Technique}

In the literature, lines of analysis are different for Value Iteration and Policy
Iteration. Analyses of Value Iteration are based on the fact that it  computes the fixed point of the Bellman operator
which is a $\gamma$-contraction mapping in max norm (see for instance \citep{ndp}). Unfortunately, it can be shown that the operator by
which Policy Iteration updates the value from one iteration to the
next is in general not a contraction in max norm.  In fact, this observation can be drawn for \lpi{
}as soon as it does not reduce to Value Iteration:
\begin{proposition}{
If $\lambda>0$, there exists no norm for which the operator by
which \lpi{ }updates the value from one iteration to the next is a
contraction.
}
\end{proposition}
\begin{proof}
To see this, consider the deterministic MDP (shown in Figure~\ref{simplemdp})
with two states $\{1,2\}$ and two actions $\{change,stay\}$: $r_1=0$,
$r_2=1$,
$P_{change}(s_2|s_1)=P_{change}(s_1|s_2)=P_{stay}(s_1|s_1)=P_{stay}(s_2|s_2)=1$.
\begin{figure}[t]
    \begin{center}
      \includegraphics[width=3cm]{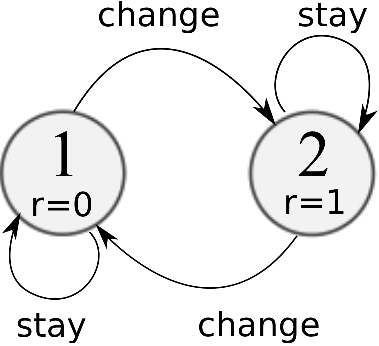} 
    \end{center}
  \caption{\label{simplemdp} This simple deterministic MDP is used to show that \lpi{ }cannot be analysed in terms of contraction (see text for details).}  
\end{figure}
Consider the following two value functions $v=(\epsilon,0)$ and
$v'=(0,\epsilon)$ with $\epsilon>0$. Their corresponding greedy
policies are $\pi=(stay,change)$ and $\pi'=(change,stay)$. Then, we
can compute the next iterates of $v$ and $v'$ (using Equation~\ref{form3}):
\begin{eqnarray*}
r^\pi+(1-\lambda\gamma)P^{\pi}v & = & \begin{pmatrix}(1-\lambda)\gamma\epsilon \\1+(1-\lambda)\gamma\epsilon\end{pmatrix},\\
\T^\pi_\lambda v & = & \begin{pmatrix} \frac{(1-\lambda)\gamma\epsilon}{1-\lambda\gamma} \\1+\frac{(1-\lambda)\gamma\epsilon}{1-\lambda\gamma} \end{pmatrix}, \\
r^{\pi'}+(1-\lambda\gamma)P^{\pi'}v' & = & \begin{pmatrix} (1-\lambda)\gamma\epsilon \\1+(1-\lambda)\gamma\epsilon \end{pmatrix}, \\
\mbox{and~~}\T^{\pi'}_\lambda v' & = &  \begin{pmatrix} \frac{1+(1-\lambda)\gamma\epsilon}{1-\lambda\gamma}-1 \\ \frac{1+(1-\lambda)\gamma\epsilon}{1-\lambda\gamma} \end{pmatrix}.
\end{eqnarray*}
Then
$$
\T^{\pi'}_\lambda v'-\T^\pi_\lambda v = \begin{pmatrix}\frac{1}{1-\lambda\gamma}-1 \\ \frac{1}{1-\lambda\gamma}-1 \end{pmatrix}
$$
while 
$$
v'-v = \begin{pmatrix}-\epsilon\\ \epsilon\end{pmatrix}.
$$
As $\epsilon$ can be arbitrarily small, the norm of $\T_\lambda^\pi v - \T_\lambda^{\pi'}v'$ can be arbitrarily larger than norm of $v-v'$ when $\lambda>0$. 
\end{proof}
Analyses of Policy Iteration usually rely on the fact that the
sequence of values generated by Exact Policy Iteration is non-decreasing (see
\cite{ndp,munosapi}). Unfortunately, it can easily be seen that as soon as $\lambda$ is smaller than 1, the value functions may decrease (it suffices to 
take a very high initial value). For non trivial values of $\lambda$, \lpi{ }is neither contracting nor non-decreasing, so we need a new proof technique.

\subsection{An Overview on the Componentwise Analysis of \lpi}

\label{proofoverview}

The rest of this section provides an overview of our analysis.
We show how to compute an upper bound of the loss for \lpi{ }in the general (possibly approximate) case.
It is the basis for the derivation of componentwise bounds for Exact \lpi{ }(Section~\ref{elpi}) and Approximate \lpi{ }(Section~\ref{alpi}). 
Consider \lpi{ }as described in Algorithm~\ref{algo:lpi}, and the sequences of value-policy-error triplets $(v_k,\pi_k, \epsilon_k)$ it generates. Most of our results come from a series of relations involving objects we now define:
\begin{itemize}
\item the {\bf loss} of using policy $\pi_k$ instead of the optimal policy:
$$
l_k:=v_*-v^{\pi_k};
$$
\item the {\bf value} of the $k^\text{th}$ iterate b.a. (before approximation):
$$
w_k:=v_k-\epsilon_k;
$$
\item the {\bf distance} between the optimal value and the $k^{th}$ value b.a.:
$$
d_k:=v_*-w_k=\T_\lambda^{\pi_k}v_{k-1};
$$
\item the {\bf \shift}{ }between the $k^\text{th}$ value b.a. and the value of the $k^{th}$ policy:
$$
\s_k:=w_k-v^{\pi_k};
$$
\item the {\bf \bell}{ }of the $k^\text{th}$ value:
$$
\b_k:=\T_{k+1} v_k -v_k=\T_* v_k - v_k.
$$
\end{itemize} 
To lighten the notations, we now on write: $P_k:=P^{\pi_k}$, $\T_k:=\T^{\pi_k}$, $P_*:=P^{\pi_*}$. We refer to the factor $\beta$ as introduced by Bertsekas and Ioffe (Equation~\ref{betadef} page \pageref{betadef}). Also, the following stochastic matrix plays a recurrent role in our analysis\footnote{\label{stocmat}The fact that this is indeed a stochastic matrix is explained at the beginning of the Appendices.}:
\begin{equation}
\label{defak}
A_k    := (1-\lambda\gamma)(I-\lambda\gamma P_k)^{-1}P_k.
\end{equation}
We use the notation $\overline{x}$ for an upper bound of $x$ and $\underline{x}$ for a lower bound.

Our analysis relies on a series of lemmas that we now state (for clarity, all the proofs are deferred to Appendix~\ref{appcore}).
{\begin{lemma}
\label{lbg}
The \shift{ }is related to the \bell{ }as follows:
$$
\s_k  =  \beta (I-\gamma P_k)^{-1}A_k(-\b_{k-1}).
$$
\end{lemma}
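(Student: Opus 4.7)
The plan is to express $w_k$ using one of the equivalent formulations of $\T^{\pi_k}_\lambda$ given in equations \ref{form1}--\ref{form4}, recognize $v^{\pi_k}$ as a fixed point of this operator, and then reduce the difference to a multiple of $\b_{k-1}$ via the Bellman equation for $v^{\pi_k}$.

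First I would note that since $\pi_k$ is greedy with respect to $v_{k-1}$, we have $\b_{k-1}=\T v_{k-1}-v_{k-1}=\T^{\pi_k}v_{k-1}-v_{k-1}$. By the definition of the algorithm and of $w_k$, $w_k=v_k-\epsilon_k=\T^{\pi_k}_\lambda v_{k-1}$. Moreover, for any policy $\pi$, the value $v^\pi$ is a fixed point of $\T^\pi_\lambda$: indeed, using form \ref{form1}, $\T^\pi_\lambda v^\pi=v^\pi+(I-\lambda\gamma P^\pi)^{-1}(\T^\pi v^\pi-v^\pi)=v^\pi$. In particular $v^{\pi_k}=\T^{\pi_k}_\lambda v^{\pi_k}$, so
\[
\s_k=w_k-v^{\pi_k}=\T^{\pi_k}_\lambda v_{k-1}-\T^{\pi_k}_\lambda v^{\pi_k}.
\]

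Next I would apply form \ref{form3}, which gives
\[
\T^{\pi_k}_\lambda v_{k-1}-\T^{\pi_k}_\lambda v^{\pi_k}
=(I-\lambda\gamma P_k)^{-1}(1-\lambda)\gamma P_k\bigl(v_{k-1}-v^{\pi_k}\bigr),
\]
since the $r^{\pi_k}$ terms cancel. It remains to relate $v_{k-1}-v^{\pi_k}$ to $\b_{k-1}$. Using the Bellman equation $v^{\pi_k}=r^{\pi_k}+\gamma P_k v^{\pi_k}$, write $r^{\pi_k}=(I-\gamma P_k)v^{\pi_k}$; plugging this into $\b_{k-1}=r^{\pi_k}+\gamma P_k v_{k-1}-v_{k-1}$ and rearranging yields $\b_{k-1}=(I-\gamma P_k)(v^{\pi_k}-v_{k-1})$, i.e.\ $v_{k-1}-v^{\pi_k}=-(I-\gamma P_k)^{-1}\b_{k-1}$.

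Combining these gives
\[
\s_k=-(1-\lambda)\gamma\,(I-\lambda\gamma P_k)^{-1}P_k(I-\gamma P_k)^{-1}\b_{k-1}.
\]
Finally, since $P_k$, $(I-\lambda\gamma P_k)^{-1}$ and $(I-\gamma P_k)^{-1}$ are all polynomials in $P_k$ they commute pairwise; factoring $(1-\lambda)\gamma=\beta(1-\lambda\gamma)$ and recognizing $A_k=(1-\lambda\gamma)P_k(I-\lambda\gamma P_k)^{-1}$ yields $\s_k=\beta(I-\gamma P_k)^{-1}A_k(-\b_{k-1})$, as claimed.

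No step looks genuinely hard; the only point that requires a moment of care is the identity $v_{k-1}-v^{\pi_k}=-(I-\gamma P_k)^{-1}\b_{k-1}$ (which is essentially equation \ref{br2} applied to $\pi_k$), and the bookkeeping needed to see that $(1-\lambda)\gamma/(1-\lambda\gamma)\cdot(1-\lambda\gamma)=(1-\lambda)\gamma$ so that the constants line up with the factor $\beta$ in the statement.
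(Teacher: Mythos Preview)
Your proof is correct. Every step checks out: the fixed-point observation $v^{\pi_k}=\T^{\pi_k}_\lambda v^{\pi_k}$, the affine form \eqref{form3} giving $\s_k=(I-\lambda\gamma P_k)^{-1}(1-\lambda)\gamma P_k(v_{k-1}-v^{\pi_k})$, the identity $v_{k-1}-v^{\pi_k}=-(I-\gamma P_k)^{-1}\b_{k-1}$, and the final commutation and regrouping into $\beta(I-\gamma P_k)^{-1}A_k(-\b_{k-1})$.

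It is worth noting that the paper takes a slightly different route. Rather than invoking the fixed-point property of $\T^{\pi_k}_\lambda$ and comparing $v_{k-1}$ to $v^{\pi_k}$, the paper multiplies $\s_k$ on the left by $(I-\gamma P_k)$, uses $(I-\gamma P_k)v^{\pi_k}=r_k$, splits $(I-\gamma P_k)=(I-\lambda\gamma P_k)+(\lambda-1)\gamma P_k$, and reduces to $(1-\lambda)\gamma P_k(v_{k-1}-w_k)$; it then uses form \eqref{form1} to write $w_k-v_{k-1}=(I-\lambda\gamma P_k)^{-1}\b_{k-1}$. So the paper routes through $v_{k-1}-w_k$ while you route through $v_{k-1}-v^{\pi_k}$. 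Your version is a bit more conceptual (the fixed-point remark makes the structure transparent and avoids the ad hoc splitting of $I-\gamma P_k$), at the mild cost of needing the commutation argument at the end; the paper's version is purely mechanical algebra and never needs to commute the resolvents. Both are equally short and rely on exactly the same ingredients.
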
}
{\begin{lemma}
\label{lrecg}
The \bell{ }at iteration $k+1$ cannot be much lower than that at iteration~$k$:
$$
\b_{k+1}  \geq  \beta A_{k+1}\b_k + x_{k+1} 
$$
where $x_k:=(\gamma P_k-I)\epsilon_k$ only depends on the approximation error.
\end{lemma}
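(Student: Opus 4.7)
The plan is to follow the pattern just used in Lemma \ref{lbg}: expand $\b_{k+1}$ via the update rule, use a formulation of $\T^\pi_\lambda$ that is convenient for eliminating $v_k$, and identify the resulting operator with $\beta A_{k+1}$.

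First I would drop back to the simpler policy Bellman residual. Since $\pi_{k+2}$ is greedy w.r.t.\ $v_{k+1}$, we have $\T v_{k+1}=\T_{k+2}v_{k+1}\geq \T_{k+1}v_{k+1}$, so
\[
\b_{k+1}=\T v_{k+1}-v_{k+1}\geq \T_{k+1}v_{k+1}-v_{k+1}.
\]
It therefore suffices to prove $\T_{k+1}v_{k+1}-v_{k+1}\geq \beta A_{k+1}\b_k+x_{k+1}$; in fact I expect equality here.

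Next, plug in the update rule $v_{k+1}=\T^{\pi_{k+1}}_\lambda v_k+\epsilon_{k+1}$. Since $\T_{k+1}$ is affine with linear part $\gamma P_{k+1}$, the $\epsilon_{k+1}$ term produces exactly $(\gamma P_{k+1}-I)\epsilon_{k+1}=x_{k+1}$, leaving
\[
\T_{k+1}v_{k+1}-v_{k+1}=\bigl(\T_{k+1}\T^{\pi_{k+1}}_\lambda v_k-\T^{\pi_{k+1}}_\lambda v_k\bigr)+x_{k+1}.
\]
So the task reduces to showing $\T_{k+1}\T^{\pi_{k+1}}_\lambda v_k-\T^{\pi_{k+1}}_\lambda v_k=\beta A_{k+1}\b_k$.

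Writing $\pi=\pi_{k+1}$, $P=P_{k+1}$, and using formulation (\ref{form1}), $\T^{\pi}_\lambda v_k-v_k=(I-\lambda\gamma P)^{-1}(\T^{\pi}v_k-v_k)$; applying $\T^{\pi}$ adds $\gamma P$ times this difference, so
\[
\T^{\pi}\T^{\pi}_\lambda v_k-\T^{\pi}_\lambda v_k=\bigl[I+(\gamma P-I)(I-\lambda\gamma P)^{-1}\bigr](\T^{\pi}v_k-v_k).
\]
The bracket simplifies via $(I-\lambda\gamma P)+(\gamma P-I)=(1-\lambda)\gamma P$, giving $(1-\lambda)\gamma P(I-\lambda\gamma P)^{-1}=\beta(1-\lambda\gamma)P(I-\lambda\gamma P)^{-1}=\beta A_{k+1}$. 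Finally, since $\pi_{k+1}$ is greedy w.r.t.\ $v_k$, $\T^{\pi_{k+1}}v_k=\T v_k$, so $\T^{\pi}v_k-v_k=\b_k$, closing the identity.

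There is no real obstacle; the only care point is the algebraic simplification of $I+(\gamma P-I)(I-\lambda\gamma P)^{-1}$ into $\beta A_{k+1}$, and this is an immediate one-line manipulation once the right formulation of $\T^{\pi}_\lambda$ is chosen. Using (\ref{form1}) rather than (\ref{form3}) is what makes the computation short, by isolating $\T^\pi v_k-v_k=\b_k$ as a factor from the outset.
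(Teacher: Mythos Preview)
Your argument is correct. Both proofs start from the same greedy inequality $\b_{k+1}\geq \T_{k+1}v_{k+1}-v_{k+1}$, but then diverge. The paper inserts the true value $v^{\pi_{k+1}}$, writing $\T_{k+1}v_{k+1}-v_{k+1}=(\gamma P_{k+1}-I)(v_{k+1}-v^{\pi_{k+1}})=(\gamma P_{k+1}-I)(\s_{k+1}+\epsilon_{k+1})$, and then invokes Lemma~\ref{lbg} to replace $\s_{k+1}$ by $\beta(I-\gamma P_{k+1})^{-1}A_{k+1}(-\b_k)$; the factor $(I-\gamma P_{k+1})$ cancels and yields $\beta A_{k+1}\b_k$. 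You instead expand $v_{k+1}=\T^{\pi_{k+1}}_\lambda v_k+\epsilon_{k+1}$ directly and use formulation~(\ref{form1}) to factor out $\T_{k+1}v_k-v_k=\b_k$, obtaining the same operator $\beta A_{k+1}$ via the identity $I+(\gamma P-I)(I-\lambda\gamma P)^{-1}=(1-\lambda)\gamma P(I-\lambda\gamma P)^{-1}$. Your route is self-contained and never mentions $v^{\pi_{k+1}}$ or the shift $\s_{k+1}$; the paper's route reuses Lemma~\ref{lbg} and thereby ties the two lemmas together conceptually. Either way the computation is a couple of lines.
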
}
As a consequence, a lower bound of the \bell{ }is\footnote{\label{xypxpy}We use the property here that if some vectors satisfy the componentwise inequality $x \leq y$, and if $P$ is a stochastic matrix, then the componentwise inequality $Px \leq Py$ holds.}:
$$
\b_k \geq \sum_{j=k_0+1}^{k} \beta^{k-j}\left(A_{k}A_{k-1}...A_{j+1}\right) x_{j} + \beta^{k-k_0}\left(A_{k} A_{k-1}...A_{k_0+1}\right)\b_{k_0} := \underline{\b_k}
$$
where $k_0$ is some arbitrary reference index.
Using Lemma~\ref{lbg}, the bound on the \bell{ }also provides an upper on the \shift\footnote{We use the fact that $(1-\gamma)(I-\gamma P_k)^{-1}$ is a stochastic matrix (see Footnote~\ref{stocmat}) and Footnote~\ref{xypxpy}.}:
$$
\s_k \leq \beta (I-\gamma P_k)^{-1} A_k(-\underline{\b_{k-1}}) := \overline{\s_k}
$$
{\begin{lemma}
\label{lrecd}
The distance at iteration $k+1$ cannot be much greater than that at iteration~$k$:
$$
d_{k+1}  \leq  \gamma P_* d_k + y_k \\
$$
where $y_k:=\frac{\lambda\gamma}{1-\lambda\gamma} A_{k+1}(-\underline{\b_{k}})-\gamma P_*\epsilon_k$ depends on the lower bound of the \bell{ }and the approximation error.
\end{lemma}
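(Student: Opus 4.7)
The plan is to unfold $w_{k+1}$ via the form of $\T^\pi_\lambda$ given in equation \ref{form1}, exploit the greediness of $\pi_{k+1}$ to introduce the $\gamma P_*$ contraction, and then replace $b_k$ by its lower bound $\underline{b_k}$ from Lemma \ref{lrecg}.

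First I would compute $w_{k+1}$ directly: since $w_{k+1} = v_{k+1} - \epsilon_{k+1} = \T^{\pi_{k+1}}_\lambda v_k$, equation \ref{form1} gives
$$w_{k+1} = v_k + (I - \lambda\gamma P_{k+1})^{-1}(\T_{k+1} v_k - v_k) = v_k + (I - \lambda\gamma P_{k+1})^{-1} b_k.$$
Subtracting from $v_*$ yields $d_{k+1} = (v_* - v_k) - (I - \lambda\gamma P_{k+1})^{-1} b_k$.

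Next, I would introduce the $\gamma P_*$ factor by using that $\pi_{k+1} = \mathrm{greedy}(v_k)$, so $\T_* v_k \leq \T_{k+1} v_k = v_k + b_k$. Writing $v_* - v_k = \T_* v_* - v_k = \gamma P_*(v_*-v_k) + (\T_* v_k - v_k)$, this gives
$$v_* - v_k \leq \gamma P_*(v_* - v_k) + b_k.$$
Substituting into the expression for $d_{k+1}$ leaves
$$d_{k+1} \leq \gamma P_*(v_* - v_k) + b_k - (I - \lambda\gamma P_{k+1})^{-1} b_k.$$

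Then a short algebraic simplification using the Neumann-style identity $(I - \lambda\gamma P_{k+1})^{-1} - I = \lambda\gamma(I - \lambda\gamma P_{k+1})^{-1} P_{k+1}$ together with the definition of $A_{k+1}$ turns the last two terms into $-\tfrac{\lambda\gamma}{1-\lambda\gamma} A_{k+1} b_k$. Rewriting $v_* - v_k = d_k - \epsilon_k$, I get
$$d_{k+1} \leq \gamma P_* d_k - \gamma P_* \epsilon_k - \frac{\lambda\gamma}{1-\lambda\gamma} A_{k+1} b_k.$$
Finally, since $A_{k+1}$ has non-negative entries and $b_k \geq \underline{b_k}$ by Lemma \ref{lrecg}, replacing $b_k$ by $\underline{b_k}$ only weakens the bound and produces exactly $y_k = \tfrac{\lambda\gamma}{1-\lambda\gamma} A_{k+1}(-\underline{b_k}) - \gamma P_* \epsilon_k$.

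The main obstacle, such as it is, is spotting the right decomposition of $v_* - v_k$ so that the greedy inequality $\T_* v_k \leq \T_{k+1} v_k$ cleanly produces both the contracting $\gamma P_*$ factor and a leftover $b_k$ that cancels against the $(I - \lambda\gamma P_{k+1})^{-1} b_k$ coming from the $\lambda$-operator. After that, everything is bookkeeping: the Neumann identity gives the factor $\tfrac{\lambda\gamma}{1-\lambda\gamma} A_{k+1}$, and monotonicity of non-negative matrices lets one swap $b_k$ for $\underline{b_k}$.
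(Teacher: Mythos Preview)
Your argument is correct. It reaches the same inequality as the paper but via a cleaner decomposition. The paper applies $(I-\lambda\gamma P_{k+1})^{-1}$ symmetrically to both $v_*$ and $w_{k+1}$, which after some algebra leaves the identity
\[
d_{k+1}=\T_*v_*-\T_{k+1}v_k+\lambda\gamma P_{k+1}(\epsilon_k+d_{k+1}-d_k),
\]
with $d_{k+1}$ appearing on both sides; it then applies the greedy inequality to the first term and recognises $\epsilon_k+d_{k+1}-d_k=v_k-w_{k+1}=-(I-\lambda\gamma P_{k+1})^{-1}b_k$ to close the loop. You instead expand $w_{k+1}$ once from equation~\ref{form1}, bound $v_*-v_k$ directly by $\gamma P_*(v_*-v_k)+b_k$ via greediness, and use the Neumann identity $(I-\lambda\gamma P_{k+1})^{-1}-I=\tfrac{\lambda\gamma}{1-\lambda\gamma}A_{k+1}$ to collapse the residual terms. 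This avoids the implicit-$d_{k+1}$ detour and makes the appearance of the factor $\tfrac{\lambda\gamma}{1-\lambda\gamma}A_{k+1}$ more transparent; the paper's route, on the other hand, mirrors more closely the structure of the $\T_\lambda$ operator on both sides of the difference, which is perhaps conceptually suggestive but longer. Both routes rely on exactly the same two ingredients (greediness of $\pi_{k+1}$ and non-negativity of $A_{k+1}$ to pass from $b_k$ to $\underline{b_k}$), so the difference is purely organisational.
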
}
Then, an upper bound of the distance is\footnote{See Footnote~\ref{xypxpy}.}:
$$
d_k \leq \sum_{j=k_0}^{k-1} \gamma^{k-1-j}(P_*)^{k-1-j} y_{j} + \gamma^{k-k_0}(P_*)^{k-k_0}d_{k_0} = \overline{d_k}.
$$
Eventually, as 
$$
l_k = d_k+\s_k \leq \overline{d_k} + \overline{\s_k},
$$
the upper bounds on the distance and the \shift{ }enable us to derive the upper bound on the loss.

\begin{remark}
\label{inspiredbymunos}
The above derivation is a generalization of that of 
\cite{munosapi} for Approximate Policy Iteration. Note however that it is not a trivial generalization: when $\lambda=1$,
that is when both proofs coincide, $\beta=0$ and Lemmas
\ref{lbg} and~\ref{lrecg} have the following particularly simple form: $s_k=0$ and $b_{k+1} \geq x_{k+1}$.
\end{remark}

The next two sections contain our main results, which take the form of
performance bounds when using \lpi. Section~\ref{elpi} gathers
the results concerning Exact \lpi, while Section~\ref{alpi} presents those
concerning Approximate \lpi.


\section{Performance Bounds for Exact \lpi}

\label{elpi}

Consider Exact \lpi{ }for which we have $\epsilon_k=0$ for all $k$. Let $k_0$ be some arbitrary index. By exploiting the recursive relations we have described in the previous section (this process is detailed in Appendix~\ref{appelpi}), we can derive the following componentwise bounds for the loss:
{
\begin{lemma}[Componentwise Rate of Convergence of Exact $\lambda$PI]
\label{thexact}~\\
For all $k>k_0$, the following matrices
{\begin{eqnarray*}
E_{kk_0}  & := &  (1-\gamma)(P_*)^{k-k_0}(I-\gamma P_*)^{-1}, \\
E'_{kk_0}& :=& \left(\frac{1-\gamma}{\gamma^{k-k_0}}\right)\left(\frac{\lambda \gamma }{1-\lambda\gamma} \sum_{j=k_0}^{k-1}\gamma^{k-1-j}\beta^{j-k_0} (P_*)^{k-1-j}A_{j+1}A_{j}...A_{k_0+1} \right.\\
& & \hspace{5cm} \left. + \phantom{ \sum_{j=k_0}^{k-1}} \beta^{k-k_0}(I-\gamma P_k)^{-1}A_k A_{k-1}...A_{k_0+1}\right), \\
\mbox{and~~}F_{kk_0} & := &(1-\gamma) P_*^{k-k_0}+\gamma E'_{kk_0} P_*
\end{eqnarray*}}
are stochastic and the performance of the policies generated by \lpi{ }satisfies
\begin{eqnarray}
v_*-v^{\pi_k}  &\leq& \frac{\gamma^{k-k_0}}{1-\gamma}\left[ F_{kk_0}-E'_{kk_0} \right] (v_*-v_{k_0}), \label{thexacteq1}\\
v_*-v^{\pi_k}  &\leq &\frac{\gamma^{k-k_0}}{1-\gamma}\left[ E_{kk_0}-E'_{kk_0} \right] (\T_* v_{k_0}-v_{k_0}), \label{thexacteq2} \mbox{~~and}\\
v_*-v^{\pi_k} &\leq& \gamma^{k-k_0}\left[(P_*)^{k-k_0}\left((v_*-v_{k_0})-\min_s[v_*(s)-v_{k_0}(s)]e\right) + \norm{v_*(s)-v^{\pi_{k_0+1}}}{\infty} \e\right].~~~\label{thexacteq3}
\end{eqnarray}
\end{lemma}
}
In order to derive (more interpretable) span seminorms bounds from the above componentwise bound, we rely on the following lemma, which for clarity of exposition is proved in Appendix~\ref{fourlemmas}.
{
\begin{lemma}
\label{compnorm0}
If for some non-negative vectors $x$ and $y$, some constant $K \ge 0$, and some stochastic matrices $X$ and $X'$ we have
$$
x \leq K (X-X')y,
$$
Then
$$
\norm{x}{\infty} \leq K \spn{y}{\infty}.
$$
\end{lemma}
With this, the componentwise bounds of Lemma~\ref{thexact} become:
\begin{proposition}[Non-asymptotic bounds for Exact \lpi]~\\
For any $k>k_0$,
\begin{eqnarray}
\norm{v_*-v^{\pi_k}}{\infty} &\leq & \frac{\gamma^{k-k_0}}{1-\gamma} \spn{v_*-v_{k_0}}{\infty},\label{newbound0}\\
\norm{v_*-v^{\pi_k}}{\infty} &\leq & \frac{\gamma^{k-k_0}}{1-\gamma} \spn{\T_* v_{k_0}-v_{k_0}}{\infty},\label{newbound1}\\
\mbox{and~~}\norm{v_*-v^{\pi_k}}{\infty} &\leq &\gamma^{k-k_0} \left( \spn{v_*-v_{k_0}}{\infty}+\norm{v_*(s)-v^{\pi_{k_0+1}}}{\infty}\right)  \label{newbound}.
\end{eqnarray}
\end{proposition}
This \emph{non-asympotic} bound supplements the \emph{asymptotic} bound of Proposition~\ref{lpiorig} from \citet{ioffe}. Remarkably, these bounds do not depend on the value $\lambda$: whatever the value of $\lambda$, all algorithms have the same above rates. The bound of Equation~\ref{newbound0} is expressed in terms of the distance between the value function and the optimal value function at some iteration $k_0$. The second inequality, Equation~\ref{newbound1}, can be used as a stopping criterion. Indeed, taking $k=k_0+1$ it implies the following stopping condition, which generalizes that of Proposition~\ref{stopvi} about Value Iteration:
{\begin{proposition}[Stopping condition of Exact \lpi]
\label{stopexact}~\\
If at some iteration $k_0$, the value $v_{k_0}$ satisfies: 
$$
\spn{\T_* v_{k_0}-v_{k_0}}{\infty} \leq \frac{1-\gamma}{\gamma}\epsilon
$$ 
then the greedy policy $\pi_{k_0+1}$  with respect to $v_{k_0}$ is $\epsilon$-optimal: 
$\norm{v_*-v^{\pi_{k_0+1}}}{\infty} < \epsilon$.
\end{proposition}}
The last inequality described in Equation~\ref{newbound} relies on the distance between the value
function and the optimal value function and the value difference
between the optimal policy and the first greedy policy; compared to
the others, it has the advantage of not containing  a
$\frac{1}{1-\gamma}$ factor. To our knowledge, this bound is
even new for the specific cases of Value Iteration and Policy
Iteration.



\section{Performance Bounds for Approximate \lpi}

\label{alpi}

We now turn to the (slightly more involved) results on Approximate \lpi. 
We provide componentwise bounds of the loss $v_*-v^{\pi_k} \geq 0$ of using policy $\pi_k$ instead of using the optimal policy, with respect to the approximation error $\epsilon_k$, the \pbell{ }$\T_k v_k - v_k$ and the \bell{ }$\T_* v_k - v_k=\T_{k+1}v_k - v_k$. Recall the subtle difference between these two \bell s: the \pbell{ }says how much $v_k$ differs from the value of $\pi_k$ while the \bell{ }says how much $v_k$ differs from the value of the policies $\pi_{k+1}$ and $\pi_*$.

The core of our analysis is based on the following lemma:
{\begin{lemma}[Componentwise Performance bounds for App. \lpi]
\label{th}~\\
For all $k>j\ge 0$, the following matrices
{\begin{eqnarray*}
B_{jk}& := & \frac{1-\gamma}{\gamma^{k-j}}\left[ \frac{\lambda\gamma}{1-\lambda\gamma} \sum_{i=j}^{k-1}\gamma^{k-1-i}\beta^{i-j}(P_*)^{k-1-i}A_{i+1}A_{i}...A_{j+1} \right. \\
& & \hspace{5cm} \left. \phantom{\sum_{i=j}^{k-1}\gamma^{k-1-i}}+ \beta^{k-j}(I-\gamma P_k)^{-1}A_{k}A_{k-1}...A_{j+1} \right] \\
B'_{jk}& := & \gamma B_{jk} P_j + (1-\gamma)(P_*)^{k-j} \\
C_k & := & (1-\gamma)^2 (I-\gamma P_*)^{-1}\left( P_*(I-\gamma P_k)^{-1} \right) \\
C'_k & := & (1-\gamma)^2 (I-\gamma P_*)^{-1}\left(  P_{k+1}(I-\gamma P_{k+1})^{-1} \right) \\
D & := & (1-\gamma)P_*(I-\gamma P_*)^{-1} \\
D'_k & := & (1-\gamma)P_{k}(I-\gamma P_{k})^{-1}
\end{eqnarray*}}
are stochastic and
\begin{eqnarray}
\forall k_0,\mbox{~~} \ls{k}v_*-v^{\pi_k} & \leq & \frac{1}{1-\gamma} \ls{k}\sum_{j=k_0}^{k-1} \gamma^{k-j}\left[ B_{jk} - B'_{jk}\right] \epsilon_j \label{ccbbp0}, \\
\ls{k}v_*-v^{\pi_k} & \leq & \frac{\gamma}{(1-\gamma)^2}\ls{k} [C_k - C'_k] (\T_k v_k - v_k), \label{ccbbp1}  \\
\mbox{and~~}\forall k,\mbox{~~} v_*-v^{\pi_k} &\leq &\frac{\gamma}{1-\gamma}\left[ D - D'_k \right](\T_* v_{k-1} - v_{k-1}).\label{ccbbp2} 
\end{eqnarray}
\end{lemma}}
The first relation (Equation~\ref{ccbbp0}) involves the errors $(\epsilon_k$), is based on Lemmas~\ref{lbg}-\ref{lrecd} (presented in Section~\ref{resultsandproofidea}) and is proved in Appendix~\ref{appalpi}. The two other inequalities (the asymptotic performance of Approximate \lpi{ }with respect to the Bellman residuals in Equations~\ref{ccbbp1} and \ref{ccbbp2}) are somewhat simpler and are
proved independently in Appendix~\ref{appbell}.

\begin{remark}[Relation with the previous bounds of \cite{munosavi,munosapi}]
We can look at the relation between our bound for general $\lambda$ and the bounds  derived separately by Munos for Approximate Value Iteraton (Lemma~\ref{cavi}) and Approximate Policy Iteraton (Lemma~\ref{capi}):
\begin{itemize}
\item Let us first consider the case where $\lambda=0$. Then $\beta=\gamma$, $A_k=P_{k}$ and 
$$
B_{jk}=(1-\gamma)(I-\gamma P_k)^{-1}P_k P_{k-1}...P_{j+1}.
$$
Then our bound implies that $\ls{k} v_*-v^{\pi_k}$ is upper bounded by:
{
\begin{eqnarray}
& &\ls{k}  \sum_{j=k_0}^{k-1} \gamma^{k-j} \left[ (I-\gamma P_k)^{-1}P_k P_{k-1}...P_{j+1}  \phantom{\left( (P_*)^{k-j} \right)} \right.\nonumber \\
& & \hspace{4cm}- \left. \left(\gamma(I-\gamma P_k)^{-1} P_k P_{k-1}...P_{j}+(P_*)^{k-j}\right)\right]\epsilon_j.
\label{avi2}
\end{eqnarray}}
The bound derived by Munos for Approximate Value Iteration (Lemma~\ref{cavi} page \pageref{cavi}) is
{
\begin{eqnarray}
& & \hspace{-.3cm}\ls{k} (I-\gamma P_k)^{-1} \sum_{j=0}^{k-1} \gamma^{k-j} \left[ P_k P_{k-1}...P_{j+1} -  (P_*)^{k-j} \right]\epsilon_j \nonumber\\
& \hspace{-.3cm} = & \hspace{-.3cm}\ls{k} \sum_{j=0}^{k-1} \gamma^{k-j} \left[ (I-\gamma P_k)^{-1} P_k P_{k-1}...P_{j+1} -  (I-\gamma P_k)^{-1} (P_*)^{k-j} \right]\epsilon_j \nonumber\\
& \hspace{-.3cm}= & \hspace{-.3cm} \ls{k} \sum_{j=0}^{k-1} \gamma^{k-j} \left[ (I-\gamma P_k)^{-1} P_k P_{k-1}...P_{j+1}  \phantom{\left( (P_*)^{k-j} \right)}  \right. \nonumber \\
& & \hspace{4cm}- \left. \left((I-\gamma P_k)^{-1}\gamma P_k(P_*)^{k-j}+(P_*)^{k-j}\right) \right]\epsilon_j. \label{avimunos2}
\end{eqnarray}
}
The above bounds are very close to each other: we can go from Equation~\ref{avi2} to Equation~\ref{avimunos2} by replacing $P_{k-1}...P_{j}$ by $(P_*)^{k-j}$.
\item Now, when $\lambda=1$, $\beta=0$, $A_k=(1-\gamma)(I-\gamma P_k)^{-1} P_k$ and
$$
B_{jk}= (1-\gamma)(P_*)^{k-1-j} P_{j+1}(I-\gamma P_{j+1})^{-1}.
$$
Our bound is
$$
\ls{k} v_*-v^{\pi_k} \leq  \ls{k}\sum_{j=k_0}^{k-1} \gamma^{k-1-j}(P_*)^{k-1-j} u_j
$$
with 
$$
u_j:= \left[\gamma P_{j+1}(I-\gamma P_{j+1})^{-1}(I-\gamma P_j)-\gamma P_* \right]\epsilon_j.
$$
By definition of the supremum limit, for all $\epsilon>0$, there exists an index $k_1$ such that for all $j \geq k_1$, 
$$
u_j \leq \ls{l} u_l + \epsilon \e.
$$
Then:
\begin{eqnarray*}
\ls{k}\sum_{j=k_1}^{k-1} \gamma^{k-1-j}(P_*)^{k-1-j} u_j & \leq &  \ls{k}\sum_{j=k_1}^{k-1} \gamma^{k-1-j}(P_*)^{k-1-j} \left( \ls{l} u_l + \epsilon\e \right)\\
& = & (I-\gamma P_*)^{-1}\left( \ls{l} u_l + \epsilon \e\right).
\end{eqnarray*}
As this is true for all $\epsilon>0$, we eventually find the bound of Munos for Approximate Policy Iteration (Lemma~\ref{capi} page \pageref{capi}).
\end{itemize}
Thus, up to some little details, our componentwise analysis unifies
those of Munos. It is not a surprise that we fall back on the result
of Munos for Approximate Policy Iteration because, as already
mentionned in Remark~\ref{inspiredbymunos}, the proof we developed in
Section~\ref{resultsandproofidea} and Appendix~\ref{appcore} is a
generalization of his. If we don't exactly recover the componentwise
analysis of Munos for Approximate Value Iteration, this is not really
fundamental as it will not affect most of the results we derive.
\end{remark}

The componentwise bounds on the performance of Approximate \lpi{ }can be translated into span seminorm bounds, using the following Lemma (proved in Appendix~\ref{fourlemmas}):
\begin{lemma}
\label{compnorm}
Let $x_k$, $y_k$ be vectors and $X_{jk}$, $X'_{jk}$ stochastic matrices satisfying
$$
\forall k_0,\mbox{~~}\ls{k}|x_k| \leq K \ls{k}\sum_{j=k_0}^{k-1} \xi_{k-j} (X_{kj}-X'_{kj})y_j,
$$
where $(\xi_i)_{i \ge 1}$ is a sequence of non-negative weights satisfying:
$$
\sum_{i=1}^{\infty} \xi_i = K' < \infty,
$$
then, for all distribution $\mu$, 
$$
{\mu_{kj}}:=\frac{1}{2} \transp{ (X_{kj}+X'_{kj})}\mu
$$
are distributions and
$$
\ls{k}\norm{x_k}{p,\mu} \leq K K' \lim_{k_0 \rightarrow \infty} \left[\sup_{k \geq j \geq k_0}\spn{y_j}{p,\mu_{kj}}\right].
$$
\end{lemma}
Thus, using this Lemma and the fact that $\sum_{i=1}^\infty \gamma^i=\frac{\gamma}{1-\gamma}$, Lemma~\ref{th} can be turned into the following proposition that
unifies and generalizes Proposition~\ref{lpavi} (page \pageref{lpavi}) on Approximate Value Iteration  and Proposition~\ref{lpapi} (page \pageref{lpapi}) on Approximate Policy Iteration.
{\begin{proposition}[Span Seminorm Performance of Approximate $\lambda$PI (1/2)]
\label{spalpi}~\\
With the notations of Lemma~\ref{th}, for all $p$, $k > j \ge 0$ and all distribution $\mu$,
\begin{align}
\mu_{kj}&:=\frac{1}{2}\transp{\left({B_{jk}+B'_{jk}}\right)}\mu, \nonumber \\
\mu'_{kj}&:=\frac{1}{2}\transp{\left({C_{k}+C'_{k}}\right)}\mu, \mbox{~and} \nonumber\\
\mu''_{kj}&:= \frac{1}{2}\transp{\left({D+D'_{k}}\right)}\mu \nonumber
\end{align}
are distributions and the performance of the policies generated by \lpi{ }satisfies:
\begin{eqnarray*}
\ls{k} \norm{v_*-v^{\pi_k}}{p,\mu}&  \leq& \frac{\gamma}{(1-\gamma)^2} \lim_{k_0 \rightarrow \infty} \left[\sup_{k \geq j \geq k_0}\spn{\epsilon_j}{p,\mu_{kj}}\right], \\
\ls{k} \norm{v_*-v^{\pi_k}}{p,\mu}&  \leq& \frac{\gamma}{(1-\gamma)^2} \ls{k} \spn{\T_k v_k-v_k}{p,\mu'_{kj}}, \\
\forall k,\mbox{~~} \norm{v_*-v^{\pi_k}}{p,\mu} &\leq & \frac{\gamma}{1-\gamma} \spn{\T_* v_{k-1}-v_{k-1}}{p,\mu''_{kj}}.
\end{eqnarray*}
\end{proposition}}

As already mentionned, a drawback of the above \lp{ }bounds comes from the fact that the distributions involved on the right hand sides are unknown. To go round this issue, one may consider the {\bf concentration coefficient} assumption introduced by \cite{munosapi,munosavi} and already mentioned in Equation~\ref{conccoef} page \pageref{conccoef}. For clarity, we recall its definition here. We assume there exists a distribution $\nu$ and a real number $C(\nu)$ such that
$$
C(\nu):=\max_{i,j,a} \frac{p_{ij}(a)}{\nu(j)}.
$$
Then, we have the following property:
\begin{lemma}
\label{fromcomptospan}
Let $X$ be an average of products of stochastic matrices of the MDP. For any distribution $\mu$, and vector $y$ and any $p$,
$$
\spn{y}{p,\transp{X}\mu} \leq \left(C(\nu)\right)^{1/p} \spn{y}{p,\nu}.
$$
\end{lemma}
\begin{proof}
It can be seen from the definition of the concentration coefficient $C(\nu)$ that
$\transp{\mu} X   \leq  C(\nu)\transp{\nu}$.
Thus, 
\begin{eqnarray*}
\left(\spn{y}{p,\transp{X} \mu}\right)^p & =& \min_a \left(\norm{y-a\e}{p,\transp{X} \mu}\right)^p \\
& = & \min_a \transp{\mu} X |y-a\e|^p \\
& \leq & C(\nu) \min_a  \transp{\nu} |y-a\e|^p \\
& = &  C(\nu) \min_a \left(\norm{y-a\e}{p,\nu}\right)^p \\
& = & C(\nu) \left(\spn{y}{p,\nu}\right)^p. \qedhere
\end{eqnarray*}
~\vspace{-1.7cm}\\

\end{proof}

Using this Lemma, and the fact that for any $p$, $\norm{x}{\infty}=\max_\mu \norm{x}{p,\mu}$, the \lp{ }bounds of Proposition~\ref{spalpi} become
{\begin{proposition}[Span Seminorm  Performance of Approximate $\lambda$PI (2/2)]
\label{calpi}~\\
Let $C(\nu)$ be the concentration coefficient defined in Equation~\ref{conccoef}. For all $p$ and all $k$,
\begin{eqnarray*}
\ls{k} \norm{v_*-v^{\pi_k}}{\infty}&  \leq & \frac{\gamma}{(1-\gamma)^2}\left[C(\nu)\right]^{1/p} \ls{j}\spn{\epsilon_j}{p,\nu}, \\
\ls{k} \norm{v_*-v^{\pi_k}}{\infty}&  \leq& \frac{\gamma}{(1-\gamma)^2}\left[C(\nu)\right]^{1/p} \ls{k} \spn{\T_k v_k-v_k}{p,\nu}, \\
\mbox{and~~}\forall k,\mbox{~~} \norm{v_*-v^{\pi_k}}{\infty} &\leq & \frac{\gamma}{1-\gamma}\left[C(\nu)\right]^{1/p} \spn{\T_* v_{k-1}-v_{k-1}}{p,\nu}.
\end{eqnarray*}
\end{proposition}}
This results generalizes and unifies those derived for Approximate Value Iteration (Proposition~\ref{concavi} page \pageref{concavi}) and Approximate Policy Iteration (Proposition~\ref{concapi} page \pageref{concapi}). 

When comparing the specific bounds of Munos for Approximate Value
Iteration (Propositions~\ref{lpavi} and~\ref{concavi}) and Approximate
Policy Iteration (Propositions~\ref{lpapi} and~\ref{concapi}), we
wrote that the latter had the nice property that the bounds only
depend on \emph{asymptotic} errors/residuals (while the former depends
on all errors). Our bounds for \lpi{ }have this nice property too.
Considering the relations between the span seminorms and the other
standard norms (Equation~\ref{norms} page \pageref{norms}), we see
that our results are not only more general, but also slightly finer
than those of Munos.

\paragraph{When the policy or the value converges}

The  performance bounds with
respect to the approximation error can be improved if we know or
observe that the value or the policy converges. Note that the former
condition implies the latter (while the opposite is not true: the
policy may converge while the value still oscillates).
Indeed, we have the following Corollary.
\begin{corollary}[Performance of Approximate $\lambda$PI in case of convergence]
\label{valueorpolicyconvergence}
~\\
If the value converges to some $v$, then the approximation error converges to some $\epsilon$, and the corresponding greedy policy $\pi$ satisfies
\begin{eqnarray*}
\norm{v_*-v^{\pi}}{\infty} & \leq & \frac{\gamma}{1-\gamma}\left[C(\nu)\right]^{1/p} \spn{\epsilon}{p,\nu}.
\end{eqnarray*}
If the policy converges to some $\pi$, then
\begin{eqnarray*}
\norm{v_*-v^{\pi}}{\infty} & \leq & \frac{\gamma(1-\lambda\gamma)}{(1-\gamma)^2}\left[C(\nu)\right]^{1/p} \ls{j}\spn{\epsilon_j}{p,\nu}.
\end{eqnarray*}
\end{corollary}
These bounds, proved in Appendix~\ref{valuepolicyconv},
unify and extend those presented for Approximate Value Iteration
(Corollary~\ref{avibell} page \pageref{avibell}) and Approximate
Policy Iteration (Corollary~\ref{apibell} page \pageref{apibell}), in
the similar situation where the policy or the value converges.  It is
interesting to notice that in the weaker situation where only the
policy converges, the constant decreases from $\frac{1}{(1-\gamma)^2}$
to $\frac{1}{1-\gamma}$ when $\lambda$ varies from 0 to 1; in other
words, the closer to Policy Iteration, the better the bound in that
situation.


\section{Application of \lpi{ }to the Game of Tetris}

\label{thecasestudy}

In the final part of this paper, we consider (and describe for
the sake of keeping this paper self-contained) exactly the same
application (Tetris) and implementation as \cite{ioffe}.  Our motivation
for doing so is twofold:
\begin{itemize}
\item from a theoretical point of view, we show how our
analysis (made in the discounted case $\gamma<1$) can be
adapted to an undiscounted problem (where $\gamma=1$) like Tetris; 
\item  we obtain empirical results that are different (and much
less intriguing) than those of the original study. This gives us
the opportunity to describe what we think are the reasons for such a difference.
\end{itemize}
But before doing so, we begin by describing the Tetris domain.

\subsection{The Game of Tetris and its Model as an MDP}

Tetris is a popular video game created in 1985 by Alexey Pajitnov.
The game is played on a $10 \times 20$  grid 
where pieces of different shapes fall from the top (see Figure~\ref{tetris_overview}).
The player has to choose where each piece is added: he can move it
horizontally and rotate it. When a row is filled, it is removed and
all cells above it move one row downwards. The goal is to remove as many 
lines as possible before the game is over, that is when there is
not enough space remaining on the top of the pile to put the current new
piece. 
\begin{figure}[t]
  \begin{minipage}[l]{.49\linewidth}
    \begin{center}
      \includegraphics[width=7cm]{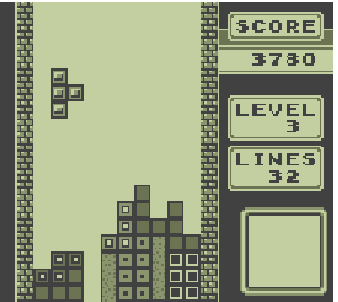} 
    \end{center}
  \end{minipage}
  \begin{minipage}[l]{.49\linewidth}
    \begin{center}
      \includegraphics[width=7cm]{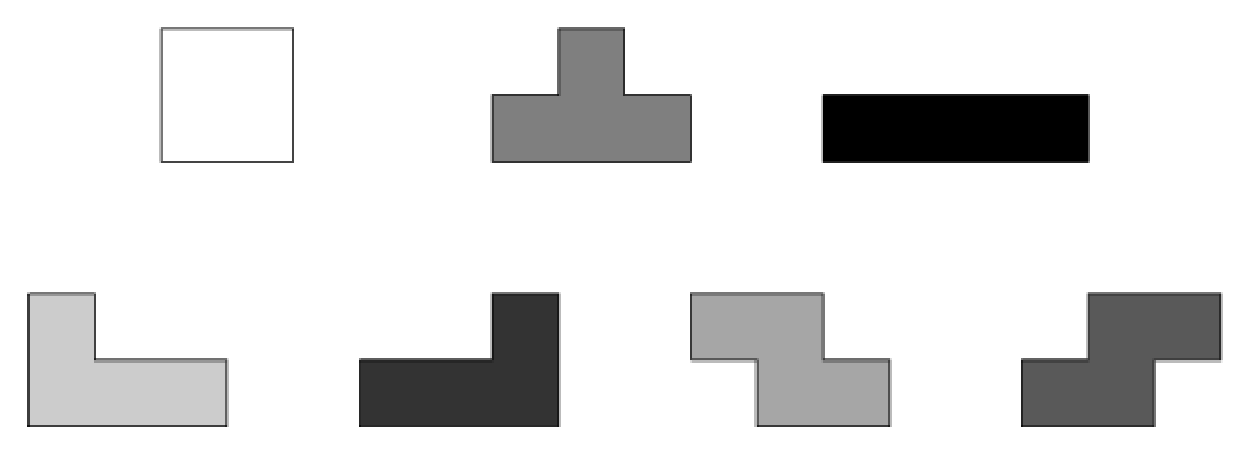} 
    \end{center}
  \end{minipage}
  \caption{\label{tetris_overview}{\bf Left:} a screenshot of a Tetris game. {\bf Right:} the seven existing shapes.}  
\end{figure}

Instead of mimicking the original game (precisely described by
\citet{Fahey}), \citet{ioffe} have focused on the main problem, that
is choosing where to drop each coming piece.  The corresponding MDP
model is straightforward: the state consists of the wall configuration
and the shape of the current falling piece. An action is the
horizontal translation and the rotation which are applied to the piece
before it is dropped on the wall. The reward is the number of lines
which are removed after we have dropped the piece.  As one considers
the maximization of the score (the total number of lines removed
during a game), the natural choice for the  discount factor is
$\gamma=1$.


In a bit more details, the dynamics of Tetris is made of two
components: the place where one drops the current piece and the choice
of a new piece. As the latter component is uncontrollable (a new piece
is chosen with uniform probability), the value functions needs not to
be computed for all wall-piece pairs configurations but only for all
wall configurations (see for instance \citep{ioffe}). Also considering
that the first component of the dynamics is deterministic, the optimal
value function satisfies a reduced form of the Bellman Equation
\begin{equation}
\label{bellmantetris}
\forall s \in S, v_*(s)= \frac{1}{|{\cal P}|}\sum_{p \in {\cal P}}  \max_{a \in A(p)} r(s,p,a) + v_*(succ(s,p,a))
\end{equation}
where $S$ is the set of wall configurations, ${\cal P}$ is the set
of pieces, $A(p)$ is the set of translation-rotation pairs that can be
applied to a piece $p$, $r(s,p,a)$ and $succ(s,p,a)$ are respectively
the number of lines removed and the (deterministic) next wall configuration if one
puts a piece $p$ on the wall $s$ in translation-orientation $a$. The
only function that satisfies the above Equation gives, for each wall
configuration $s$, the average best score that can be achieved from
$s$. If we know this function, a one step look-ahead strategy (that is a
greedy policy) performs optimally.

\paragraph{Extension of the analysis for the undiscounted optimal control problem Tetris}

As just explained, the Tetris domain has a discount factor
$\gamma$ equal to $1$, which makes it an \emph{undiscounted MDP}.  If this prevents
us from applying directly most of the analysis we have made so
far (since most of our bounds have a $(1-\gamma)$ term on the
  denominator), we briefly show in what follows how to 
adapt the analysis so that we recover a significant error analysis.

In undiscounted infinite horizon control problems, it is generally assumed that there exists a ${N+1}^{th}$ termination absorbing
state $0$.  Once the system reaches this state, it remains there forever with no further reward, that is formally:
$$
\forall a, p_{0 0}(a)=1 \mbox{ and }r(0,a,0)=0.
$$
In the case of Tetris, the termination state corresponds to ``game
over'', and the situation is particulary simple since \citet{burgiel}
has shown that, whatever the strategy, some sequence of pieces (which
necessarily occurs in finite time with probability 1) leads to
game-over whatever the decisions taken\footnote{In the literature, a
  stationary policy that reaches the terminal state in finite time
  with probability 1 is said to be \emph{proper}. The usual
  assumptions in undiscounted infinite horizon control problems are: \emph{(i)}
  there exists at least one proper policy and \emph{(ii)} for every improper
  policy $\pi$, the corresponding value equals $-\infty$ for at least
  one state. A simpler situation is when all stationary policies are
  proper. The situation of Tetris is even simpler: all \emph{non
  necessarily stationary} policies are proper.}. This implies in
particular that there exists an integer $n_0 \leq N$ and a real number
$\alpha<1$ such that for all initial distributions $\mu$, and
actions $a_0,a_1,...,a_{n_0-1}$,
\begin{equation}
\label{assumption}
P\left[ i_{n_0} \neq 0 | i_0 \sim \mu, a_0, ..., a_{n_0-1}\right] \leq \alpha.
\end{equation}
We can define the MDP model for Tetris only on the $N$ non-terminal
states, that is on $\{1,...N\}$. In this situation, for any policy
$\pi$, the matrix $P_\pi$ is in general a \emph{substochastic} matrix (a matrix of which the components are non-negative and for which the max norm is smaller than or equal to $1$),
and the above assumption means that for all set of $n_0$ policies
$\pi_1,\pi_2,\cdots, \pi_{n_0}$,
\begin{equation*}
\norm{P_{\pi_1} P_{\pi_2} \cdots  P_{\pi_{n_0}}}\infty \leq \alpha.
\end{equation*}

The componentwise analysis of \lpi{ }is here identical to what we have done before, except that we have\footnote{For simplicity in our discussion,  we consider $\lambda<1$ to avoid the special case $\lambda=1$ for which $\beta=0$ (see Equation~\ref{betadef}). The interested reader may however check that the results that we state are continuous in the neighborhood of $\lambda=1$.} $\gamma=1$ and $\beta=1$.
The matrix $A_k$ that appeared recurrently in our analysis has the following special form:
$$
A_k :=(1-\lambda)(I-\lambda P_k)^{-1} P_k. 
$$
and is a substochastic matrix.
The first bound of the componentwise analysis of \lpi{ }(Lemma~\ref{th} page \pageref{th}) can be shown to be generalized as follows (see Appendix~\ref{proofundiscounted} for details): 
\begin{lemma}[Componentwise Bounds in the Undiscounted Case]
~\label{lemmaundiscounted}\\
Assume that there exists $n_0$ and $\alpha$ such that Equation~\ref{assumption} holds. Write $\eta:=\frac{1-\lambda^{n_0}}{1-\lambda^{n_0}\alpha}$. For all $i$, write
$$
\delta_{i}:={ \alpha^{\left \lfloor \frac{i}{n_0}\right\rfloor}}\left[\left(\frac{1-\lambda^{n_0}}{1-\lambda} \right)  \left(\frac{\lambda}{1-\lambda^{n_0}\alpha} \right)\left(\frac{1-\eta^{i}}{1-\eta}\right) + \frac{n_0\eta^{i}}{1-\alpha} \right].
$$
For all $j<k$, the following matrices
\begin{eqnarray*}
{G}_{jk}& := & \frac{1}{\delta_{k-j}}\left[ \frac{\lambda}{1-\lambda} \sum_{i=j}^{k-1} (P_*)^{k-1-i}A_{i+1}A_{i}...A_{j+1}+ (I-P_k)^{-1}A_k A_{k-1}...A_{j+1} \right]\\
\mbox{and~~}{G'}_{jk}& := & \frac{1}{\delta_{k-j}} G_{jk} P_j
\end{eqnarray*}
are substochastic and
the performance of the policies generated by \lpi{ }satisfies
\begin{equation}
\label{errorfortetris}
\forall k_0, \mbox{~~}\ls{k}v_*-v^{\pi_k} \leq  \ls{k}\sum_{j=k_0}^{k-1} \delta_{k-j} \left[ {G}_{jk} - {G'}_{jk}\right] \epsilon_j.
\end{equation}
\end{lemma}
\begin{remark}
By observing that $\eta \in (0,1)$, and that for all $x \in (0,1)$, $0 \leq \frac{1-x^{n_0}}{1-x} \leq n_0$, it can be seen that the coefficients $\delta_i$ are finite for all $i$. Furthermore, when $n_0=1$ (which matches the discounted case with $\alpha=\gamma$), one can observe that $\delta_i=\frac{\gamma^i}{1-\gamma}$ and that one recovers the result of Lemma~\ref{th}.
\end{remark}

This lemma can then be exploited to show that \lpi{ } enjoys an \lp{ }norm guarantee. Indeed, an analogue of Proposition~\ref{spalpi} (whose proof is detailed in Appendix~\ref{proofundiscounted}) is the following proposition.
\begin{proposition}[\lp{ }norm Bound for in the Undiscounted Case]
\label{boundundiscounted}~\\
Let $C(\nu)$ be the concentration coefficient defined in Equation~\ref{conccoef}. Let the notations of Lemma~\ref{lemmaundiscounted} hold. 
For all distribution $\mu$ on $(1,\cdots,N)$ and $k>j\ge 0$,
$$\mu_{kj}:=\frac{1}{2}\transp{\left({{G}_{jk}+{G}'_{jk}}\right)}\mu$$ 
are non-negative vectors and 
$$\tilde \mu_{kj}:=\frac{\mu_{kj}}{\norm{\mu_{kj}}{1}}$$ are distributions on $(1,\cdots,N)$.
Then for all $p$, the loss of the policies generated by \lpi{ }satisfies
\begin{equation*}
\ls{k} \norm{v_*-v^{\pi_k}}{p,\mu}  \leq  K(\lambda,n_0) (C(\nu))^{1/p} \lim_{j \rightarrow \infty} \norm{\epsilon_j}{p,\tilde{\mu}_{kj}}
\end{equation*}
where 
\begin{align}
K(\lambda,n_0)&:= \lambda {f}(\lambda)\frac{f(1) - {f}(\eta)}{1-\eta} + f(1) {f}(\eta)-f(1), \nonumber \\
\forall x<1,~f(x)&:=\frac{(1-x^{n_0})}{(1-x)(1-x^{n_0}\alpha)},\mbox{~and by continuity~}f(1):=\frac{n_0}{1-\alpha}.\nonumber
\end{align}
\end{proposition}
\begin{remark}
\label{remarkundiscounted}
There are three differences with respect to the results we have presented for the discounted case.
\begin{enumerate}
\item The fact that we defined the model (and thus the algorithm) only on the non-terminal states $(1,\cdots,N)$ implies that there is no error incurred in the terminal state $0$. Note, however, that this is not a strong assumption since the value of the terminal state is necessarily $0$.
\item The right hand side depends on the \lp{ }norm, and not the span \lp{ }seminorm. This is due to the fact that the matrices $G_{jk}$ and $G'_{jk}$ defined above are in general \emph{substochastic} matrices (and \emph{not} stochastic matrices).
\item Eventually, the constant $K(\lambda,n_0)$ depends on $\lambda$. More precisely, it can be observed that:
$$
\lim_{\lambda \rightarrow 0}K(\lambda,n_0)=\lim_{\lambda \rightarrow 1}K(\lambda,n_0)=\frac{{n_0}^2}{(1-\alpha)^2}-\frac{n_0}{1-\alpha}
$$
and that this is the minimal value of $\lambda \mapsto K(\lambda,n_0)$. Though we took particular care in deriving this bound, we leave for future work the question whether one could prove a similar result with the constant $\frac{{n_0}^2}{(1-\alpha)^2}-\frac{n_0}{1-\alpha}$ for any $\lambda \in (0,1)$.
When $n_0=1$ (which matches the discounted case with $\alpha=\gamma$), $K(\lambda,1)$ does not depend anymore on $\lambda$ and we recover (without surprise) the bound of Proposition~\ref{spalpi}:
$$
\forall \lambda, K(\lambda,1)=\frac{\alpha}{(1-\alpha)^2}.
$$
\end{enumerate}
\end{remark}

Now that we are reassured about the fact that applying \lpi{ }approximately to Tetris is principled, 
we turn to the precise description of its actual implementation.

\subsection{An Instance of Approximate \lpi}
\label{thealg}

For large scale problems, many Approximate Dynamic Programming algorithms are based on two complementary tricks:
\begin{itemize}
\item one uses samples to approximate the expectations such as that of Equation~\ref{tempdif};
\item  one only looks for a linear approximation of the optimal value
function: $$
v^\theta(s)=\theta(0) + \sum_{k=1}^K \theta(k) \Phi_k(s)
$$
where $\theta=(\theta(0) \dots \theta(K))$ is the parameter vector and $\Phi_k(s)$ are
some predefined feature functions on
the state space. 
Thus, each value of $\theta$ characterizes a value function $v^\theta$ over the entire
state space. 
\end{itemize}
The instance of Approximate \lpi{ }of \cite{ioffe} follows these
ideas. More specifically, this algorithm is devoted to MDPs which have
a termination state, that has 0 reward and is absorbing. For this
algorithm to be run, one must further assume that all policies are
proper, which means that all policies reach the termination state with
probability one in finite time\footnote{\citet{ioffe} consider a
  weaker assumption for Exact \lpi{ }and its analysis, namely that
  there exists at least \emph{one} propoer policy. However, this
  assumption is not sufficient for their Approximate algorithm,
  because this builds sample trajectories that need to reach a
  termination state. If the terminal state were not reachable in
  finite time, this algorithm may not terminate in finite time.}.

Similarly to Exact \lpi, this Approximate \lpi{ }maintains a \emph{compact} value-policy pair $(\theta_t,\pi_t)$. 
Given $\theta_t$, $\pi_{t+1}$ is the greedy policy with respect to $v^{\theta_t}$, and can easily be computed exactly in any given state as the argmax in Equation~\ref{bellmantetris}.
This policy $\pi_{t+1}$ is used to simulate a batch of $M$ trajectories: for each trajectory $m$, $(s_{m,0}, s_{m,1}, \dots, s_{m,N_m-1},s_{m,N_m})$ denotes the sequence of states of the $m^\text{th}$ trajectory, with $s_{m,N_m}$ being the termination state. Then for approximating Equation~\ref{tempdif}, a reasonable choice for $\theta_{t+1}$ is one that satisfies:
\begin{eqnarray}
v^{ \theta_{t+1}}(s_{m,N_m}) & \simeq &  0 \label{pb1} \\
v^{ \theta_{t+1}}(s_{m,N_m-1}) & \simeq &  v^{\theta_t}(s_{m,N_m-1}) + \td_t(s_{m,N_m-1},s_{m,N_m}) \nonumber \\
v^{ \theta_{t+1}}(s_{m,N_m-2}) & \simeq &  v^{\theta_t}(s_{m,N_m-2}) + \td_t(s_{m,N_m-2},s_{m,N_m-1}) + \gamma\lambda \td_t(s_{m,N_m-1},s_{m,N_m}) \nonumber \\
\vdots & & \vdots \nonumber \\
v^{ \theta_{t+1}}(s_{m,k}) & \simeq & v^{\theta_t}(s_{m,k}) + \sum_{s=k}^{N_{m-1}}(\gamma\lambda)^{s-k} \td_t(s_{m,s},s_{m,s+1}) \nonumber \\
\vdots & & \vdots \nonumber \\
v^{ \theta_{t+1}}(s_{m,0}) & \simeq & v^{\theta_t}(s_{m,0}) + \sum_{s=0}^{N_{m-1}}(\gamma\lambda)^s \td_t(s_{m,s},s_{m,s+1}) \nonumber
\end{eqnarray}
for all trajectories $m$, where
\begin{equation}
\td_t(s_{m,N_m-1},s_{m,N_m})=r(s_{m,N_m-1},\pi_{t+1}(s_{m,N_m-1}),s_{m,N_m}) - v^{\theta_t}(s_{m,N_m-1}) \label{pb2}
\end{equation}
and for all $s<N_m-1$
$$
\td_t(s_{m,s},s_{m,s+1})=r(s_{m,s},\pi_{t+1}(s_{m,s}),s_{m,s+1}) + \gamma v^{\theta_t}(s_{m,s+1}) - v^{\theta_t}(s_{m,s})
$$
are the temporal differences. Note  that Equations~\ref{pb1} and~\ref{pb2} correspond to the terminal states for which there is no subsequent rewards.
A standard and efficient solution to this problem consists
in minimizing the least squares error, that is to choose $\theta_{t+1}$ as follows:
$$
\theta_{t+1} = \arg\min_{\bf \theta} \sum_{m=1}^{M}\sum_{k=0}^{N_m} \left( v^{\bf \theta}(s_{m,k})- v^{\theta_t}(s_{m,k}) - \sum_{j=k}^{N_{m-1}}(\gamma\lambda)^{j-k} \td_t(s_{m,j},s_{m,j+1})\right)^2.
$$
This approximate version of \lpi{ }generalizes well-known algorithms. When $\lambda=0$, the generic term  becomes a sample of $[\T^{\pi_{k+1}} v](s_{m,k})$:
\begin{align}
v^{ \theta_{t+1}}(s_{m,k}) &\simeq v^{\theta_t}(s_{m,k}) + \td_t(s_{m,k},s_{m,k+1}) \nonumber\\
&= r(s_{m,k},\pi_{t+1}(s_{m,k}),s_{m,k+1}) + \gamma v^{\theta_t}(s_{m,k+1}).\label{lpivi}
\end{align}
When $\lambda=1$, the generic term becomes the sampled discounted return from $s_{m,k}$ until the end of the trajectory: 
\begin{align}
v^{ \theta_{t+1}}(s_{m,k})  & \simeq  v^{\theta_t}(s_{m,k}) + \sum_{s=k}^{N_{m-1}}\gamma^{s-k} \td_t(s_{m,s},s_{m,s+1}) \nonumber\\
 &=  \sum_{s=k}^{N_{m-1}}\gamma^{s-k} r(s_{m,k},\pi_{t+1}(s_{m,k}),s_{m,k+1}).\label{lpipi}
\end{align}
In other words, for these limit values of $\lambda$, the algorithms
correspond to approximate versions of Value Iteration and Policy
Iteration as described by \citet{ndp}.  Also, as explained by
\citet{ioffe} and already mentioned in the introduction, the
TD($\lambda$) algorithm with linear features described
by \citet[chapter 8.2]{sutton} matches the algorithm we have just described when
the above fitting problem is \emph{approximated} using gradient
iterations after each sample.

We follow the same protocol as originally proposed by
\cite{ioffe}.  Let $w=10$ be the width of the board. We consider
approximating the value function as a linear combination of $2w+1=21$ feature functions:
$$
V^\theta(s)=\theta(0) + \sum_{k=1}^w \theta(k)h_k + \sum_{k=1}^{w-1} \theta(k+w)\Delta h_k + \theta(2w)H + \theta(2w+1)L
$$
where:
\begin{itemize}
\item for all $k \in \{1,2,\cdots,w\}$, $h_k$ is the \emph{height} of the $k^\text{th}$ column of the wall; 
\item for all $k \in \{1,2,\cdots,w-1\}$, $\Delta h_k$ is the \emph{height difference} $|h_k-h_{k+1}|$ between columns $k$ and $k+1$;
\item $H$ is the \emph{maximum wall height} $\max_k h_k$;
\item $L$ is the number of \emph{holes} (the number of empty cells covered by at least one full cell).
\end{itemize}

We started our experiments with the initial following vector: $r(2w)=-10$, $r(2w+1)=-1$ and $r(k)=0$ for all $k<2w$, so that the initial greedy policy scores in the low tens (\cite{ioffe}). We used $M=100$ training games for each policy update.
As $\lambda PI$ is a stochastic algorithm, we ran each experiment 10 times.
\begin{figure}[t]

\begin{minipage}[b]{.49\linewidth}
\begin{center}
  \includegraphics[width=7cm]{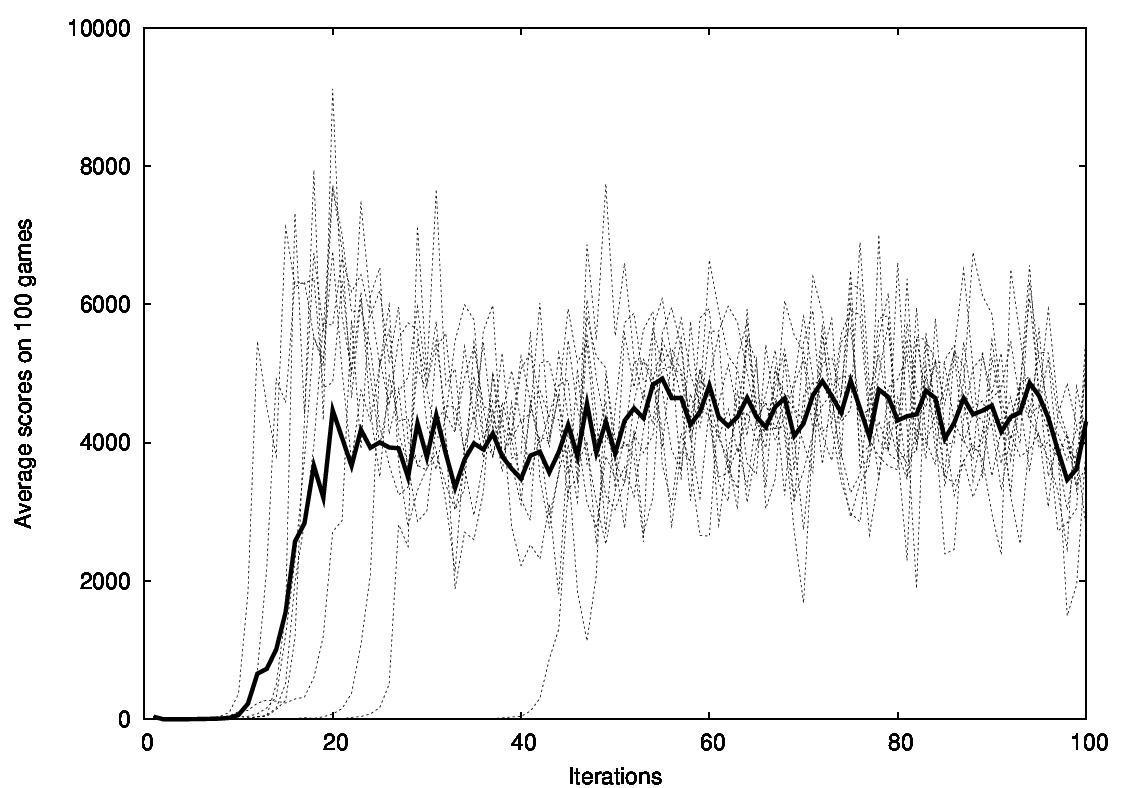}
\end{center}
\end{minipage}	
\begin{minipage}[b]{.49\linewidth}
\begin{center}
  \includegraphics[width=7cm]{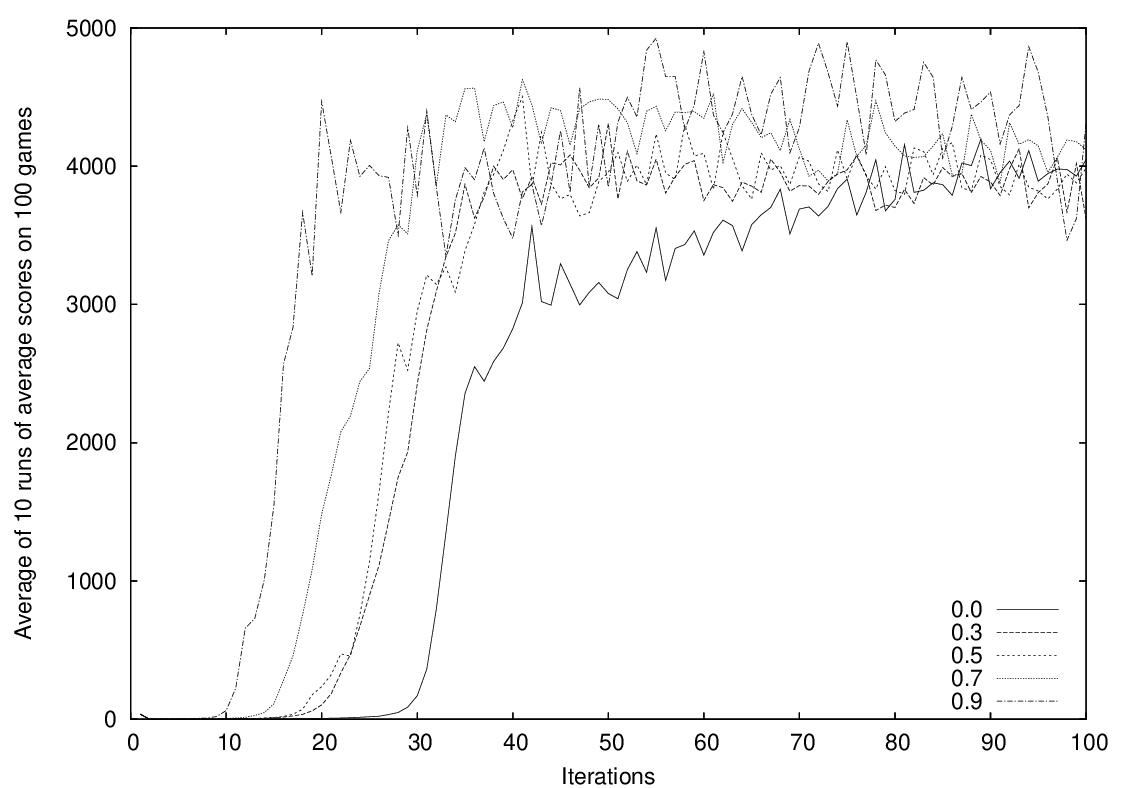}
\end{center}
\end{minipage}	
\caption{Average Score versus the number of iterations {\bf Left}: 10 runs of $\lambda PI$ with $\lambda=0.9$. Each point of each run is the average score computed with $M=100$ games. The dark curve is a pointwise average of the 10 runs. {\bf Right}: Pointwise average of 10 runs of $\lambda PI$ for different values of $\lambda$; the curve which appears to be the best ($\lambda=0.9$) is the same as the bold curve of the left graph. \label{exp1}}
\end{figure}
Figure~\ref{exp1} displays the learning curves. The left
graph shows the 10 runs of $\lambda PI$ (each point is the average
score computed with the $M=100$ games) and the corresponding
pointwise average for a single value of $\lambda$, while the right
graph shows such pointwise average curves for different values of
$\lambda$: 0.0, 0.3, 0.5, 0.7 and 0.9. We chose to display on the
left graph the runs corresponding to the value of $\lambda=0.9$ that
seemed to be the best on the right graph.

We can make the following observations.
\begin{itemize}
\item Though we initialized with not so bad a policy (the first value
  is around 30), the performance first drops to 0 and it \emph{really}
  starts improving after a few iterations (typically arount ten). This
  is due to the fact that the initial value function is really bad:
  with the given parameters, the initial value is everywhere negative
  although it is clear that the optimal value function (the average
  best score) is everywhere positive. Further experiments showed that
  the overall behaviour of the algorithm was not affected by the
  weight initialization.

\item The rise of performance globally happens sooner for larger
  values of $\lambda$, that is for values that makes the algorithm
  closer to Policy Iteration. This is not surprising as it complies
  with the fact that $\lambda$ modulates the speed at which the value
  estimate tracks the real value of the current policy. However, the
  performance did not rise for $\lambda=1$ (when it is equivalent to
  Approximate Policy Iteration), and this is probably due to the fact
  that the variance of the value update is too high.

\item Quantitatively, the scores reach an overall level of 4000 lines
  per games for a big range of values of $\lambda$.

\end{itemize}

The empirical results we have just described qualitatively and
quantitatively differ from the ones that were originally published in
\cite{ioffe}, even though it is the exact same experimental
setup. About their results, the authors wrote: ``\emph{An interesting
  and somewhat paradoxical observation is that a high performance is
  achieved after relatively few policy iterations, but the performance
  gradually drops significantly. We have no explanation for this
  intriguing phenomenon, which occurred with all of the successful
  methods that we tried}''.  As we explain now, we believe that the
``intriguing'' character of the results of \cite{ioffe} might be
related to a subtle implementation difference. Indeed, we can reproduce learning curves that are similar to those of
\cite{ioffe} with a little modification in our implementation of
$\lambda PI$, that removes the special treatments for the terminal
states done through Equations~\ref{pb1} and~\ref{pb2}.  More
precisely, if we replace them by the following Equations:
\begin{equation}
\label{mod1}
v^{ \theta_{t+1}}(s_{m,N_m}) \simeq v^{\theta_t}(s_{m,N_m})
\end{equation}
\vspace{-.5cm}
\begin{equation}
\label{mod2}
\td_t(s_{m,N_m-1},s_{m,N_m})=r(s_{m,N_m-1},\pi_{t+1}(s_{m,N_m-1})) + \gamma v^{\theta_t}(s_{m,N_m})  - v^{\theta_t}(s_{m,N_m-1})
\end{equation}
that is if we replace the terminal value $0$ by the value $V^{\theta_t}(s_{m,N_m})$ which is computed through the features of the terminal wall configuration $s_{m,N_m}$, then we
get the performance shown in Figure~\ref{expbi}.
\begin{figure}[t]
\begin{minipage}[b]{.49\linewidth}
\begin{center}
  \includegraphics[width=7cm]{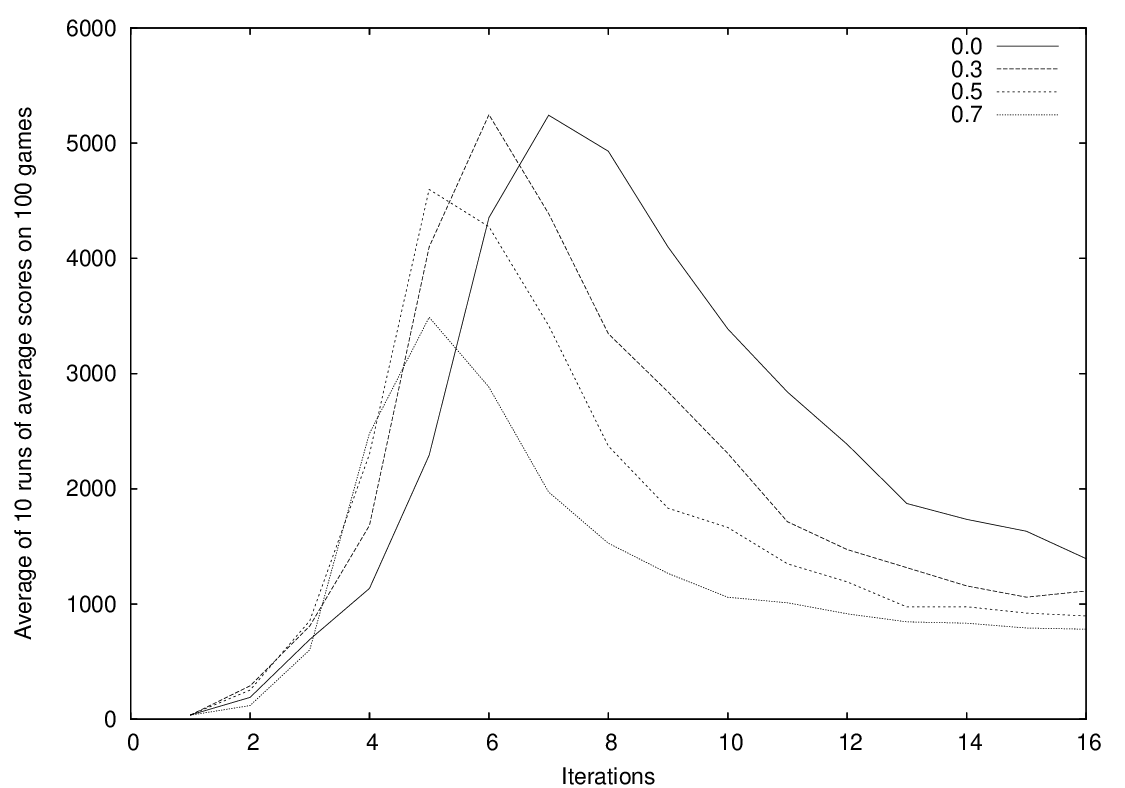}
\end{center}
\end{minipage}	
\begin{minipage}[b]{.49\linewidth}
\begin{center}
  \includegraphics[width=6cm]{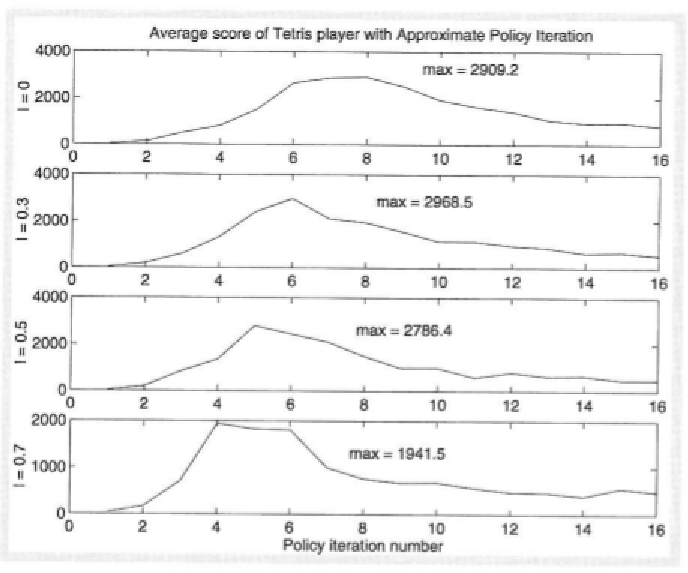}
\end{center}
\end{minipage}
\caption{{\bf Left}: Average score versus the number of iterations of
  $\lambda PI$, \emph{modified} so that it resembles the results of
  \cite{ioffe} (see text for details).  {\bf Right}: A scan of the
  learning curves obtained by \citet{ioffe}.
\label{expbi}}
\end{figure}
This figure shows the performance with respect to the
iterations. 
We observe that the performance
evolution  qualitatively matches the performance
curves published in \cite{ioffe} and illustrates the above quotation
describing the ``intriguing phenomenon''\footnote{The watchful reader may have noticed that the
performance that we obtain is about twice that of \citet{ioffe}. A
close inspection of the Tetris domain description in \citep{ioffe}
shows that the authors consider the game of Tetris on a $10 \times 19$
board instead of our $10 \times 20$ setting, and as argued in a recent review
on Tetris \citep{Thiery:2009}, this small difference is sufficient for
   explaining such a big performance difference.}. 

In such a modified form, the approximate $\lambda PI$ algorithm makes much less sense: in particular, 
it is not true anymore that it reduces to approximate Value Iteration and approximate Policy Iteration when $\lambda=0$ and $\lambda=1$ respectively: Equations~\ref{mod1} and~\ref{mod2} induce a bias so that we cannot recover the identities of Equations~\ref{lpivi} and~\ref{lpipi}. 
A closer examination of these experiments showed that the weights $(\theta_k)$ were diverging.
This is not a surprise, since the use of Equations~\ref{mod1} and~\ref{mod2} violates the condition expressed in Remark~\ref{remarkundiscounted}-1 that there should be no error in the terminal state.

\section{Conclusion and Future Work}

We have considered the \lpi{ }algorithm introduced by \citet{ioffe}
that generalizes the standard algorithms Value Iteration and Policy
Iteration. We have reviewed some results by \citet{puterman},
\citet{ndp} and \cite{munosapi,munosavi}, concerning the rate of
convergence and some performance bounds of these standard algorithms
in the exact and approximate cases. We have extended these results to
\lpi{ }and derived convergence rates and performance bound for this algorithm, some of which
are to our knowledge even new in the cases where \lpi{ }reduces to
Value Iteration and Policy Iteration, notably the bound~\ref{newbound} (page \pageref{newbound}).
 Not only does our analysis
generalize previous results, but it improves them in two ways:
\begin{itemize}
\item  as suggested by the results of \citet{puterman} in the exact case, the
use of the span seminorm has enabled us to derive tighter bounds;
\item our analysis of Approximate \lpi{ }relates the asymptotic
  performance of the algorithm to the \emph{asymptotic}
  errors/residuals instead of a uniform bound of the errors/residuals
  and this might be of practical interest\footnote{Recently and
    independently, \citet{Farahmand:2010} derived bounds that have a
    similar flavour: they highlight the fact that the errors that have
    the more weight on the performance bounds are the latest.}.
\end{itemize}
More generally, we believe that an important contribution of this
paper is of conceptual nature: we provide a unified vision of some of
the main Approximate Dynamic Programming algorithms and their
analyses; in particular, we hope that the new proof technique that is
detailed in the appendices --- especially the different objects that are
defined in our proof overview in Section~\ref{proofoverview} --- shall be
useful for further studies. 

As we mentionned earlier, \citet{munosavi} introduced some
concentration coefficients that are finer than the one we used
throughout the paper.  In the same spirit, \citet{Farahmand:2010}
recently revisited the error propagation of \citet{munosavi,munosapi}
and improved (among other things) the constant in the bound related to
these concentration coefficients.  A natural track would be to adapt
these refinements to the analysis of \lpi. This does not look completely
trivial since the componentwise analysis we derived for \lpi{ }is
significanlty more intricate than the ones we find in the specific
limit cases $\lambda=0$ (Value Iteration) and $\lambda=1$ (Policy
Iteration).

Another potential direction would be to study the implications of the
choice of the parameter $\lambda$, as for instance is done by
\cite{singh} for the value estimation problem. On this matter, the
original analysis by \cite{ioffe} shows how one can concretely
implement Exact \lpi. Each iteration requires the computation of the
fixed point of the $\beta$-contracting operator $M_k$ (see Equation
\ref{mdef2} page \pageref{mdef2}). We plan to study the tradeoff
between the ease for computing this fixed point (the smaller $\beta$
the faster) and the time for \lpi{ }to converge to the optimal policy
(the bigger $\beta$ the faster). In parallel, the reader might have
noticed that most of the bounds we have provided do not depend on
$\lambda$. An interesting question is whether the finer concentration
coefficients of \citet{munosavi} and \citet{Farahmand:2010} we have
just discussed may help keeping track of the influence of $\lambda$ on
the performance of the exact or approximate algorithm. In general,
it would be interesting if we could tie a choice of $\lambda$ to some intrisic characterics of the
 MDP, like for instance its smoothness.

Last but not least, we should insist on the fact that the
implementation that we have described in Section~\ref{thealg}, and
which is borrowed from \cite{ioffe}, is just one possible instance of
\lpi. In the case of linear approximation architectures,
\citet{Thiery:2010} have proposed an implementation of \lpi{ }that is
based on LSPI \citep{Lagoudakis:2003}, in which the fixed point of
$M_k$ is approximated using LSTD(0) \citep{Bradtke:1996}. Recently,
\cite{Bertsekas:2011} proposed to compute this
very fixed point with a variation of LSPE($\lambda'$) \citep{ioffe,Nedic:2003} for some
$\lambda'$ potentially different from $\lambda$.  Because of their
very close structure, any existing implementation of Approximate
Policy Iteration may probably be turned into some implementation of
\lpi. Proposing such implementations and assessing their relative
merits constitutes interesting future research. This may in particular
be done through some finite sample analysis, as had recently been done
for Approximate Value Iteration and Policy Iteration implementations
\citep{Antos:2007a,Antos:2008,Munos:2008,Lazaric:2010}.


\bibliography{biblio}
















\vspace{2cm}

\appendix

\begin{appendix}

\addcontentsline{toc}{section}{Appendix}
\begin{center}
{\huge \bf Appendices}
\end{center}

The following Appendices contains all the proofs concerning the
analysis of \lpi. We write $P_k=P^{\pi_k}$ the stochastic matrix
corresponding to the policy $\pi_k$ which is greedy with respect to
$v_{k-1}$, $P_*$ the stochastic matrix corresponding to the optimal
policy $\pi_*$. Similarly we write $\T_k$ and $\T_*$ the associated
Bellman operators.

The proof techniques we have developped are  inspired by those
of Munos in the articles \citep{munosapi,munosavi}. Most of the
inequalities appear from the definition of the greedy operator:
$$
\pi = \mbox{greedy}(v) \Leftrightarrow \forall \pi', \T^{\pi'}v \leq \T^{\pi}v.
$$
We often use the property that an average of stochastic matrices is also a stochastic matrix. A recurrent instance of this property is: if $P$ is some stochastic matrix, then the geometric average
$$
(1-\alpha)\sum_{i=0}^{\infty}(\alpha P)^i=(1-\alpha)(I-\alpha P)^{-1}
$$
with $0 \leq \alpha < 1$ is also a stochastic matrix.
We use the property that if some vectors $x$ and $y$ are such that $x \leq y$, then $P x \leq P y$ for any stochastic matrix $P$.
Eventually, we will use the following equivalent forms of the operator $\T^{\pi}_\lambda$ (three of them were introduced in page~\pageref{form1}): for any value $v$ and any policy $\pi$, we have
\begin{eqnarray}
\T^{\pi}_\lambda v & := & v + (I-\lambda\gamma P^{\pi})^{-1}(\T^{\pi}v - v) \label{appform1} \\
& = & (I-\lambda\gamma P^{\pi})^{-1}( \T^\pi v - \lambda\gamma P^\pi v ) \label{appform2}  \\
& = & (I-\lambda\gamma P^{\pi})^{-1}(r^{\pi}+ (1-\lambda)\gamma P^{\pi} v ) \label{appform3}  \\
& = & (I-\lambda\gamma P^{\pi})^{-1}(\lambda r^{\pi}+ (1-\lambda)\T^{\pi} v ). \label{appform4}
\end{eqnarray}


\section{Proofs of Lemmas~\ref{lbg}-\ref{lrecd} (core lemmas of the error propagation)}

\label{appcore}

In this section, we prove the series of Lemmas that are at the heart of our analysis of the error propagation
of \lpi.

\subsection{Proof of Lemma~\ref{lbg} (a relation between the \shift{ }and the \bell)}

Using the definition of $w_k=\T_\lambda^{\pi_k}v_{k-1}$ and the formulation of Equation~\ref{appform3}, we can see that we have:
\begin{eqnarray*}
(I-\gamma P_k)\s_k & = & (I-\gamma P_k)(w_k-v^{\pi_k}) \\
& = & (I-\gamma P_k)w_k-r_k \\
& = & (I-\lambda\gamma P_k+\lambda\gamma P_k-\gamma P_k)w_k-r_k \\
& = & (I-\lambda\gamma P_k)w_k + (\lambda\gamma P_k-\gamma P_k)w_k-r_k \\
& = & r_k+(1-\lambda)\gamma P_k v_{k-1} + (\lambda-1)\gamma P_k w_k - r_k \\
& = & (1-\lambda)\gamma P_k(v_{k-1}-w_k)\\
& = & (1-\lambda)\gamma P_k(I-\lambda\gamma P_k)^{-1}(v_{k-1}-\T_k v_{k-1})\\
& = & (1-\lambda)\gamma P_k(I-\lambda\gamma P_k)^{-1}(-\b_{k-1}).
\end{eqnarray*}
Therefore
$$
\s_k =  \beta (I-\gamma P_k)^{-1} A_k (-\b_{k-1})
$$
with 
$$
A_k:=(1-\lambda\gamma)P_{k}(I-\lambda\gamma P_k)^{-1}. 
$$

Suppose that we have a lower bound of the \bell: $\b_{k-1} \geq \underline{\b_{k-1}}$ (we shall derive one soon). Since $(I-\gamma P_k)^{-1}A_k$ only has non-negative elements then
$$
\s_k \leq \beta(I-\gamma P_k)^{-1}A_k (-\underline{\b_{k-1}}):=\overline{\s_k}.
$$

\subsection{Proof of Lemma~\ref{lrecg} (a lower bound of the \bell)}

From the definition of the algorithm, and using the fact that $\T_k v^{\pi_k}=v^{\pi_k}$, we see that:
\begin{eqnarray}
\b_{k} & = & \T_{k+1} v_k - v_k  \nonumber\\
& = & \T_{k+1} v_k - \T_k v_k + \T_k v_k - v_k \nonumber\\
& \geq & \T_k v_k - v_k \nonumber\\
& = & \T_k v_k - \T_k v^{\pi_k} + v^{\pi_k} - v_k \nonumber\\
& = & \gamma P_k(v_k - v^{\pi_k}) + v^{\pi_k} - v_k \nonumber\\
& = & (\gamma P_k-I)(\s_k+\epsilon_k).\nonumber \\
& = & \beta A_k \b_{k-1} + (\gamma P_k-I)\epsilon_k
\end{eqnarray}
where we eventually used the relation between $\s_k$ and $\b_k$ (Lemma~\ref{lbg}).
In other words:
$$
\b_{k+1} \geq \beta A_{k+1}\b_k+x_{k+1}
$$
with
$$
x_k:=(\gamma P_k-I)\epsilon_k. 
$$
Since $A_k$ is a stochastic matrix and $\beta \geq 0$, we get by induction:
$$
\b_k \geq \sum_{j=k_0+1}^{k} \beta^{k-j}\left(A_{k}A_{k-1}...A_{j+1}\right) x_{j} + \beta^{k-k_0}\left(A_{k} A_{k-1}...A_{k_0+1}\right)\b_{k_0} := \underline{\b_k}. 
$$

\subsection{Proof of Lemma~\ref{lrecd} (an upper bound of the distance)}

Given that $\T_*v_*=v_*$, we have
\begin{eqnarray*}
v_*& = & v_*+(I-\lambda\gamma P_{k+1})^{-1}(\T_*v_*-v_*) \\
   & = & (I-\lambda\gamma P_{k+1})^{-1}(\T_*v_*-\lambda\gamma P_{k+1}v_*).
\end{eqnarray*}
Using the definition of $w_{k+1}=\T_\lambda^{\pi_{k+1}}v_{k}$ and the formulation of Equation~\ref{appform2}, one can see that the distance satisfies:
\begin{eqnarray*}
d_{k+1} & = & v_*-w_{k+1} \\
& = & (I-\lambda\gamma P_{k+1})^{-1}[(\T_*v_*-\lambda\gamma P_{k+1}v_*)-(\T_{k+1}v_k-\lambda\gamma P_{k+1}v_k)] \\
& = & (I-\lambda\gamma P_{k+1})^{-1}[\T_*v_*-\T_{k+1}v_k+\lambda\gamma P_{k+1}(v_k-v_*)]\\
& = & \lambda\gamma P_{k+1}d_{k+1} + \T_*v_*-\T_{k+1}v_k+\lambda\gamma P_{k+1}(v_k-v_*) \\
& = & \lambda\gamma P_{k+1}d_{k+1} + \T_*v_*-\T_{k+1}v_k+\lambda\gamma P_{k+1}(w_k+\epsilon_k-v_*) \\
& = & \lambda\gamma P_{k+1}d_{k+1} +  \T_* v_*-\T_{k+1} v_k + \lambda\gamma P_{k+1} (\epsilon_k-d_k) \\
& = &  \T_* v_*-\T_{k+1} v_k + \lambda\gamma P_{k+1}(\epsilon_k+d_{k+1}-d_k).
\end{eqnarray*}
Since $\pi^{k+1}$ is greedy with respect to $v_k$, we have $\T_{k+1} v_k \geq \T_*v_k$ and therefore:
\begin{eqnarray*}
\T_* v_*-\T_{k+1} v_k & = &  \T_*  v_* -\T_*v_k + \T_*v_k - \T_{k+1} v_k \\
& \leq &  \T_* v_* -\T_*v_k \\
& = & \gamma P_* (v_*-v_k) \\
& = & \gamma P_* (v_*-(w_k+\epsilon_k)) \\
& = & \gamma P_* d_k-\gamma P_*\epsilon_k.
\end{eqnarray*}
As a consequence, the distance satisfies:
$$
d_{k+1} \leq  \gamma P_* d_k + \lambda\gamma P_{k+1}(\epsilon_{k} + d_{k+1}-d_k)-\gamma P_*\epsilon_k.
$$
Noticing that:
\begin{eqnarray*}
\epsilon_{k}+d_{k+1}-d_k & = &\epsilon_k+w_k-w_{k+1}\\
& =& v_k-w_{k+1}\\
& = & -(I-\lambda\gamma P_{k+1})^{-1}(\T_{k+1}v_k-v_k) \\
& =& (I-\lambda\gamma P_{k+1})^{-1}(-\b_{k}) \\
& \leq& (I-\lambda\gamma P_{k+1})^{-1}(- \underline{\b_{k}}),
\end{eqnarray*}
we get:
$$
d_{k+1} \leq \gamma P_* d_k + y_k
$$
where 
$$
y_k:=\frac{\lambda\gamma}{1-\lambda\gamma}A_{k+1}(-\underline{\b_{k}})-\gamma P_*\epsilon_k. 
$$
Since $P_*$ is a stochastic matrix and $\gamma \geq 0$, we have by induction:
$$
d_k \leq \sum_{j=k_0}^{k-1} \gamma^{k-1-j}(P_*)^{k-1-j} y_{j} + \gamma^{k-k_0}(P_*)^{k-k_0}d_{k_0} = \overline{d_k}.
$$

%



\section{Proofs of Lemma~\ref{thexact} (performance of Exact \lpi)}

\label{appelpi}

We here derive the convergence rate bounds for Exact \lpi{ }(as expressed in Lemma~\ref{thexact} page \pageref{thexact}).
We rely on the loss bound analysis of Appendix~\ref{appcore} with $\epsilon_k=0$.
In this specific case, we know that the loss $l_k \leq \overline{d_k}+\overline{\s_k}$ where
\begin{eqnarray*}
-\underline{\b_k} & = & \beta^{k-k_0}A_{k}A_{k-1}...A_{k_0+1}(-\b_{k_0}), \\
\overline{d_k} & = & \frac{\lambda \gamma}{1-\lambda\gamma} \sum_{j=k_0}^{k-1}\gamma^{k-1-j}(P_*)^{k-1-j}A_{j+1}(-\underline{\b_{j}}) + \gamma^{k-k_0}(P_*)^{k-k_0}d_{k_0}, \\
\mbox{and~~}\overline{\s_k} & = & \beta(I-\gamma P_k)^{-1}A_k (-\underline{\b_{k-1}}).
\end{eqnarray*}
Introducing the following stochastic matrices:
\begin{align}
X_{i,j,k}&:=(P_*)^{k-1-i}A_{i+1}A_{i}...A_{j+1} \nonumber\\
\mbox{and~~}Y_{j,k}&:= (1-\gamma)(I-\gamma P_k)^{-1}A_k A_{k-1}...A_{j+1}, \nonumber
\end{align}
we have
$$
\overline{d_k} = \frac{\lambda \gamma}{1-\lambda\gamma} \sum_{j=k_0}^{k-1}\gamma^{k-1-j}\beta^{j-k_0}X_{j,k_0,k}(-\b_{k_0}) + \gamma^{k-k_0}(P_*)^{k-k_0}d_{k_0} 
$$
and 
$$
\overline{\s_k} = \frac{\beta^{k-k_0}}{1-\gamma}Y_{k_0,k}(-\b_{k_0}).
$$
Therefore the loss satisfies:
\begin{align}
l_k &\le \overline{d_k}+\overline{\s_k} \nonumber\\
& \le \left(\frac{\gamma^{k-k_0}}{1-\gamma}\right)E'_{kk_0}(-\b_{k_0}) + \gamma^{k-k_0}(P_*)^{k-k_0}d_{k_0}\label{l_ref}
\end{align}
with
$$
E'_{kk_0}:=\left(\frac{1-\gamma}{\gamma^{k-k_0}}\right)\left(\frac{\lambda \gamma }{1-\lambda\gamma} \sum_{j=k_0}^{k-1}\gamma^{k-1-j}\beta^{j-k_0} X_{j,{k_0},k} + \frac{\beta^{k-k_0}}{1-\gamma}Y_{{k_0},k}\right).
$$
To end the proof, we simply need to prove the following lemma:
{\begin{lemma}
$E'_{kk_0}$ is a stochastic matrix.
\end{lemma}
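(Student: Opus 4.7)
The plan is to show that $E'_{kk_0}$ is a convex combination of stochastic matrices. Each building block is either a product of stochastic matrices, or $(1-\gamma)(I-\gamma P_k)^{-1}$ times such a product, both of which are stochastic (the latter because $(1-\gamma)(I-\gamma P_k)^{-1} = \sum_{i=0}^\infty (1-\gamma)(\gamma P_k)^i$ is a geometric average of stochastic matrices, as recalled at the start of the Appendix). So what remains is to verify that the scalar coefficients are nonnegative and sum to one.

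First I would isolate the two families of stochastic matrices appearing in $E'_{kk_0}$: for each $j \in \{k_0,\ldots,k-1\}$ the matrix $S_j := (P_*)^{k-1-j}A_{j+1}A_j\cdots A_{k_0+1}$, and the matrix $S := (1-\gamma)(I-\gamma P_k)^{-1}A_k A_{k-1}\cdots A_{k_0+1}$. Then I would rewrite
\begin{equation*}
E'_{kk_0} \;=\; \sum_{j=k_0}^{k-1} \alpha_j\, S_j \;+\; \alpha\, S,
\end{equation*}
where
\begin{equation*}
\alpha_j \;=\; \frac{(1-\gamma)\lambda\gamma}{(1-\lambda\gamma)\gamma^{k-k_0}}\,\gamma^{k-1-j}\beta^{j-k_0}, \qquad \alpha \;=\; \Bigl(\frac{\beta}{\gamma}\Bigr)^{k-k_0}.
\end{equation*}
All these coefficients are nonnegative since $0 \le \beta \le \gamma < 1$ and $0 < \lambda < 1$.

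The key computation is that $\sum_j \alpha_j + \alpha = 1$. The trick is to introduce $\rho := \beta/\gamma = (1-\lambda)/(1-\lambda\gamma)$, so that $\alpha_j$ rewrites as $\frac{(1-\gamma)\lambda}{1-\lambda\gamma}\,\rho^{\,j-k_0}$ after dividing $\gamma^{k-1-j}$ by $\gamma^{k-k_0}$, and $\alpha = \rho^{k-k_0}$. A direct geometric sum then yields
\begin{equation*}
\sum_{j=k_0}^{k-1}\alpha_j \;=\; \frac{(1-\gamma)\lambda}{1-\lambda\gamma}\cdot\frac{1-\rho^{k-k_0}}{1-\rho},
\end{equation*}
and using $1-\rho = \lambda(1-\gamma)/(1-\lambda\gamma)$ this simplifies to $1-\rho^{k-k_0}$, so the total is $1$.

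The main obstacle, if any, is purely the bookkeeping of the three nested factors $\beta^{j-k_0}$, $\gamma^{k-1-j}$ and $\gamma^{-(k-k_0)}$; the substitution $\rho = \beta/\gamma$ absorbs the $\gamma$-powers cleanly and reduces the identity to a telescoping geometric series. Once the coefficients are shown to form a probability distribution, $E'_{kk_0}$ is a convex combination of stochastic matrices and hence itself stochastic.
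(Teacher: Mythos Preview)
Your proof is correct and follows essentially the same route as the paper: both arguments identify $E'_{kk_0}$ as a nonnegative-weighted combination of stochastic matrices and verify that the weights sum to $1$. The only difference is cosmetic algebra---the paper sums the geometric series $\sum_{j}\gamma^{k-1-j}\beta^{j-k_0}$ directly as $(\gamma^{k-k_0}-\beta^{k-k_0})/(\gamma-\beta)$ and then invokes the identities $\frac{\lambda\gamma}{\gamma-\beta}=\frac{1}{1-\beta}$ and $(1-\beta)(1-\lambda\gamma)=1-\gamma$, whereas you first divide through by $\gamma^{k-k_0}$ and work with $\rho=\beta/\gamma$; the two simplifications are equivalent.
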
}
\begin{proof}
It is clear from the definition of $X_{i,j,k}$ and $Y_{j,k}$ that normalizing $E'_{kk_0} $ gives stochastic matrices. So we just need to check that their max norm is 1.
\begin{eqnarray*}
\|E'_{kk_0}\|_\infty &= & \frac{1-\gamma}{\gamma^{k-k_0}} \left( \frac{\lambda \gamma }{1-\lambda\gamma} \sum_{j=k_0}^{k-1}\gamma^{k-1-j}\beta^{j-k_0} + \frac{\beta^{k-k_0}}{1-\gamma} \right) \\
& = & \frac{1-\gamma}{\gamma^{k-k_0}} \left(\frac{\lambda \gamma }{1-\lambda\gamma}\frac{\gamma^{k-k_0}-\beta^{k-k_0}}{\gamma-\beta} + \frac{\beta^{k-k_0}}{1-\gamma} \right)\\
& = & \frac{1-\gamma}{\gamma^{k-k_0}} \left( \frac{\gamma^{k-k_0}-\beta^{k-k_0}}{1-\gamma}+\frac{\beta^{k-k_0}}{1-\gamma} \right)\\
& = & 1
\end{eqnarray*}
where we used the facts that $\frac{\lambda\gamma}{\gamma-\beta}=\frac{1}{1-\beta}$ and $(1-\beta)(1-\lambda\gamma)=1-\gamma$.  
\end{proof}

\subsection{Proof of Equation~\ref{thexacteq2} (a bound with respect to the \bell)}

We first need the following lemma:
{\begin{lemma}
\label{dg}
The bias and the distance are related as follows:
$$
\b_{k} \geq (I-\gamma P_*)d_k.
$$
\end{lemma}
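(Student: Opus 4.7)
The plan is to chain two simple facts: the greedy inequality gives a lower bound on $\b_k$ in terms of $\T_*$ applied to $v_k$, and the Bellman equation for $v_*$ turns that into the desired expression $(I-\gamma P_*)d_k$.

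First, recall that in the exact setting ($\epsilon_k = 0$) we have $w_k = v_k$, hence $d_k = v_* - v_k$, and $\b_k = \T v_k - v_k = \T_{k+1} v_k - v_k$ where $\pi_{k+1} = \mbox{greedy}(v_k)$. Since $\pi_{k+1}$ maximizes $\T^\pi v_k$ over $\pi$, we immediately get
$$
\b_k = \T_{k+1} v_k - v_k \;\geq\; \T_* v_k - v_k.
$$

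Next, I would rewrite the right-hand side using the Bellman equation $v_* = \T_* v_* = r^{\pi_*} + \gamma P_* v_*$, which gives $r^{\pi_*} = (I - \gamma P_*) v_*$. Then
$$
\T_* v_k - v_k = r^{\pi_*} + \gamma P_* v_k - v_k = (I-\gamma P_*) v_* - (I - \gamma P_*) v_k = (I-\gamma P_*) d_k.
$$
Combining with the previous inequality yields $\b_k \geq (I-\gamma P_*) d_k$, which is the claim.

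There is no real obstacle here; the only thing to keep in mind is that the statement lives in the exact-\lpi{} context of this appendix, so the identifications $w_k = v_k$ and $d_k = v_* - v_k$ are automatic. Both ingredients (the greedy inequality and the rewriting of $\T_*v_k - v_k$ via the Bellman fixed-point equation) are exactly the tools already used throughout Appendix~\ref{appcore}.
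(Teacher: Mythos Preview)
Your proof is correct and follows essentially the same approach as the paper: both use the greedy inequality $\T_{k+1}v_k \geq \T_* v_k$ and then reduce $\T_* v_k - v_k$ to $(I-\gamma P_*)d_k$ via the Bellman fixed-point identity for $v_*$. The paper presents the last step as a telescoping $\T_{k+1}v_k - \T_* v_k + \T_* v_k - \T_*v_* + v_* - v_k$, whereas you substitute $r^{\pi_*}=(I-\gamma P_*)v_*$ directly, but this is purely cosmetic; your remark that the lemma is stated in the exact setting (so $d_k=v_*-v_k$) is also appropriate.
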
}
\begin{proof}
Since $\pi_{k+1}$ is greedy with respect to $v_k$, $\T_{k+1}v_k \geq \T_* v_k$ and
\begin{eqnarray*}
\b_{k} & = & \T_{k+1}v_k - v_k  \\
& = & \T_{k+1}v_k - \T_* v_k + \T_* v_k - \T_*v_* + v_* - v_k  \\
& \geq & \gamma P_*(v_k - v_*) + v_* - v_k  \\
& = & (I-\gamma P_*)d_k. \qedhere
\end{eqnarray*}
~\vspace{-1.3cm}

\end{proof}

We thus have: 
$$
d_{k_0} \leq (I-\gamma P_*)^{-1}\b_{k_0}.
$$
Then Equation~\ref{l_ref} becomes
\begin{eqnarray*}
l_k & \leq & \left[ \gamma^{k-k_0}(P_*)^{k-k_0}(I-\gamma P_*)^{-1} -\left(\frac{\gamma^{k-k_0}}{1-\gamma}\right)E'_{kk_0}\right] \b_{k_0} \\
& = & \frac{\gamma^{k-k_0}}{1-\gamma}\left[ E_{kk_0}-E'_{kk_0} \right] \b_{k_0} 
\end{eqnarray*}
where:
$$
E_{kk_0}  :=  (1-\gamma)(P_*)^{k-k_0}(I-\gamma P_*)^{-1}
$$
is a stochastic matrix. 

\subsection{Proof of Equation~\ref{thexacteq1} (a bound with respect to the distance)}

From Lemma~\ref{dg}, we know that 
$$
-\b_{k_0} \leq (I-\gamma P_*)(-d_{k_0}).
$$
Then, Equation~\ref{l_ref} becomes
\begin{eqnarray*}
l_k &\leq & \left[\gamma^{k-k_0}(P_*)^{k-k_0}-\left(\frac{\gamma^{k-k_0}}{1-\gamma}\right)E'_{kk_0}(I-\gamma P_*) \right] d_{k_0} \\
& = &\frac{\gamma^{k-k_0}}{1-\gamma}\left[ F_{kk_0}-E'_{kk_0} \right] d_{k_0}
\end{eqnarray*}
where
$$
F_{kk_0} := (1-\gamma) P_*^{k-k_0}+\gamma E'_{kk_0} P_*
$$
is a stochastic matrix. 

\subsection{Proof of Equation~\ref{thexacteq3} (a bound with respect to the distance and the loss of the greedy policy)}

Define $\hat{v}_{k_0}:=v_{k_0}-K\e$ where $K$ is some constant.
The following statements are equivalent:
\begin{eqnarray*}
\hat{b}_{k_0} & \geq & 0 \\
\T_{k_0+1} \hat{v}_{k_0} & \geq & \hat{v}_{k_0} \\
r_{k_0+1}+ \gamma P_{k_0+1}(v_{k_0}-K\e) & \geq & v_{k_0}-K\e \\
(I-\gamma P_{k_0+1})K\e & \geq & -r_{k_0+1}+(I-\gamma P_{k_0+1})v_{k_0} \\
K\e & \geq & (I-\gamma P_{k_0+1})^{-1}(-r_{k_0+1})+v_{k_0} \\
K\e & \geq & v_{k_0}-v^{\pi_{k_0+1}}.
\end{eqnarray*}
The minimal $K$ for which $\hat{b}_{k_0} \geq 0$  is thus $K:=\max_s[v_{k_0}(s)-v^{\pi_{k_0+1}}(s)]$.
As $\hat{v}_{k_0}$ and $v_{k_0}$ only differ by a constant vector, they generate the same sequence of policies $\pi_{k_0+1}, \pi_{k_0+2}...$
Then, as $\hat{b}_{k_0} \geq 0$, Equation~\ref{l_ref} tells us that
\begin{eqnarray*}
v_*-v^{\pi_k} & \leq & \gamma^{k-k_0}(P_*)^{k-k_0}(v_*-\hat{v}_{k_0})\\
& = & \gamma^{k-k_0}(P_*)^{k-k_0}(v_*-v_{k_0}+K\e).
\end{eqnarray*}
Now notice that
\begin{eqnarray*}
K & = & \max_s[v_{k_0}(s)-v_*(s)+v_*(s)-v^{\pi_{k_0+1}}(s)] \\
& \leq & \max_s[v_{k_0}(s)-v_*(s)]+\max_s[v_*(s)-v^{\pi_{k_0+1}}(s)] \\
& = & -\min_s[v_*(s)-v_{k_0}(s)]+\norm{v_*(s)-v^{\pi_{k_0+1}}}{\infty}.
\end{eqnarray*}
Then, using the fact that $(P_*)^{k-k_0}\e=\e$, we get:
$$
v_*-v^{\pi_k} \leq \gamma^{k-k_0}\left[(P_*)^{k-k_0}\left((v_*-v_{k_0})-\min_s[v_*(s)-v_{k_0}(s)]e\right) + \norm{v_*(s)-v^{\pi_{k_0+1}}}{\infty} \e\right]. 
$$


\section{Proofs of Equation~\ref{ccbbp0} in Lemma~\ref{th} (componentwise bounds on the error propagation)}

\label{appalpi}
We here use the loss bound analysis of Appendix~\ref{appcore} to derive an asymptotic analysis of approximate \lpi{ }with respect to the approximation error. The results stated here constitute a proof of the first inequality of Lemma~\ref{th} page \pageref{th}.

\subsection{Proof of Equation~\ref{ccbbp0} }


Since the loss satisfies
\begin{equation}
\label{lds}
l_k=d_k+\s_k \leq \overline{d_k}+\overline{\s_k},
\end{equation}
an upper bound of the loss can be derived from the upper bound of the distance and the \shift.

Let us first concentrate on the bound $\overline{d_k}$ of the distance.
Lemmas \ref{lrecg} and \ref{lrecd} imply that:
\begin{eqnarray}
\overline{d_k} & = &\sum_{i=k_0}^{k-1} \gamma^{k-1-i}(P_*)^{k-1-i} y_{i} +  \O(\gamma^{k-k_0}), \nonumber \\
y_i &= & \frac{\lambda\gamma}{1-\lambda\gamma}A_{i+1}(-\underline{\b_{i}})-\gamma P_*\epsilon_i, \nonumber\\
-\underline{\b_{i}}& =& \sum_{j=k_0}^{i} \beta^{i-j}\left(A_{i} A_{i-1}...A_{j+1}\right) (-x_{j}) + \O(\beta^{i-k_0}), \label{niac} \\
\mbox{and~~}-x_j& =&(I-\gamma P_j )\epsilon_j. \nonumber
\end{eqnarray}
Writing
$$
X_{i,j,k}:=(P_*)^{k-1-i}A_{i+1}A_{i}...A_{j+1}
$$
and putting all things together, we see that:
\begin{eqnarray}
\overline{d_k} & = & \frac{\lambda\gamma}{1-\lambda\gamma} \sum_{i=k_0}^{k-1} \gamma^{k-1-i} \left( \sum_{j=k_0}^{i} \beta^{i-j}X_{i,j,k}(I-\gamma P_j) \epsilon_j + \O(\beta^{i-k_0}) \right) \nonumber\\
& & \hspace{3cm}- \sum_{i=k_0}^{k-1}\gamma^{k-i}(P_*)^{k-i}\epsilon_i + \O(\gamma^{k-k_0}) \nonumber \\
& = & \frac{\lambda\gamma}{1-\lambda\gamma} \sum_{i=k_0}^{k-1} \sum_{j=k_0}^{i} \gamma^{k-1-i}\beta^{i-j}X_{i,j,k}(I-\gamma P_j) \epsilon_j - \sum_{i=k_0}^{k-1}\gamma^{k-i}(P_*)^{k-i}\epsilon_i + \O(\gamma^{k-k_0})\nonumber \\
& = & \frac{\lambda\gamma}{1-\lambda\gamma} \sum_{j=k_0}^{k-1}\sum_{i=j}^{k-1} \gamma^{k-1-i}\beta^{i-j}X_{i,j,k}(I-\gamma P_j) \epsilon_j - \sum_{j=k_0}^{k-1}\gamma^{k-j}(P_*)^{k-j}\epsilon_j + \O(\gamma^{k-k_0})\nonumber \\
& = &  \sum_{j=k_0}^{k-1} \left[\left( \frac{\lambda\gamma}{1-\lambda\gamma} \sum_{i=j}^{k-1}\gamma^{k-1-i}\beta^{i-j}X_{i,j,k}(I-\gamma P_j) \right)- \gamma^{k-j}(P_*)^{k-j} \right] \epsilon_j  + \O(\gamma^{k-k_0}) ~~~~~\label{dbound}
\end{eqnarray}
where between the first two lines, we used the fact that:
\begin{equation}
\label{sumgb}
\frac{\lambda\gamma}{1-\lambda\gamma} \sum_{i=k_0}^{k-1}\gamma^{k-1-i}\beta^{i-k_0} = \frac{\lambda\gamma}{1-\lambda\gamma} \frac{\gamma^{k-k_0}-\beta^{k-k_0}}{\gamma-\beta}=\frac{\gamma^{k-k_0}-\beta^{k-k_0}}{1-\gamma}=\O(\gamma^{k-k_0})
\end{equation}
using the identities ${\lambda\gamma}=\frac{\gamma-\beta}{1-\beta}$ and $1-\gamma\lambda=\frac{1-\gamma}{1-\beta}$.

Let us now consider the bound $\overline{\s_k}$ of the \shift. From Lemma~\ref{lbg} and the bound on $b_k$ in Equation~\ref{niac}, we have
\begin{eqnarray}
\overline{\s_k} & = & \beta(I-\gamma P_k)^{-1}A_k (-\underline{\b_{k-1}} ) \nonumber \\
& = & \beta(I-\gamma P_k)^{-1}A_k \left[ \left( \sum_{j=k_0}^{k-1} \beta^{k-1-j}\left(A_{k-1}A_{k-2}...A_{j+1}\right) (-x_{j}) \right) + \O(\gamma^{k-k_0}) \right] \nonumber \\
& = & \sum_{j=k_0}^{k-1} \frac{\beta^{k-j}}{1-\gamma} Y_{j,k}(I-\gamma P_j) \epsilon_j  + \O(\gamma^{k-k_0}) \label{bbound}
\end{eqnarray}
with
$$
Y_{j,k}:= (1-\gamma)(I-\gamma P_k)^{-1}A_k A_{k-1}...A_{j+1}.
$$
Eventually, from Equations~\ref{lds},~\ref{dbound} and~\ref{bbound} we get:
{
\begin{align}
l_k &\leq  \sum_{j=k_0}^{k-1} \left[ \left( \frac{\lambda\gamma}{1-\lambda\gamma} \sum_{i=j}^{k-1}\gamma^{k-1-i}\beta^{i-j}X_{i,j,k}+ \frac{\beta^{k-j}}{1-\gamma} Y_{j,k}\right)(I-\gamma P_j) - \gamma^{k-j}(P_*)^{k-j} \right] \epsilon_j  \nonumber\\
& \hspace{10cm}+\O(\gamma^{k-k_0}) \label{lbound}.
\end{align}
}
Introduce the following matrices:
\begin{eqnarray*}
B_{jk} & := & \frac{1-\gamma}{\gamma^{k-j}}\left[ \frac{\lambda\gamma}{1-\lambda\gamma} \sum_{i=j}^{k-1}\gamma^{k-1-i}\beta^{i-j}X_{i,j,k}+ \frac{\beta^{k-j}}{1-\gamma} Y_{j,k}\right] \\
B'_{jk}& := & \gamma B_{jk} P_j + (1-\gamma)(P_*)^{k-j}.
\end{eqnarray*}
{\begin{lemma}
\label{fnorms}
$B_{jk}$ and $B'_{jk}$ are stochastic matrices.
\end{lemma}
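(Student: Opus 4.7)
The plan is to verify the two defining properties of a stochastic matrix: non-negativity of entries and rows summing to $1$ (i.e.\ $B_{jk}e = e$ and $B'_{jk}e = e$).

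First I would note that non-negativity is immediate. Each $A_l = (1-\lambda\gamma)P_l(I-\lambda\gamma P_l)^{-1} = (1-\lambda\gamma)\sum_{m\geq 0}(\lambda\gamma)^m P_l^{m+1}$ has non-negative entries and is in fact stochastic ($A_l e = e$), and similarly $(1-\gamma)(I-\gamma P_k)^{-1}$ is stochastic. Hence $X_{i,j,k}$ and $Y_{j,k}$ are products of stochastic matrices, and are themselves stochastic with non-negative entries. Since the scalar coefficients $\frac{1-\gamma}{\gamma^{k-j}}$, $\frac{\lambda\gamma}{1-\lambda\gamma}$, $\gamma^{k-1-i}$, $\beta^{i-j}$, $\frac{\beta^{k-j}}{1-\gamma}$ are all non-negative, $B_{jk}$ has non-negative entries; the same reasoning applies to $B'_{jk}$.

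Next I would check that $B_{jk}e = e$. Using $X_{i,j,k}e = e$ and $Y_{j,k}e = e$, the check reduces to the scalar identity
\[
\frac{1-\gamma}{\gamma^{k-j}}\left[\frac{\lambda\gamma}{1-\lambda\gamma}\sum_{i=j}^{k-1}\gamma^{k-1-i}\beta^{i-j} + \frac{\beta^{k-j}}{1-\gamma}\right] = 1.
\]
This is exactly the computation already carried out in the paper for $E'_{kk_0}$: after reindexing with $m=i-j$, the geometric sum equals $(\gamma^{k-j}-\beta^{k-j})/(\gamma-\beta)$, and the identity $\frac{\lambda\gamma}{(1-\lambda\gamma)(\gamma-\beta)} = \frac{1}{1-\gamma}$ (which follows from $\gamma-\beta = \lambda\gamma(1-\gamma)/(1-\lambda\gamma)$) collapses the bracket to $\gamma^{k-j}/(1-\gamma)$, cancelling the prefactor.

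Finally, $B'_{jk}e = e$ is immediate once $B_{jk}e = e$ is established: $B'_{jk}e = \gamma B_{jk}(P_j e) + (1-\gamma)(P_*)^{k-j}e = \gamma e + (1-\gamma)e = e$, since $P_j$ and $P_*$ are stochastic.

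There is no real obstacle here --- the argument is bookkeeping, and the only non-trivial ingredient is the geometric sum manipulation, which has already been done verbatim in the preceding proof that $E'_{kk_0}$ is stochastic. The proof is essentially a copy of that earlier calculation plus an elementary combination step for $B'_{jk}$.
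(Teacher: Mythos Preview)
Your proposal is correct and follows essentially the same approach as the paper: non-negativity is immediate from the stochasticity of the building blocks $X_{i,j,k}$ and $Y_{j,k}$, and the row-sum condition reduces to the same geometric-sum scalar identity (indeed identical to the earlier computation for $E'_{kk_0}$), after which $B'_{jk}e=e$ follows trivially from $B_{jk}e=e$. The only cosmetic difference is that the paper phrases the row-sum check as $\|B_{jk}\|=1$ and uses the equivalent identities $\lambda\gamma=\tfrac{\gamma-\beta}{1-\beta}$ and $(1-\beta)(1-\lambda\gamma)=1-\gamma$ in place of your $\gamma-\beta=\lambda\gamma(1-\gamma)/(1-\lambda\gamma)$.
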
}
\begin{proof}
It is clear from the definition of $X_{i,j,k}$ and $Y_{j,k}$ that normalizing $B_{jk}$ and $B'_{jk}$ gives stochastic matrices.
So we just need to check that their max norm is 1.
\begin{eqnarray*}
\|B_{jk}\| & = & \frac{(1-\gamma)}{\gamma^{k-j}}\left[ \frac{\lambda\gamma}{1-\lambda\gamma} \sum_{i=j}^{k-1}\gamma^{k-1-i}\beta^{i-j}+ \frac{\beta^{k-j}}{1-\gamma} \right] \\
& = & \frac{(1-\gamma)}{\gamma^{k-j}}\left[ \frac{\lambda\gamma}{1-\lambda\gamma} \frac{\gamma^{k-j}-\beta^{k-j}}{\gamma-\beta}+ \frac{\beta^{k-j}}{1-\gamma} \right] \\
& = & \frac{(1-\gamma)}{\gamma^{k-j}}\left[ \frac{\gamma^{k-j}-\beta^{k-j}}{(1-\lambda\gamma)(1-\beta)}+ \frac{\beta^{k-j}}{1-\gamma} \right] \\
& = & \frac{(1-\gamma)}{\gamma^{k-j}}\left[ \frac{\gamma^{k-j}-\beta^{k-j}}{1-\gamma}+ \frac{\beta^{k-j}}{1-\gamma} \right] \\
& = & 1.
\end{eqnarray*}
where we used the identities: ${\lambda\gamma}=\frac{\gamma-\beta}{1-\beta}$ and $(1-\beta)(1-\gamma\lambda)=1-\gamma$. Then it is also clear that $\|B'_{jk}\|=1$. 
\end{proof}

Thus, Equation~\ref{lbound} can be rewritten as follows:
\begin{eqnarray*}
l_k & \leq & \sum_{j=k_0}^{k-1} \left[\frac{\gamma^{k-j}}{1-\gamma}B_{jk}(I-\gamma P_j)-\gamma^{k-j} (P_*)^{k-j}\right]\epsilon_j  + \O(\gamma^{k-k_0}) \\ 
& = & \frac{1}{1-\gamma}\sum_{j=k_0}^{k-1} \gamma^{k-j}\left[B_{jk}-B'_{jk}\right]\epsilon_j  + \O(\gamma^{k-k_0}).
\end{eqnarray*}
Taking the supremum limit, we see that for all $k_0$,
\begin{equation}
\label{lbound2}
\ls{k} l_k \leq \frac{1}{1-\gamma}\ls{k}  \sum_{j=k_0}^{k-1} \gamma^{k-j}\left[ B_{jk} - B'_{jk}\right]\epsilon_j.
\end{equation}

%


\section{Proofs of Equations~\ref{ccbbp1}-\ref{ccbbp2} in Lemma~\ref{th} (componentwise bounds with respect to the Bellman residuals)}

\label{appbell}

In this section, we study the loss
$$
l_k := v_* - v^{\pi_k}
$$
with respect to the two following {\bf Bellman residuals}:
$$
\pb_k:=\T_{k} v_k-v_k
$$
$$
\mbox{and~~}\b_k:=\T_{k+1} v_k-v_k=\T v_k-v_k.
$$
The term $\pb_k$ says how much $v_k$ differs from the value of $\pi_k$ while $\b_k$ says how much $v_k$ differs from the value of the policies $\pi_{k+1}$ and $\pi_*$. The results stated here prove the last two inequalities of Lemma~\ref{th} page \pageref{th}.

\subsection{Proof of Equation~\ref{ccbbp1} (bounds with respect to the \pbell)}

Our analysis relies on the following lemma
{\begin{lemma}
Suppose that we have a policy $\pi$, a function $v$ that is an approximation of the value $v^\pi$ of $\pi$ in the sense that its residual $\pb:=T^{\pi}v-v$ is small. Taking the greedy policy $\pi'$ with respect to $v$  reduces the loss as follows:
$$
v_*-v^{\pi'} \leq \gamma P_*(v_*-v^{\pi})+\left(\gamma P_*(I-\gamma P)^{-1}-\gamma P'(I-\gamma P')^{-1}\right)\pb
$$
where $P$ and $P'$ are the stochastic matrices which correspond to $\pi$ and $\pi'$.
\end{lemma}}

\begin{proof}
We have:
\begin{eqnarray}
v_* - v^{\pi'} & = & \T_* v_* - \T^{\pi'} v^{\pi'} \nonumber \\
& = & \T_* v_* - \T_* v^\pi + \T_* v^\pi - \T_*v + \T_*v - \T^{\pi'}v + \T^{\pi'}v - \T^{\pi'} v^{\pi'} \nonumber \\
& \leq & \gamma P_* (v_* -v^\pi) + \gamma P_*(v^\pi-v)+\gamma P'(v-v^{\pi'}) \label{br1}
\end{eqnarray}
where we used the fact that $\T_*v \leq \T^{\pi'}v$.
One can see that:
\begin{eqnarray}
v^{\pi}-v & = & \T^{\pi} v^\pi-v  \nonumber \\
& = & \T^{\pi} v^\pi - \T^{\pi} v +  \T^{\pi} v -v  \nonumber \\
& = & \gamma P(v^{\pi}-v) + \pb \nonumber \\
& = & (I-\gamma P)^{-1}\pb\label{br2}
\end{eqnarray}
and that
\begin{eqnarray}
v-v^{\pi'} & = & v - \T^{\pi'} v^{\pi'} \nonumber \\
& = & v - \T^{\pi} v + \T^{\pi} v - \T^{\pi'} v + \T^{\pi'} v - \T^{\pi'}v^{\pi'} \nonumber\\
& \leq & -\pb + \gamma P'(v-v^{\pi'}) \nonumber\\
& \leq & (I-\gamma P')^{-1}(-\pb) \label{br3}.
\end{eqnarray}
where we used the fact that $\T^{\pi} v \leq  \T^{\pi'} v$.
We get the result by putting back Equations~\ref{br2} and~\ref{br3} into Equation~\ref{br1}. \end{proof}

To derive a bound for \lpi, we simply apply the above lemma to $\pi=\pi_k$, $v=v_k$ and $\pi'=\pi_{k+1}$. We thus get:
$$
l_{k+1} \leq \gamma P_* l_k + \left(  \gamma P_*(I-\gamma P_k)^{-1} -\gamma P_{k+1} (I-\gamma P_{k+1})^{-1} \right)\pb_k.
$$
Introduce the following stochastic matrices:
$$
C_k:=(1-\gamma)^2 (I-\gamma P_*)^{-1}\left( P_*(I-\gamma P_k)^{-1} \right),
$$
$$
C'_k:=(1-\gamma)^2 (I-\gamma P_*)^{-1}\left(  P_{k+1}(I-\gamma P_{k+1})^{-1} \right).
$$
This leads to the following componentwise bound:
$$
\ls{k} l_k \leq \frac{\gamma}{(1-\gamma)^2}\ls{k} \left[C_k - C'_k\right] \pb_k .
$$



\subsection{Proof of Equation~\ref{ccbbp2} (bounds with respect to the \bell)}

We rely on the following lemma (which is for instance proved by \citet{munosavi})
{\begin{lemma}
Suppose that we have a function $v$. Let $\pi$ be the greedy policy with respect to $v$. Then
$$
v_*-v^{\pi} \leq \gamma\left[P_*(I-\gamma P_*)^{-1}-P^\pi(I-\gamma P^\pi)^{-1}\right](\T^\pi v-v).
$$
\end{lemma}}
We provide a proof for the sake of completeness:
\begin{proof}
Using the fact that $\T_* v \leq \T^\pi v$, we see that 
\begin{eqnarray*}
v_* - v^{\pi} & = & \T_* v_* - \T^\pi v^{\pi} \\
& = & \T_* v_* -\T_* v + \T_* v - \T^\pi v + \T^\pi v - \T^\pi v^{\pi} \\
& \leq & \T_* v_* -\T_* v + \T^\pi v - \T^\pi v^{\pi}\\
& = & \gamma P_*(v_*-v) + \gamma P^\pi (v-v^\pi)\\
& = & \gamma P_*(v_*-v^\pi) + \gamma P_*(v^\pi-v) \gamma P^\pi (v-v^\pi)\\
& \leq & (I-\gamma P_*)^{-1}(\gamma P_*-\gamma P^\pi)(v^\pi-v).
\end{eqnarray*}
Using Equation~\ref{br2} we see that:
$$
v^\pi-v=(I-\gamma P^\pi)^{-1}(\T^\pi v-v).
$$
Thus
\begin{eqnarray*}
v_* - v^{\pi} & \leq & (I-\gamma P_*)^{-1}(\gamma P_*-\gamma P^\pi)(I-\gamma P^\pi)^{-1}(\T^\pi v-v) \\
& = & (I-\gamma P_*)^{-1}(\gamma P_*-I+I-\gamma P^\pi)(I-\gamma P^\pi)^{-1}(\T^\pi v-v) \\
& = & \left[(I-\gamma P_*)^{-1}-(I-\gamma P^\pi)^{-1}\right](\T^\pi v-v) \\
& = & \gamma\left[P_*(I-\gamma P_*)^{-1}-P^\pi(I-\gamma P^\pi)^{-1}\right](\T^\pi v-v). \qedhere 
\end{eqnarray*}
~\vspace{-1.3cm}

\end{proof}

To derive a bound for \lpi, we simply apply the above lemma to $v=v_{k-1}$ and $\pi=\pi_{k}$. We thus get:
\begin{equation}
\label{bb}
l_{k} \leq \frac{\gamma}{1-\gamma}\left[ D - D'_k \right]\b_{k-1}
\end{equation}
where
\begin{eqnarray*}
D & := & (1-\gamma)P_*(I-\gamma P_*)^{-1} \\
\mbox{and~~}D'_k & := & (1-\gamma)P_{k}(I-\gamma P_{k})^{-1}
\end{eqnarray*}
are stochastic matrices.


\section{Proofs of Corollary~\ref{valueorpolicyconvergence}}
\label{valuepolicyconv}

Ths section provides a proof of Corollary~\ref{valueorpolicyconvergence} page \pageref{valueorpolicyconvergence}, 
in which we refine the bounds when the value or the policy converges.

\subsection{Proof of the first inequality of Corollary~\ref{valueorpolicyconvergence} (when the value converges)}
\label{valueconverge}

Suppose that \lpi{ }converges to some value ${v}$. Let policy ${\pi}$ be the corresponding greedy policy, with stochastic matrix $ P$. Let $\b$ be the \bell{ }of ${v}$. It is also clear that the approximation error also converges to some ${\epsilon}$. Indeed from Algorithm~\ref{algo:lpi} and Equation~\ref{form1}, we get:
$$
{b}=\T  v -  v = (I-\lambda\gamma {P})(-{\epsilon}).
$$
From the bound with respect to the \bell{ }(Equation~\ref{bb} page \pageref{bb}), we can see that:
\begin{eqnarray*}
v_*-v^{{\pi}} & \leq & \left[ (I-\gamma P_*)^{-1} -(I-\gamma {P})^{-1}\right]{b} \\
& = & \left[ (I-\gamma {P})^{-1}-(I-\gamma P_*)^{-1} \right](I-\lambda\gamma {P}){\epsilon} \\
& = & \left[ (I-\gamma {P})^{-1}(I-\lambda\gamma {P})-(I-\gamma P_*)^{-1}(I-\lambda\gamma {P}) \right]{\epsilon} \\
& = & \left[ (I-\gamma {P})^{-1}(I-\gamma {P} + \gamma {P} - \lambda\gamma {P})-(I-\gamma P_*)^{-1}(I-\lambda\gamma {P}) \right]{\epsilon} \\
& = & \left[ \left(I+(1-\lambda)(I-\gamma{P})^{-1}\gamma{P}+\lambda(I-\gamma P_*)^{-1}\gamma{P}\right) -(I-\gamma P_*)^{-1} \right]{\epsilon} \\
& = & \left[ \left((1-\lambda)(I-\gamma{P})^{-1}\gamma{P}+\lambda(I-\gamma P_*)^{-1}\gamma{P}\right) -(I-\gamma P_*)^{-1}\gamma P_* \right]{\epsilon} \\
& = & \frac{\gamma}{1-\gamma}\left[ {B_v}-{D} \right]{\epsilon}.
\end{eqnarray*}
where
\begin{eqnarray*}
{B_v} &:=& (1-\gamma)\left((1-\lambda)(I-\gamma{P})^{-1}{P}+\lambda(I-\gamma P_*)^{-1}P \right)\\
{D} &:=& (1-\gamma)P_*(I-\gamma P_*)^{-1}.
\end{eqnarray*}
{\begin{lemma}
$B_v$ and $D$ are stochastic matrices.
\end{lemma}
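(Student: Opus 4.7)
The plan is to verify for each matrix the two defining properties of a stochastic matrix: non-negativity of every entry, and each row summing to one. As noted in the preamble of the appendix, the key building block is that for any stochastic matrix $Q$, the geometric average $(1-\alpha)(I-\alpha Q)^{-1}=(1-\alpha)\sum_{i=0}^\infty (\alpha Q)^i$ is itself stochastic whenever $0\leq \alpha<1$. I will use this fact repeatedly, together with the elementary observation that a product or a convex combination of stochastic matrices is stochastic.

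For $D=(1-\gamma)P_*(I-\gamma P_*)^{-1}$, non-negativity follows immediately from the Neumann expansion $(I-\gamma P_*)^{-1}=\sum_{i=0}^\infty (\gamma P_*)^i$, all of whose terms have non-negative entries. Because $P_*$ commutes with $(I-\gamma P_*)^{-1}$, one can rewrite $D = P_*\cdot (1-\gamma)(I-\gamma P_*)^{-1}$, which is the product of the stochastic matrix $P_*$ and the stochastic matrix $(1-\gamma)(I-\gamma P_*)^{-1}$, and is therefore stochastic. Equivalently, one checks directly that $De = (1-\gamma)P_*(I-\gamma P_*)^{-1} e = (1-\gamma)P_*\cdot\frac{1}{1-\gamma}e = e$.

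For $B_v = (1-\gamma)\bigl[(1-\lambda)(I-\gamma P)^{-1}P + \lambda(I-\gamma P_*)^{-1}P\bigr]$, I would rewrite it as the convex combination
\[
B_v = (1-\lambda)\,\underbrace{\bigl[(1-\gamma)(I-\gamma P)^{-1}\bigr] P}_{=:M_1} \;+\; \lambda\,\underbrace{\bigl[(1-\gamma)(I-\gamma P_*)^{-1}\bigr] P}_{=:M_2}.
\]
Each of $M_1$ and $M_2$ is the product of one of the geometric-average stochastic matrices $(1-\gamma)(I-\gamma P)^{-1}$ or $(1-\gamma)(I-\gamma P_*)^{-1}$ with the stochastic matrix $P$, and is therefore stochastic. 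Hence $B_v$ is a convex combination of stochastic matrices with weights $1-\lambda$ and $\lambda$, which gives both non-negativity and $B_v e = (1-\lambda)e + \lambda e = e$.

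There is no real obstacle: the only thing worth watching is that $P_*$ and $(I-\gamma P_*)^{-1}$ indeed commute (which they do, since the second is a polynomial in $P_*$ in the Neumann-series sense), so the rewriting used for $D$ is legitimate; and that we do not need $P$ and $P_*$ to commute for $B_v$, since we only ever apply the geometric averages on the left of $P$.
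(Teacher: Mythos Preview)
Your proof is correct and follows essentially the same approach as the paper: both rely on the non-negativity coming from the Neumann expansion (stated in the appendix preamble) and then verify that the rows sum to one. The paper does this tersely by computing $\|D\|=1$ and $\|B_v\|=1$ directly, whereas you package the same computation as ``product/convex combination of stochastic matrices is stochastic,'' which is a slightly cleaner presentation of the identical argument.
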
}
\begin{proof}
It is clear that $\|D\|=1$. Also:
\begin{eqnarray*}
\|B_v\| &= & (1-\gamma)\left(1 + \frac{(1-\lambda)\gamma}{1-\gamma}+\frac{\lambda\gamma}{1-\gamma} \right) \\
& = & (1-\gamma)\left(1 + \frac{\gamma}{1-\gamma} \right) \\
& = & 1. 
\end{eqnarray*}
Then, the first bound of Corollary~\ref{valueorpolicyconvergence} follows from the application of Lemmas~\ref{compnorm} and~\ref{fromcomptospan}.
\end{proof}


\subsection{Proof of the second inequality of Corollary~\ref{valueorpolicyconvergence} (when the policy converges)}
\label{policyconverge}

Suppose that \lpi{ }converges to some policy ${\pi}$.
Write ${P}$ the corresponding stochastic matrix and 
$$
{A^\pi}:=(1-\lambda\gamma){P}(I-\lambda\gamma {P})^{-1}.
$$
Then for some big enough $k_0$, we have:
$$
l_k \leq \sum_{j=k_0}^{k-1} \left[\frac{\gamma^{k-j}}{1-\gamma}A^\pi_{jk}{A^\pi}(I-\gamma {P})-\gamma^{k-j} (P_*)^{k-j}\right]\epsilon_j  + \O(\gamma^{k-k_0})
$$
where
$$
A^\pi_{jk} := \frac{1-\gamma}{\gamma^{k-j}}\left[ \frac{\lambda\gamma}{1-\lambda\gamma} \sum_{i=j}^{k-1}\gamma^{k-1-i}\beta^{i-j}(P_*)^{k-1-i}({A^\pi})^{i-j} + \beta^{k-j}(I-\gamma {P})^{-1}(A^\pi)^{k-1-j}\right]
$$
is a stochastic matrix (for the same reasons why $B_{jk}$ is a stochastic matrix in Lemma~\ref{fnorms}).
Noticing that
\begin{eqnarray*}
{A^\pi}(I-\gamma {P}) & = & (1-\lambda\gamma){P}(I-\lambda\gamma {P})^{-1}(I-\gamma {P}) \\
& = & (1-\lambda\gamma){P}(I-\lambda\gamma {P})^{-1}(I-\lambda\gamma{P} + \lambda\gamma{P} - \gamma {P}) \\
& = & (1-\lambda\gamma){P}(I-(1-\lambda)(I-\lambda\gamma {P})^{-1}\gamma {P}) \\
& = &  (1-\lambda\gamma){P} - \gamma(1-\lambda)A^\pi{P}
\end{eqnarray*} 
we can deduce that
\begin{eqnarray}
l_k & \leq & \sum_{j=k_0}^{k-1} \left[\frac{\gamma^{k-j}}{1-\gamma}A^\pi_{jk}\left[ (1-\lambda\gamma){P} - \gamma(1-\lambda)A^\pi{P}\right]-\gamma^{k-j} (P_*)^{k-j}\right]\epsilon_j  + \O(\gamma^{k-k_0}) \nonumber \\
& = & \sum_{j=k_0}^{k-1} \gamma^{k-j}\left[\frac{1-\lambda\gamma}{1-\gamma}A^\pi_{jk} {P} - \left[ \frac{\gamma(1-\lambda)}{1-\gamma}A^\pi_{jk}A^\pi{P}+ (P_*)^{k-j}\right]\right]\epsilon_j  + \O(\gamma^{k-k_0}) \nonumber \\
& = &\frac{1-\lambda\gamma}{1-\gamma}\sum_{j=k_0}^{k-1} \gamma^{k-j}\left[B^\pi_{jk}-B'^\pi_{jk}\right]\epsilon_j + \O(\gamma^{k-k_0}) \label{lbpc}
\end{eqnarray}
where
\begin{eqnarray*}
B^\pi_{jk}&:=& A^\pi_{jk}{P}\\
B'^\pi_{jk}&:=&\frac{1-\gamma}{1-\lambda\gamma}\left[\frac{\gamma(1-\lambda)}{1-\gamma}A^\pi_{jk}A^\pi {P}+(P_*)^{k-j}\right].
\end{eqnarray*}
{\begin{lemma}
$B^\pi_{jk}$ and $B'^\pi_{jk}$ are stochastic matrices.
\end{lemma}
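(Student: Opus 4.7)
The plan is to establish both claims by the two basic closure properties of row-stochastic matrices: the product of stochastic matrices is stochastic, and a convex combination of stochastic matrices is stochastic. Non-negativity of entries will be automatic throughout because every constituent ($P$, $P_*$, $A^\pi$, and the Neumann series $(I-\gamma P)^{-1}$ suitably normalized) has non-negative entries; the only real work is checking that the relevant row sums come out to $1$.

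First I would dispose of $B^\pi_{jk}=A^\pi_{jk}P$. Since $P$ is the transition matrix of policy $\pi$, it is stochastic. For $A^\pi_{jk}$, I would repeat verbatim the row-sum calculation of Lemma \ref{fnorms}: each of $P_*$, $A^\pi=(1-\lambda\gamma)P(I-\lambda\gamma P)^{-1}$, and the renormalized $(1-\gamma)(I-\gamma P)^{-1}$ is stochastic, so the norm of $A^\pi_{jk}$ reduces to
$$
\frac{1-\gamma}{\gamma^{k-j}}\left[\frac{\lambda\gamma}{1-\lambda\gamma}\cdot\frac{\gamma^{k-j}-\beta^{k-j}}{\gamma-\beta}+\frac{\beta^{k-j}}{1-\gamma}\right]=1,
$$
using the identities $\lambda\gamma=\frac{\gamma-\beta}{1-\beta}$ and $(1-\beta)(1-\lambda\gamma)=1-\gamma$. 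Hence $A^\pi_{jk}$ is stochastic and so is $B^\pi_{jk}$ as the product of two stochastic matrices.

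For $B'^\pi_{jk}$, the strategy is to recognize the definition as an explicit convex combination in disguise. Expanding the outer factor, the coefficient of $A^\pi_{jk}A^\pi P$ is
$$
\frac{1-\gamma}{1-\lambda\gamma}\cdot\frac{\gamma(1-\lambda)}{1-\gamma}=\frac{(1-\lambda)\gamma}{1-\lambda\gamma}=\beta,
$$
while the coefficient of $(P_*)^{k-j}$ is $\frac{1-\gamma}{1-\lambda\gamma}=1-\beta$ (again by $(1-\beta)(1-\lambda\gamma)=1-\gamma$). Thus
$$
B'^\pi_{jk}=\beta\bigl(A^\pi_{jk}A^\pi P\bigr)+(1-\beta)(P_*)^{k-j},
$$
and since $0\le\beta\le\gamma<1$, this is a bona fide convex combination of $A^\pi_{jk}A^\pi P$ (a product of three stochastic matrices, hence stochastic) and $(P_*)^{k-j}$ (a power of a stochastic matrix, hence stochastic). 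Therefore $B'^\pi_{jk}$ is stochastic.

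There is no real obstacle here: the only mildly non-obvious step is reading off $\beta$ and $1-\beta$ from the prefactors, and this is exactly the same algebraic identity already exploited in Lemma \ref{fnorms} and in the proof of stochasticity of $E'_{kk_0}$, so it can be invoked without further comment.
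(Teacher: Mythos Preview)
Your proof is correct and follows essentially the same route as the paper: both verify $\|B^\pi_{jk}\|=1$ via the stochasticity of $A^\pi_{jk}$ (same computation as Lemma \ref{fnorms}) and both check that the two coefficients in $B'^\pi_{jk}$ sum to $1$. Your presentation is marginally cleaner in that you explicitly identify the weights as $\beta$ and $1-\beta$, whereas the paper just computes $\frac{1-\gamma}{1-\lambda\gamma}\bigl(1+\frac{\gamma(1-\lambda)}{1-\gamma}\bigr)=1$ directly; the arithmetic is identical.
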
}
\begin{proof}
It is clear that $\|B^\pi_{jk}\|=1$. Also:
\begin{eqnarray*}
\| B'^\pi_{jk} \| &= & \frac{1-\gamma}{1-\lambda\gamma}\left(1+\frac{\gamma(1-\lambda)}{1-\gamma}\right) \\
& = & \frac{1-\gamma}{1-\lambda\gamma}\frac{1-\gamma+\gamma-\lambda\gamma}{1-\gamma} \\
& = & 1. 
\end{eqnarray*}
Then, the second bound of Corollary~\ref{valueorpolicyconvergence} follows from the application of Lemmas~\ref{compnorm} and~\ref{fromcomptospan}.

\end{proof}


\section{Proofs of Lemmas~\ref{compnorm0} and \ref{compnorm} (from componentwise bounds to span seminorm bounds)}

\label{fourlemmas}

This section contains the proofs of Lemmas~\ref{compnorm0} and~\ref{compnorm} that enable us to derive span seminorm performance bounds from the componentwise analysis developped in the previous sections. It is easy to see that Lemma~\ref{compnorm0} is a special case of Lemma~\ref{compnorm}, so we only prove the latter.

Consider the notations of Lemma~\ref{compnorm}. Write $a_{kj}:=\arg\min_a\norm{y_j-a\e}{p,\mu_{kj}}$. As $X_{jk}$ and $X'_{jk}$ are stochastic matrices, $X_k \e=X'_k \e=\e$ and we can write that: 
$$
\ls{k}|x_k| \leq K \ls{k}\sum_{j=k_0}^{k-1} \xi_{k-j}(X_{kj}-X'_{kj})(y_j-a_{kj}\e).
$$
By taking the absolute value we get
$$
\ls{k}|x_k| \leq K \ls{k}\sum_{j=k_0}^{k-1} \xi_{k-j}(X_{kj}+X'_{kj})|y_j-a_{kj}\e|.
$$
It can then be seen that
{
\begin{eqnarray*} 
&&\ls{k}\left(\norm{x_k}{p,\mu}\right)^p  \\
&=&  K^p \ls{k}\transp{\mu}\left(|x_k|\right)^p \\
& \leq & K^p \ls{k}\transp{\mu}\left[\sum_{j=k_0}^{k-1} \xi_{k-j}(X_{kj}+X'_{kj})\left(|y_j-a_{kj}\e|\right)\right]^p \\
& = & K^p \ls{k}\transp{\mu}\left[\frac{\left(\sum_{j=k_0}^{k-1} \xi_{k-j}\frac{1}{2}(X_{kj}+X'_{kj})2\left(|y_j-a_{kj}\e|\right)\right)}{\sum_{j=k_0}^{k-1}\xi_{k-j}}\right]^p\left(\sum_{j=k_0}^{k-1}\xi_{k-j}\right)^p.
\end{eqnarray*}
By using Jensen's inequality (with the convex function  $x \mapsto x^p$), we get:
\begin{eqnarray*}
&&\ls{k}\left(\norm{x_k}{p,\mu}\right)^p  \\
& \leq & K^p  \ls{k} \transp{\mu} \frac{\sum_{j=k_0}^{k-1} \xi_{k-j}\frac{1}{2}(X_{kj}+X'_{kj})\left(2|y_j-a_{kj}\e|\right)^p}{\sum_{j=k_0}^{k-1}\xi_{k-j}}\left(\sum_{j'=k_0}^{k-1}\xi_{k-j'}\right)^p \\
& = &  K^p \ls{k} \sum_{j=k_0}^{k-1} \xi_{k-j}\transp{\mu_{kj}} \left[2|y_j-a_{kj}\e|\right]^p \left(\sum_{j'=k_0}^{k-1}\xi_{k-j'}\right)^{p-1}\\
& \leq & K^p  \ls{k} \sum_{j=k_0}^{k-1} \xi_{k-j}\left[2\norm{y_j-a_{kj}\e}{p,\mu_{kj}}\right]^p {K'}^{p-1}
\end{eqnarray*}
\begin{eqnarray*}
& = & K^p {K'}^{p-1} \ls{k} \sum_{j=k_0}^{k-1} \xi_{k-j}\left[\spn{y_j}{p,\mu_{kj}}\right]^p \\
& \leq & K^p {K'}^{p-1} \ls{k} \sum_{j=k_0}^{k-1} \xi_{k-j}\left[\sup_{k' \geq j' \geq k_0}\spn{y_{j'}}{p,\mu_{{k'j'}}}\right]^p \\
& = & K^p {K'}^{p-1} K' \left[\sup_{k' \geq j' \geq k_0}\spn{y_{j'}}{p,\mu_{{k'j'}}}\right]^p \\
& = & K^p {K'}^{p}\left[\sup_{k' \geq j' \geq k_0}\spn{y_{j'}}{p,\mu_{{k'j'}}}\right]^p
\end{eqnarray*}
}
where we used $\sum_{j=k_0}^{k-1}\xi_{k-j} =K'$.

As this is true for all $k_0$, and as $k_0 \mapsto \sup_{k' \geq j' \geq k_0}\spn{y_{j'}}{p,\mu_{{k'j'}}}$ is non-increasing, the result follows.


\section{Proofs of Lemma~\ref{lemmaundiscounted} and Proposition~\ref{boundundiscounted} (analysis of the undiscounted case)}

\label{proofundiscounted}

This last section contains the Proofs of Lemma
\ref{lemmaundiscounted} and Proposition~\ref{boundundiscounted} that
provide the analysis of an undiscounted problem like Tetris.

\subsection{Proof of Lemma~\ref{lemmaundiscounted} (componentwise bound)}

First of all, the relation expressed in Equation~\ref{errorfortetris} between the loss and the stochastic matrices, which we restate here for clarity:
$$
\forall k_0, \mbox{~~}\ls{k}v_*-v^{\pi_k} \leq  \ls{k}\sum_{j=k_0}^{k-1} \delta_{k-j} \left[ {G}_{jk} - {G'}_{jk}\right] \epsilon_j,
$$
is obtained by simply rewriting the first inequality of Lemma~\ref{th} with $\gamma=1$ and $\beta=1$ (note in particular that the terms $\delta_{k-j}$ collapse through the definition of ${G}_{jk}$ and ${G'}_{jk}$).

To complete the proof of the lemma, we need to show that the matrices $G_{jk}$ and $G'_{jk}$ are
substochastic matrices.  By construction, these matrices are sum of
non-negative matrices so we only need to show that their max norm is smaller than or equal to 1.

For all $n$, write ${\cal M}_n$ the set of matrices that is defined as follows:
\begin{itemize} 
\item for all sets of $n$ policies $(\pi_1,\pi_2,\cdots,\pi_n)$,~~ $P_{\pi_1}P_{\pi_2}\cdots P_{\pi_n} \in {\cal M}_n$;
\item for all $\eta \in (0,1)$, and $(P,Q) \in {\cal M}_n \times {\cal M}_n$,~~ $\eta P + (1-\eta) Q \in {\cal M}_n$.
\end{itemize}
The motivation for introducing this set is that we have the following properties:
For all $n$, $P \in {\cal M}_n$ is a substochastic matrix such that $\norm{P}{\infty} \le \alpha^{\left\lfloor  \frac{n}{n_0}\right\rfloor}$.
We  use the somewhat abusive notation $\Pi_n$ for denoting any element of ${\cal M}_n$. For instance, for some matrix $P$, 
writing $P=a \Pi_i + b \Pi_j \Pi_k=a \Pi_i + b \Pi_{j+k}$ should be read as follows: there exists $P_1 \in {\cal M}_i$, $P_2 \in {\cal M}_j$,
$P_3 \in {\cal M}_k$ and $P_4 \in {\cal M}_{k+j}$ such that $P=a P_1 + bP_2 P_3=a P_1+bP_4$.

Recall the definition of the substochastic matrix 
$$
A_k=(1-\lambda)(I-\lambda P_k)^{-1}P_k=(1-\lambda)\sum_{i=0}^{\infty} \lambda^i \Pi_{i+1}.
$$
Let $i \le j <k$. 
It can be seen that 
\begin{align}
(P_*)^{k-1-i}A_{i+1}A_{i}...A_{j+1} &= \Pi_{k-1-i} \underbrace{\left((1-\lambda) \sum_{i=0}^{\infty} \lambda^i \Pi_{i+1}\right) \cdots \left( (1-\lambda)\sum_{i=0}^{\infty} \lambda^i \Pi_{i+1}\right)}_{i-j+1\mbox{~terms}} \nonumber \\
&= \Pi_{k-j}  \underbrace{\left( (1-\lambda)\sum_{i=0}^{\infty} \lambda^i \Pi_{i}\right) \cdots \left( (1-\lambda)\sum_{i=0}^{\infty} \lambda^i \Pi_{i}\right)}_{i-j+1\mbox{~terms}}.\label{arrgl0}
\end{align}
Now, observe that
\begin{align}
\norm{\sum_{i=0}^\infty \lambda^i \Pi_{i}}{\infty}  &\le \sum_{i=0}^\infty \lambda^i \norm{\Pi_{i}}{\infty}\nonumber \\
&\le \sum_{i=0}^\infty \lambda^i \alpha^{\left \lfloor \frac{i}{n_0}\right\rfloor}\nonumber\\
&= \sum_{j=0}^{\infty} \sum_{i=0}^{n_0-1} \lambda^{jn_0+i}\alpha^j \nonumber \\
&= \sum_{j=0}^{\infty} (\lambda^{n_0} \alpha)^j \sum_{i=0}^{n_0-1} \lambda^{i}\nonumber \\
&= \frac{1-\lambda^{n_0}}{(1-\lambda^{n_0}\alpha)(1-\lambda)}.\label{arrgl1}
\end{align}
As a consequence, writing $\eta:=\frac{1-\lambda^{n_0}}{1-\lambda^{n_0}\alpha}$, we see from Equation~\ref{arrgl0} that
$$
\norm{(P_*)^{k-1-i}A_{i+1}A_{i}...A_{j+1}}\infty \le \alpha^{\left \lfloor \frac{k-j}{n_0}\right\rfloor} \eta^{i-j+1}.
$$
Similarly, by using Equation~\ref{arrgl1} and noticing that $\frac{1-\lambda^{n_0}}{1-\lambda} \stackrel{\lambda\rightarrow 1}{\longrightarrow}n_0$, it can be seen that 
$$
\norm{(I-P_k)^{-1}A_k A_{k-1} \cdots A_{j+1}}\infty \le \frac{n_0}{1-\alpha} \alpha^{\left \lfloor \frac{k-j}{n_0}\right\rfloor} \eta^{k-j}.
$$
We are ready to bound the norm of the matrix $G_{jk}$:
\begin{align}
\norm{G_{jk}}\infty &\le \frac{ \alpha^{\left \lfloor \frac{k-j}{n_0}\right\rfloor}}{\delta_{k-j}}\left[\frac{\lambda}{1-\lambda}\sum_{i=j}^{k-1} \eta^{i-j+1} + \frac{n_0\eta^{k-j}}{1-\alpha} \right]\nonumber\\
& =  \frac{ \alpha^{\left \lfloor \frac{k-j}{n_0}\right\rfloor}}{\delta_{k-j}}\left[ \left(\frac{\lambda}{1-\lambda}\right)\eta \left(\frac{1-\eta^{k-j}}{1-\eta}\right) + \frac{n_0\eta^{k-j}}{1-\alpha} \right]\nonumber\\
& =  \frac{ \alpha^{\left \lfloor \frac{k-j}{n_0}\right\rfloor}}{\delta_{k-j}}\left[ \left(\frac{\lambda}{1-\lambda}\right)\left( \frac{1-\lambda^{n_0}}{1-\lambda^{n_0}\alpha}\right)\left(\frac{1-\eta^{k-j}}{1-\eta}\right) + \frac{n_0\eta^{k-j}}{1-\alpha} \right]\nonumber\\
& = \frac{ \alpha^{\left \lfloor \frac{k-j}{n_0}\right\rfloor}}{\delta_{k-j}}\left[\left(\frac{1-\lambda^{n_0}}{1-\lambda} \right)  \left(\frac{\lambda}{1-\lambda^{n_0}\alpha} \right)\left(\frac{1-\eta^{k-j}}{1-\eta}\right) + \frac{n_0\eta^{k-j}}{1-\alpha} \right] \nonumber \\
&=1.\nonumber
\end{align}
where we used the definition of $\eta$. Therefore $G_{jk}$ is a substochastic matrix. It trivially follows that $G'_{jk}$ is also a substochastic matrix.

\subsection{Proof of Proposition~\ref{boundundiscounted} (\lp{ }norm bound)}

In order to prove the \lp{ }norm bound of  Proposition~\ref{boundundiscounted}, we rely on the following variation of 
Lemma~\ref{compnorm}.
\begin{lemma}
If $x_k$ and $y_k$ are sequences of vectors and $X_{jk}$, $X'_{jk}$ sequences of substochastic matrices satisfying
$$
\forall k_0,\mbox{~~}\ls{k}|x_k| \leq K \ls{k}\sum_{j=k_0}^{k-1} \xi_{k-j} (X_{kj}-X'_{kj})y_j,
$$
where $(\xi_i)_{i \ge 1}$ is a sequence of non-negative weights satisfying:
$$
\sum_{i=1}^{\infty} \xi_i = K' < \infty,
$$
then, for all distribution $\mu$, 
$$
{\mu_{kj}}:=\frac{1}{2} \transp{ (X_{kj}+X'_{kj})}\mu
$$
is a non-negative vector and $\tilde\mu_{kj}:=\frac{\mu_{kj}}{\norm{\mu_{kj}}{1}}$ is a distribution, and
$$
\ls{k}\norm{x_k}{p,\mu} \leq K K' \lim_{k_0 \rightarrow \infty} \left[\sup_{k \geq j \geq k_0}\norm{y_j}{p,\tilde \mu_{kj}}\right].
$$
\end{lemma}
\begin{proof}
The proof follows the lines of that of Lemma~\ref{compnorm} in Appendix~\ref{fourlemmas}. The differences are as follows:
\begin{itemize}
\item since $X_{jk}$ and $X'_{jk}$ are substochastic matrices (and not stochastic matrices), we have in general $X_{jk}e \neq  X'_{jk}e$ and must take $a_{kj}=0$, which in turn gives an \lp{ }norm bound instead of the \lp{ }span seminorm bound;
\item to express the bound in terms of the distributions $\tilde \mu_{kj}$, we use the fact that $\mu_{kj} \le \tilde{\mu}_{kj}$ which derives from $\norm{\mu_{kj}}{1}\leq 1$ since $X_{jk}$ and $X'_{jk}$ are substochastic matrices.
\end{itemize}

~\vspace{-1.1cm}

\end{proof}
Proposition~\ref{boundundiscounted} is obtained by applying this Lemma and an anologue of Lemma~\ref{fromcomptospan} for \lp{ }norm on the componentwise bound (Lemma~\ref{lemmaundiscounted} --- see previous subsection). 
The only remaining thing that needs to be  checked is that $\sum_{i=1}^{\infty}{\delta_i}$ has the right value. This is what we do now.

Similary to Equation~\ref{arrgl1}, one can see that:
\begin{align}
\sum_{i=0}^{\infty} \alpha^{\left \lfloor \frac{i}{n_0}\right\rfloor}\eta^{i} = \frac{1-\eta^{n_0}}{(1-\eta^{n_0}\alpha)(1-\eta)}\nonumber
\end{align}
and
\begin{align}
\sum_{i=0}^{\infty} \alpha^{\left \lfloor \frac{i}{n_0}\right\rfloor}(1-\eta^{i}) &=\frac{n_0}{1-\alpha}-\frac{1-\eta^{n_0}}{(1-\eta^{n_0}\alpha)(1-\eta)}. \nonumber
\end{align}
As a consequence:
\begin{align}
\sum_{i=0}^{\infty} \delta_i &= \sum_{i=0}^{\infty}  \alpha^{\left \lfloor \frac{i}{n_0}\right\rfloor} \left(\frac{1-\lambda^{n_0}}{1-\lambda} \right)  \left(\frac{\lambda}{1-\lambda^{n_0}\alpha} \right)\left(\frac{1-\eta^{i}}{1-\eta}\right) + \frac{n_0\eta^{i}}{1-\alpha} \nonumber\\
&= \left(\frac{1-\lambda^{n_0}}{1-\lambda} \right)  \left(\frac{\lambda}{1-\lambda^{n_0}\alpha} \right)\left(\frac{\sum_{i=0}^{\infty}\alpha^{\left \lfloor \frac{i}{n_0}\right\rfloor} (1-\eta^{i})}{1-\eta}\right) + \frac{n_0\sum_{i=0}^{\infty}\alpha^{\left \lfloor \frac{i}{n_0}\right\rfloor} \eta^{i}}{1-\alpha} \nonumber \\
& =  \left(\frac{1-\lambda^{n_0}}{1-\lambda} \right)  \left(\frac{\lambda}{1-\lambda^{n_0}\alpha} \right) \left( \frac{1}{1-\eta} \right)\left( \frac{n_0}{1-\alpha}-\frac{1-\eta^{n_0}}{(1-\eta^{n_0}\alpha)(1-\eta)}\right)  \nonumber \\
& \hspace{6cm}  + \left( \frac{n_0}{1-\alpha}\right)\left(\frac{1-\eta^{n_0}}{(1-\eta^{n_0}\alpha)(1-\eta)}\right) \nonumber \\
&=  \lambda {f}(\lambda)\frac{1}{1-\eta}({f(1)} - {f}(\eta)) + {f(1)} {f}(\eta) 
\end{align}
with for all $x$, $f(x):=\frac{(1-x^{n_0})}{(1-x)(1-x^{n_0}\alpha)}$ and $f(1)=\frac{n_0}{1-\alpha}$ by continuity.
Now, we can conclude by noticing that
$$
\sum_{i=1}^{\infty} \delta_i = \sum_{i=0}^{\infty} \delta_i - \delta_0
$$
and $\delta_0=\frac{n_0}{1-\alpha}=f(1)$.

\end{appendix}


\end{document}